\documentclass{article}

\usepackage[numbers,sort&compress]{natbib}
\setcitestyle{square,comma}

\usepackage[main, final]{neurips_2026}

\usepackage[utf8]{inputenc} 
\usepackage[T1]{fontenc}    
\usepackage{hyperref}       
\usepackage{url}            
\usepackage{booktabs}       
\usepackage{amsfonts}       
\usepackage{nicefrac}       
\usepackage{microtype}      
\usepackage{xcolor}         
\usepackage{wrapfig}
\usepackage{graphicx}
\usepackage{subcaption}
\usepackage{multirow}
\usepackage{amsmath}
\usepackage{amssymb}
\usepackage{mathtools}
\usepackage{amsthm}
\usepackage{enumitem}
\usepackage{thm-restate}
\usepackage[most]{tcolorbox}
\usepackage{cleveref}
\usepackage{algorithm}
\usepackage{algorithmic}
\usepackage{tabularx}
\usepackage{caption}
\usepackage{titletoc}

\hypersetup{
  colorlinks=true,
  citecolor=blue,
  linkcolor=blue,
  urlcolor=blue
}

\usepackage{pifont}
\newcommand{\cmark}{\ding{51}}
\newcommand{\xmark}{\ding{55}}

\ifodd 1

\newcommand{\com}[1]{\textbf{\color{red}(comment: #1)}} 
\newcommand{\res}[1]{\textbf{\color{magenta}(RESPONSE: #1)}} 
\else

\newcommand{\com}[1]{}
\newcommand{\res}[1]{}
\fi

\theoremstyle{plain}
\newtheorem{theorem}{Theorem}[section]
\newtheorem{lemma}[theorem]{Lemma}
\newtheorem{proposition}[theorem]{Proposition}
\newtheorem{corollary}[theorem]{Corollary}

\theoremstyle{definition}
\newtheorem{definition}[theorem]{Definition}
\newtheorem{assumption}[theorem]{Assumption}
\newtheorem{remark}[theorem]{Remark}

\tcbset{
  graythm/.style={
    enhanced,
    breakable,
    colback=gray!12,
    colframe=gray!12,
    boxrule=0pt,
    arc=6pt,
    left=6pt,
    right=6pt,
    top=2pt,
    bottom=2pt,
    before skip=4pt,
    after skip=4pt
  }
}

\newcolumntype{Y}[1]{%
  >{\hsize=#1\hsize
    \linewidth=\hsize
    \raggedright\arraybackslash}X%
}

\title{Self-Consuming Generative Models with Co-Evolving Human Preferences}

\author{%
 Xiukun Wei$^{1*}$, \quad Tian Xie$^{1*}$, \quad Ding Zhu$^1$, \quad Xueru Zhang$^1$ \\
  $^1$The Ohio State University\\
\texttt{\{wei.1418, xie.1379, zhu.3723, zhang.12807\}@osu.edu}\\
}

\begin{document}

\renewcommand{\thefootnote}{\fnsymbol{footnote}}
\footnotetext[1]{Equal contribution.}
\renewcommand{\thefootnote}{\arabic{footnote}}

\maketitle

\begin{abstract}
    Generative models are increasingly trained in self-consuming iterative loops, where users curate preferred samples from model-generated candidates and the curated samples are used to train future generations of the model. Prior work has largely assumed fixed user preferences, but in practice exposure to model outputs gradually reshapes what users perceive as desirable, creating a feedback loop in which model distributions and user preferences co-evolve. We take a first step toward understanding the long-term behavior of such coupled dynamics. We show that when training relies entirely on user-curated synthetic data, iterative curation amplifies initial biases and drives the system toward one of multiple \textit{singleton} equilibria in which the instance holding an initial advantage eventually dominates.  In contrast, injecting reference data into training at a sufficiently large rate fundamentally changes the dynamics and yields a \textit{unique globally attracting} equilibrium. Building on this insight, we study how reference-data injection can be used to control long-term outcomes, and propose an efficient algorithm that jointly selects a reference distribution and its mixing weight to steer the coupled system toward equilibria that preserve desired attributes while minimizing data collection costs.
\end{abstract}

\section{Introduction}

Modern generative models are increasingly trained on data that contains synthetic outputs produced by earlier versions of themselves, creating a \textit{self-consuming} training loop. Recent studies have shown that such loops can lead to model collapse, training instability, and bias amplification~\citep{AICollapse,Satbility,CollapseInevitable,StatisticalAnalysisLM,GoMad,TheoreticalPerspective,wang2026observations}, and have proposed various mitigation strategies~\citep{cai2026stabilizing,self_correction,CollapseInevitable}. In many practical settings, the synthetic data reused for retraining is further filtered through human curation~\citep{Verification}: model-generated samples are presented to users, who express preferences by selecting, rating, editing, reposting, or otherwise amplifying certain outputs over others, and the retained samples are subsequently incorporated into future training datasets. Prior work has shown that repeated retraining on such human-curated synthetic data can implicitly steer models toward distributions aligned with user preferences~\citep{Curation,AdversarialCuration}.  

However, existing works largely treat the preference signal as  \textbf{fixed} or \textbf{exogenous}: users are assumed to consistently favor the same attributes, and human curation simply filters samples accordingly. In practice, this assumption is often violated because users and the model interact in a closed loop: the model itself reshapes the environment from which future preferences are formed, as its outputs shape what users are exposed to, which in turn shapes what they find desirable. Such model-influenced preference shifts are supported by recent empirical evidence: LLM-assisted writing systematically alters the tone, vocabulary, and meaning of human-produced text~\citep{Distort}; exposure to biased AI writing assistants shifts users' subsequent opinions on the underlying topic~\citep{jakesch2023co}; and AI-assisted creation more broadly homogenizes the diversity of content users produce~\citep{doshi2024generative}. In each case, synthetic outputs are not merely consumed by future models; they reshape the preference landscape that guides future curation, creating a closed feedback loop in which the model and its users co-evolve.

\begin{figure}[t]
\begin{center}
\centerline{\includegraphics[width=\textwidth]{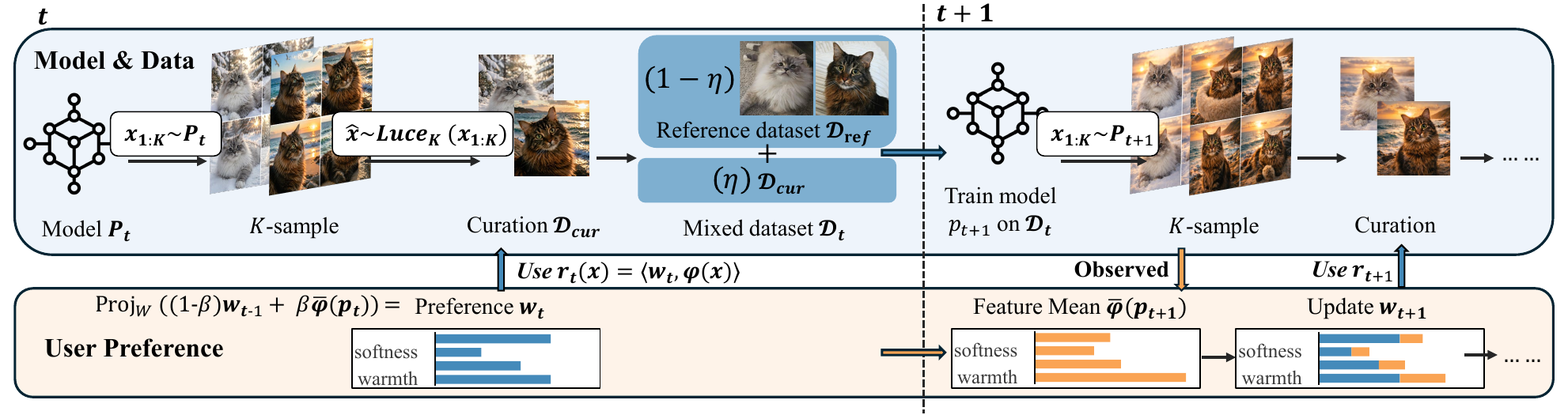}}
\caption{Overview of the coupled model-user dynamics.
\textbf{Top row (model update):} the current model $p_t$ generates $K$ candidates, of which one is retained via preference-guided $K$-way Luce choice; the curated set $\mathcal{D}_{\mathrm{cur}}$ is mixed with a reference dataset $\mathcal{D}_{\mathrm{ref}}$ at rate $\eta$ to form $\mathcal{D}_t$, on which $p_{t+1}$ is trained.
\textbf{Bottom row (preference update):} the user assesses how each candidate aligns with the current preference $w_t$ via the reward  $r_t(x)=\langle w_t,\varphi(x)\rangle$, where $\varphi$ is the feature map, and the preference vector then drifts toward the mean feature direction $\bar{\varphi}(p_{t+1})$ of the updated model.}
\vskip -0.3in
\label{Fig:framework}
\end{center}
\end{figure}

This paper sets out to study the long-term behavior of coupled user-model dynamics. We formulate a dynamical system over the joint state $(p_t,w_t)$ in which the model distribution $p_t$ and the underlying user preference vector $w_t$ co-evolve over discrete time steps, as illustrated in Fig.~\ref{Fig:framework}. At each round $t$, the model generates $K$ candidates and the user curates one according to their current preferences, with samples whose features align more closely with $w_t$ selected with higher probability. The model is then retrained on the curated synthetic samples, optionally mixed with a fixed reference distribution, while the user preferences simultaneously drift toward the feature direction of the updated model. 

Our analysis shows that the system exhibits qualitatively different long-run behavior depending on whether reference data is included in training. 
In the purely synthetic regime, where retraining relies entirely on user-curated synthetic samples, repeated curation amplifies local reward advantages: any item that is even slightly preferred under the current reward eventually dominates, driving the system toward one of multiple singleton equilibria.
In contrast, when a sufficient fraction of reference data is mixed in at each round, the coupled update becomes a strict contraction on the joint state space under an explicit condition on the mixing weight, implying a unique globally attracting equilibrium that every trajectory converges to geometrically, regardless of initialization.

The uniqueness of the equilibrium under reference-data mixing opens a natural design question: how should the reference distribution and mixing weight be jointly chosen so that a specified level of diversity is preserved at equilibrium at minimum data-collection cost?
We formalize this as a constrained optimization problem, which is bilevel and intractable since the constraint involves the equilibrium of a nonlinear coupled map with no closed form. To tackle this, we identify a worst-case relaxation that yields a linear sufficient condition on the reference distribution, decoupling the bilevel structure and reducing the problem to a family of linear programs indexed by the mixing weight. Experiments verify the theoretical results and demonstrate the effectiveness of the proposed algorithm.

\begin{table*}[htbp]
\centering
\footnotesize
\setlength{\tabcolsep}{3pt}
\renewcommand{\arraystretch}{1.15}
\caption{Comparison with prior work on self-consuming generative models with curation.}
\label{tab:comparison}
\begin{tabularx}{\textwidth}{
  @{} l Y{0.75} c Y{1.50} Y{0.85} Y{0.90} @{}
}
\toprule
\textbf{Work} 
& \textbf{Preference} 
& \textbf{\# Models} 
& \textbf{Core conclusion} 
& \textbf{Risk} 
& \textbf{Role of real data} \\
\midrule
\citet{Curation}
& Fixed
& Single
& Curation implicitly optimizes a fixed preference signal.
& Bias amplification
& Stabilizer \\
\citet{AdversarialCuration}
& Fixed
& Single
& Adversarial curation can systematically misalign the model.
& Malicious feedback
& Improves robustness \\
\citet{HeterogeneousHumanCuration}
& Heterogeneous fixed
& Single
& Reference mixing yields convergence and stability under noisy heterogeneous curation.
& Reward perturbation instability
& Stabilizer and regularizer \\
\citet{WhenandHow}
& Fixed
& Multi
& Benign curation may backfire under model interaction.
& Cross-model coupling
& Convergence aid\\
\midrule
\textbf{This work}
& \textbf{Co-evolving}
& \textbf{Single}
& \textbf{Preferences co-evolve with the model, altering equilibrium structure.}
& \textbf{Preference drift; diversity collapse}
& \textbf{Stabilizer; Rreference data design} \\
\bottomrule
\end{tabularx}
\end{table*}
Tbl.~\ref{tab:comparison} summarizes how our paper extends the existing line of self-consuming generative models with human curation. In Appx. \ref{sec:relatedwork}, we discuss more related work, and our contributions can be summarized as follows:
\begin{itemize}[leftmargin=*,topsep=0cm,itemsep=0cm]
    \item We propose a dynamic model that captures the co-evolution of user preferences and the model under self-consuming iterative training (Sec.~\ref{sec:problem}).  
    \item We theoretically analyze the convergence of the coupled dynamics and characterize the properties of its equilibria, both with and without reference data mixed into the retraining loop (Sec.~\ref{sec:long-term}).     
    \item We formulate equilibrium steering as a constrained optimization problem and show that a worst-case sufficient condition reduces it to a family of linear programs, providing a computationally lightweight algorithm for jointly choosing the reference distribution and mixing weight (Sec.~\ref{sec:mitigation}).
    \item We empirically verify the theoretical results and the effectiveness of the proposed algorithm on both image generation (CFM and DDPM on CIFAR-10) and text generation (GPT-2 on AG~News). (Sec.~\ref{sec:experiments})
    
\end{itemize}
\section{Problem Formulation}\label{sec:problem}

Consider a platform that iteratively trains generative models using data curated by human users. Let $\mathcal{X} \subset \mathbb{R}^m$ be the instance space and let
$p_t \in \mathcal{P}(\mathcal{X})$ denote the output distribution of the generative model at round $t$ of the retraining loop. 
We present the model and results in the discrete setting with finite $\mathcal{X} = \{x_1,\dots,x_n\}$, motivated by the fact that in many platforms, outputs fall into a finite number of distinguishable categories (e.g., topic or object classes); the extension to continuous spaces is deferred to Appx.~\ref{sec:continuous}.

\textbf{User preference \& human curated data.} At each round $t$, the platform presents $K \ge 2$ synthetic samples $\{x_1, \dots, x_K\}$ drawn from the current model $p_t$ to users, who then select a preferred sample $\hat{x}$ that is used to train the next generation of models. Specifically, let $\varphi:\mathcal{X} \to \mathbb{R}^d$ be a fixed feature map satisfying $\|\varphi(x)\| = 1$, which extracts features of an instance $x \in \mathcal{X}$ relevant to user preferences as a unit-norm representation\footnote{If $x$ is text, $\varphi(x)$ could encode tone, topic, or style; if $x$ is an image, it may encode color, sharpness, or semantic content. See Appx.~\ref{sec:example} for complete worked examples.}. User preferences over these features at round $t$ are characterized by a vector $w_t \in \mathcal{W} := \{w \in \mathbb{R}^d : \|w\| \le 1\}$. Given this preference vector, the reward of $x$ at round $t$ is defined as
$
    r_{w_t}(x) := r(x;w_t) := \langle w_t,\varphi(x)\rangle
\in[-1,1]$, which measures the alignment between the instance features and the user’s preference direction. Based on these rewards, users select $\hat{x}$ from the candidate set $\{x_1, \dots, x_K\}$ according to the $K$-way Luce rule:
\begin{equation}\label{eq:luce}
    \Pr(\hat x = x_k \mid x_1,\dots,x_K)
    =
    \frac{e^{\tau \cdot r_{w_t}(x_k)}}{\sum_{j=1}^K e^{\tau \cdot r_{w_t}(x_j)}}
\end{equation}
where $\tau>0$ controls the sharpness of user choice. When $K=2$, this reduces to Bradley-Terry model \citep{Luce,BradleyTerry}. We denote this sampling process by $\hat{x} \sim \mathrm{Luce}_K(x_1, \dots, x_K)$.

\textbf{Recursive training loop.} Given a fixed reference (real data) distribution $p_{\mathrm{ref}}\in\mathcal{P}(\mathcal{X})$ and user curated synthetic data, the platform updates its model at round $t+1$ using either only user-curated synthetic data ($\eta=1$) or a mixture of synthetic and reference data $\eta\in(0,1)$: 
\begin{equation}
    p_{t+1}
    \in
    \arg\max_{p\in\mathcal{P}(\mathcal{X})}
    (1-\eta)\,\mathbb{E}_{x\sim p_{\mathrm{ref}}}[\log p(x)]
    +
    \eta
    \mathbb{E}_{\substack{x_1,\dots,x_K\sim p_t \\ \hat x\sim \mathrm{Luce}_K(x_1,\dots,x_K)}}
    [\log p(\hat x)]
\label{eq:retrain}
\end{equation}
where mixing parameter $\eta\in[0,1]$ controls the fraction of synthetic data used in training. Following \citet{Curation}, we show that the retraining step~\eqref{eq:retrain} admits a multiplicative reweighting of $p_t$ (proof in Appx.~\ref{subsec:prof_explicit-update}):
\begin{equation} 
\label{eq:data-update}
        p_{t+1}(x)
        =
        (1-\eta)p_{\mathrm{ref}}(x)
        +
        \eta p_t(x) H^K_{p_t,w_t}(x),
  \end{equation}
  where $H^K_{p_t,w_t}(x)$ is referred to as the \textit{$K$-way curation operator}, which quantifies the amplification of $x$ induced by $K$-way preference-based selection from $p_t$.
  \begin{definition}[Curation operator]
  \label{def:curation-operator}
The $K$-way curation operator $H^K_{p_t}:\mathcal{X}\to \mathbb{R}_+$ is defined as
\[
H^K_{p_t,w_t}(x):=
            \mathbb{E}_{x_1,\dots,x_{K-1}\sim p_t}\left[\frac{K e^{\tau \cdot r_{w_t}(x)}}{e^{\tau \cdot r_{w_t}(x)}+\sum_{i=1}^{K-1}e^{\tau \cdot  r_{w_t}(x_i)}}\right]
\]
\end{definition}
Moreover, for any fixed $x\in \mathcal{X}$, this reweighting converges, as $K \to \infty$, to an exponential tilt of $p_t$, i.e.,
    $
    H^K_{p_t,w_t}(x)\xrightarrow{K\to\infty}
    \frac{e^{\tau \cdot r_{w_t}(x)}}{\mathbb{E}_{z\sim p_t}[e^{\tau \cdot r_{w_t}(z)}]}.
    $

\textbf{Preference drift \& coupled dynamics.} When users are exposed to model-generated content on the platform, their preferences may evolve over time. To capture this effect, we model the preference vector $w_t$ as being gradually steered toward the feature distribution induced by the model. Specifically, define $\bar{\varphi}(p) := \mathbb{E}_{x \sim p}[\varphi(x)]$ as the mean feature vector under distribution $p$. After retraining at round $t+1$, user preferences are updated according to
\begin{equation}
    w_{t+1}
    =
    \operatorname{Proj}_{\mathcal{W}}\left((1-\beta)w_t + \beta \bar\varphi(p_{t+1})\right)
\label{eq:pref-update}
\end{equation}
where $\beta\in(0,1]$ is the preference adaptation rate and $\operatorname{Proj}_{\mathcal{W}}$ denotes Euclidean projection onto $\mathcal{W}$.

Eq.~\eqref{eq:data-update} and \eqref{eq:pref-update} together define a closed-loop dynamical system over the state $(p_t, w_t)$. We define an \textbf{equilibrium} $(p_K^\star, w_K^\star)$ as a fixed point of the joint dynamics, i.e., a state that remains unchanged under data generation and preference adaptation. Formally, $(p_K^\star, w_K^\star)$ satisfies 
\begin{equation}
    p^\star_K(x)
    =
    (1-\eta)p_{\mathrm{ref}}(x)
    +
    \eta p^\star_K(x) H^K_{p^\star_K,w^\star_K}(x)
    \qquad
    w^\star_K
    =
    \operatorname{Proj}_{\mathcal{W}}\left((1-\beta)w^\star_K + \beta \bar\varphi(p^\star_K)\right)
\label{eq:equilibrium}
\end{equation}

\textbf{Objective.} We study the evolution of the coupled dynamics~\eqref{eq:data-update}-\eqref{eq:pref-update} and characterize the long-run behavior of the model $p_t$ and user preferences $w_t$ (Sec.~\ref{sec:long-term}). Specifically, we ask: under what conditions does the system converge and admit an equilibrium; how do the choice set size $K$, the synthetic-data weight $\eta$, and the geometry of the feature map $\varphi$ govern this behavior; and when do the dynamics collapse toward degenerate outcomes versus stabilize under reinjection of reference data.

Beyond stability analysis, we investigate the reference distribution $p_{\mathrm{ref}}$ as a steering mechanism. We ask how $p_{\mathrm{ref}}$ and the synthetic-data weight $\eta$ can be jointly chosen so that the resulting equilibrium preserves desired attributes while keeping the cost of intervention small (Sec.~\ref{sec:mitigation}).
\section{Long-Term Behavior of User--Model Dynamics}
\label{sec:long-term}
Next, we study the evolution of $(p_t, w_t)$. We begin with the purely synthetic curation setting ($\eta = 1$), and then extend to the setting in which each round mixes synthetic and reference samples ($\eta \in (0,1)$).

\subsection{Retraining with purely synthetic user-curated data}
\label{subsec:pure}
When \(\eta=1\), update \eqref{eq:data-update} reduces to $
    p_{t+1}(x)=p_t(x)H^K_{p_t,w_t}(x)
$. In this case, the dynamics are driven entirely by repeated preference-guided reweighting of the current distribution via the curation operator $H^K_{p_t,w_t}$. Before analyzing the full dynamics, we first study $H^K_{p_t,w_t}$ and identify its key properties.

\begin{restatable}[Basic properties of curation operator]
{proposition}{PureBasic}
\label{prop:pure-basic}
  For any $p \in \mathcal{P}(\mathcal{X})$, $w \in \mathcal{W}$, and $K \ge 2$,  the curation operator $H^K_{p,w}$ satisfies the following properties:
    \begin{enumerate}[leftmargin=*,topsep=-0.1cm,itemsep=-0.05cm]
        \item \textbf{Strict monotonicity.} For any $x_i,x_j\in\mathcal X$, $r_w(x_i)>r_w(x_j) \Longleftrightarrow H^K_{p,w}(x_i)>H^K_{p,w}(x_j)$

        \item \textbf{Normalization.} $\sum_{i=1}^np(x_i)H^K_{p,w}(x_i)=1$.

        \item \textbf{Odds contraction.} If $r_w(x_i)-r_w(x_j)\ge r_\Delta>0$, then
        $$
            \frac{H^K_{p,w}(x_j)}{H^K_{p,w}(x_i)}
            \le
            \lambda_K(r_\Delta)
            :=
            1-\frac{(1-e^{-\tau \cdot r_\Delta})(K-1)}{e^{2\tau}+K-1},
            $$
        with $\lambda_K(r_\Delta)\in(0,1)$ and $ \lambda_K(r_\Delta)\xrightarrow{K\to\infty}e^{-\tau \cdot r_\Delta}$
    \end{enumerate}
\end{restatable}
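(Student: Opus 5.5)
Throughout I would write $a(x):=e^{\tau r_w(x)}\in[e^{-\tau},e^{\tau}]$ and $S:=\sum_{i=1}^{K-1}a(x_i)$ for $x_1,\dots,x_{K-1}\stackrel{iid}{\sim}p$. Then $H^K_{p,w}(x)=\mathbb{E}_S[g(a(x),S)]$ with $g(a,S)=Ka/(a+S)$. The three claims come from elementary properties of $g$ and from exchangeability of the $K$ draws.

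\textbf{Monotonicity.} For fixed $S>0$, $\partial_a g = KS/(a+S)^2>0$, so $g$ is strictly increasing in $a$. Taking expectations over $S$ preserves strict inequality, since $S\in[(K-1)e^{-\tau},(K-1)e^{\tau}]$ is bounded away from $0$. Hence $r_w(x_i)>r_w(x_j)$ implies $H(x_i)>H(x_j)$. For the converse, use the contrapositive. If $r_w(x_i)\le r_w(x_j)$, then either equality holds, giving $H(x_i)=H(x_j)$, or the strict forward direction gives $H(x_i)<H(x_j)$.

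\textbf{Normalization.} Write $\sum_x p(x)H(x)=K\,\mathbb{E}_{x_0,x_1,\dots,x_{K-1}\sim p}\big[a(x_0)/\sum_{k=0}^{K-1}a(x_k)\big]$. By exchangeability, each of the $K$ indices gives the same expectation. Summing the $K$ Luce probabilities gives $1$, so each expectation equals $1/K$ and the total is $1$. This is just the statement that the Luce probabilities of Eq.~\eqref{eq:luce} sum to one.

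\textbf{Odds contraction.} Set $a_i=a(x_i)$ and $a_j=a(x_j)$, so that $a_j\le a_ie^{-\tau r_\Delta}$. The plan is to bound the difference first and then convert it to a ratio.
\begin{itemize}
\item For each fixed $S$,
\[
g(a_i,S)-g(a_j,S)=\frac{KS(a_i-a_j)}{(a_i+S)(a_j+S)}\ge \frac{S(a_i-a_j)}{a_i+S}\, g(a_i,S)\cdot\frac{1}{a_j+S}\cdot(a_j+S)\,\frac{1}{a_i}\cdots
\]
The cleanest form is the pointwise identity
\[
\frac{g(a_j,S)}{g(a_i,S)}=\frac{a_j(a_i+S)}{a_i(a_j+S)}=1-\frac{S(a_i-a_j)}{a_i(a_j+S)}.
\]
\item Since $a_i-a_j\ge a_i(1-e^{-\tau r_\Delta})$, this ratio is at most $1-(1-e^{-\tau r_\Delta})\,S/(a_j+S)$.
\item The map $S\mapsto S/(a_j+S)$ is increasing in $S$ and decreasing in $a_j$. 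With $S\ge (K-1)e^{-\tau}$ and $a_j\le e^{\tau}$, it is at least $(K-1)e^{-\tau}/(e^{\tau}+(K-1)e^{-\tau})=(K-1)/(e^{2\tau}+K-1)$.
\item So pointwise $g(a_j,S)\le \lambda_K\, g(a_i,S)$. Taking expectations gives $H(x_j)\le\lambda_K H(x_i)$.
\end{itemize}
This is exactly the claimed constant, which confirms the choice of worst-case bounds on $S$ and $a_j$.

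Finally, $\lambda_K\in(0,1)$ because $0<(1-e^{-\tau r_\Delta})(K-1)/(e^{2\tau}+K-1)<1$. As $K\to\infty$ the fraction $(K-1)/(e^{2\tau}+K-1)\to1$, so $\lambda_K\to e^{-\tau r_\Delta}$.

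The main obstacle is finding the pointwise ratio identity rather than trying to bound a ratio of expectations directly. Once the bound holds pointwise in $S$, taking expectations is immediate.
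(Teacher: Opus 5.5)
Your proof is correct. For monotonicity and normalization it is the paper's argument: strict monotonicity of $a\mapsto Ka/(a+S)$ integrated over $S$, and the law of the Luce-selected sample (equivalently, exchangeability of the $K$ draws).

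For odds contraction you use the same ingredients as the paper: the pointwise identity $\frac{g(a_j,S)}{g(a_i,S)}=1-\frac{S(a_i-a_j)}{a_i(a_j+S)}$ and the worst-case bounds $S\ge (K-1)e^{-\tau}$, $a_j\le e^{\tau}$. You combine them differently, and your way is the sound one.

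The paper writes $\frac{H^K_{p,w}(x_j)}{H^K_{p,w}(x_i)}=1-\mathbb{E}\bigl[\frac{S(a_i-a_j)}{a_i(a_j+S)}\bigr]$, which equates a ratio of expectations with one minus an expectation of the pointwise deficit. That is not an identity. The true ratio is $1$ minus the $g(a_i,S)$-weighted average of the deficit. Since $g(a_i,S)$ decreases in $S$ while the deficit increases, the covariance inequality makes the true ratio at least $1-\mathbb{E}[\cdot]$, strictly so whenever $S$ is non-degenerate. The paper's final bound survives only because the lower bound it then applies to $S/(e^{\tau}+S)$ is actually pointwise in $S$.

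Your version proves $g(a_j,S)\le\lambda_K(r_\Delta)\,g(a_i,S)$ for every realization of $S$ and then takes expectations, using $H^K_{p,w}(x_i)>0$ to pass to the ratio. This makes the pointwise nature of the argument explicit and needs no identity between ratios.

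One cleanup: delete the abandoned chain of inequalities in your first odds-contraction bullet that trails off in $\cdots$. It is unfinished, and the identity you actually use makes it unnecessary.
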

See proof in Appx.~\ref{subsec:prof_pure-basic}. \textit{Strict monotonicity} implies that $H^K_{p,w}$ preserves the reward ordering induced by user preferences, ensuring higher-reward instances always receive higher selection weight. Moreover, \textit{odds contraction} implies lower-reward items are disproportionately suppressed relative to higher-reward ones, leading to an increasingly sharp separation in their selection probabilities. As a result, curation does not merely favor better items at each round; instead, it compounds their advantage over time by repeatedly amplifying higher-reward instances and pushing weaker ones further behind.

\textbf{Equilibrium and convergence analysis.}
Building on Prop.~\ref{prop:pure-basic}, we next analyze the dynamics. Since each retraining step shifts mass toward higher-reward
instances at the expense of lower-reward ones, so any distribution that places positive
mass on instances with differing rewards is unstable.

\begin{tcolorbox}[graythm]
\begin{restatable}[Equilibrium characterization under purely synthetic setting]{theorem}{PureStructure}
\label{thm:pure-structure}
When $\eta=1$, a distribution $p\in\mathcal{P}(\mathcal{X})$ is a fixed point of 
 the
retraining step  under preference $w$ if and only if all instances in its support share the
same reward:
$$
p(x)>0,\; p(x')>0 \;\Longrightarrow\; r_w(x)=r_w(x').
$$
Consequently, for every instance $x_i\in \mathcal{X}$, the pair
$\left(
p^\star=\delta_{x_i},w^\star=\varphi(x_i)\right)$
is an equilibrium: the model has collapsed to a point mass on $x_i$ while user
preferences have locked onto its features.
\end{restatable}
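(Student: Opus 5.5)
The fixed-point condition for $\eta=1$ reads $p(x)=p(x)H^K_{p,w}(x)$ for all $x$. This is equivalent to $H^K_{p,w}(x)=1$ for every $x$ in the support of $p$, and we treat the two directions separately using Prop.~\ref{prop:pure-basic}.

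\emph{Sufficiency.} Suppose all support points share a common reward $r_0$. For any $x$ in the support, each $x_i\sim p$ appearing in Definition~\ref{def:curation-operator} lies in the support almost surely, so $r_w(x_i)=r_0=r_w(x)$. The integrand is therefore $K e^{\tau r_0}/(K e^{\tau r_0})=1$ deterministically, which gives $H^K_{p,w}(x)=1$ and hence $p(x)H^K_{p,w}(x)=p(x)$. Off the support both sides vanish.

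\emph{Necessity.} Suppose $p$ is a fixed point, so $H^K_{p,w}\equiv 1$ on the support $S$. By strict monotonicity, if $x,x'\in S$ had $r_w(x)>r_w(x')$, then $H^K_{p,w}(x)>H^K_{p,w}(x')$, contradicting $H^K_{p,w}(x)=H^K_{p,w}(x')=1$. Hence the reward is constant on $S$. Normalization serves only as a consistency check here and is not needed.

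\emph{The singleton pairs.} For $p^\star=\delta_{x_i}$, the support is a singleton, so the first part gives a fixed point of the data update under any $w$, in particular under $w^\star=\varphi(x_i)$. For the preference update, $\bar\varphi(\delta_{x_i})=\varphi(x_i)$, so $(1-\beta)\varphi(x_i)+\beta\varphi(x_i)=\varphi(x_i)$. This vector has unit norm and thus lies in $\mathcal W$, so the projection leaves it unchanged and \eqref{eq:equilibrium} holds.

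The only delicate point is the necessity direction. It relies on strict monotonicity of $H^K$ and on restricting attention to support points: points outside the support impose no constraint, since there both sides of the fixed-point equation are zero. Everything else is direct substitution.
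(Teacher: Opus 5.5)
Your proof is correct and follows the paper's argument essentially step for step. Both reduce the fixed-point condition to $H^K_{p,w}=1$ on the support, use strict monotonicity for necessity, evaluate the integrand directly for sufficiency (your remark that the $K-1$ auxiliary samples lie in the support almost surely makes explicit a point the paper leaves implicit), and check that $\bar\varphi(\delta_{x_i})=\varphi(x_i)$ is a unit vector fixed by the projection.
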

\end{tcolorbox}
See proof in Appx.~\ref{subsec:prof_pure-structure}, where $\delta_{x_i}$ denotes the
Dirac delta centered at ${x_i}$. Thm.~\ref{thm:pure-structure} shows that training exclusively on user-curated synthetic data admits
$n$ distinct singleton equilibria, one centered at each instance $x_i$, in which the model concentrates all mass on a single instance and user preferences align entirely with its features. Since every instance supports a singleton equilibrium, the natural question is: \textit{which of these equilibria are locally attracting?} The next theorem shows that any instance with a sufficient initial advantage will eventually dominate,  a phenomenon we term "winner lock-in".

\begin{tcolorbox}[graythm]
\begin{restatable}[Local convergence to singleton equilibria]{theorem}{PureLockin}
\label{thm:pure-lockin}
    Fix $x_i\in\mathcal{X}$ and suppose $\varphi(x_j)\neq \varphi(x_i)$ for all $j\neq i$. Define the reward gap
    $
    \gamma_i
    =
    \min_{j\neq i}
    \langle \varphi(x_i),\,\varphi(x_i)-\varphi(x_j)\rangle
    >0
    $ and let $\lambda_K(\cdot)$ be the odds-contraction factor from Prop.~\ref{prop:pure-basic}. If the initial state satisfies
$$
    1-p_0(x_i)\le \varepsilon_i,
    \qquad
    \|w_0-\varphi(x_i)\|\le \frac{\gamma_i}{4}
    $$
for some $\varepsilon_i\in\left(0,\min\left\{\frac12,\ 1-\lambda_K(\frac{\gamma_i}{2}),\ \frac{\gamma_i}{8}\right\}\right)
    $, then for all $t\ge 0$,
    $$
    1-p_t(x_i)\le \left(\frac{\lambda_K(\frac{\gamma_i}{2})}{1-\varepsilon_i}\right)^t\bigl(1-p_0(x_i)\bigr).
    $$
    Consequently, $p_t(x_i)\xrightarrow{t\to\infty} 1$ and $w_t \xrightarrow{t\to\infty} \varphi(x_i).
    $
\end{restatable}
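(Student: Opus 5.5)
The plan is an induction on $t$ that carries two invariants together:
$$
\text{(A)}\ \ 1-p_t(x_i)\le\varepsilon_i, \qquad \text{(B)}\ \ \|w_t-\varphi(x_i)\|\le \tfrac{\gamma_i}{4}.
$$
Both hold at $t=0$ by assumption. The main obstacle is the coupling between them. Invariant (B) is needed to keep the reward gap bounded below, so that the model keeps concentrating. Conversely, (A) is needed so that the drift target $\bar\varphi(p_{t+1})$ stays close to $\varphi(x_i)$ and (B) survives the preference update. The conditions $\varepsilon_i<1-\lambda_K(\gamma_i/2)$ and $\varepsilon_i<\gamma_i/8$ are exactly what close this loop.

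\emph{Step 1 (reward gap from (B)).} For $j\ne i$, use Cauchy--Schwarz and $\|\varphi(x_i)-\varphi(x_j)\|\le 2$ to get
$$
r_{w_t}(x_i)-r_{w_t}(x_j)=\langle w_t,\varphi(x_i)-\varphi(x_j)\rangle \ge \gamma_i-\|w_t-\varphi(x_i)\|\cdot 2\ge \tfrac{\gamma_i}{2}.
$$
Prop.~\ref{prop:pure-basic}(3) then gives $H^K_{p_t,w_t}(x_j)\le \lambda\, H^K_{p_t,w_t}(x_i)$ for every $j\neq i$, where $\lambda:=\lambda_K(\gamma_i/2)$.

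\emph{Step 2 (mass contraction).} With $\eta=1$ the update is $p_{t+1}(x)=p_t(x)H^K_{p_t,w_t}(x)$, so
$$
1-p_{t+1}(x_i)=\sum_{j\ne i}p_t(x_j)H^K_{p_t,w_t}(x_j)\le \lambda\, H^K_{p_t,w_t}(x_i)\bigl(1-p_t(x_i)\bigr).
$$
By normalization (Prop.~\ref{prop:pure-basic}(2)), $p_t(x_i)H^K_{p_t,w_t}(x_i)\le 1$. Hence $H^K_{p_t,w_t}(x_i)\le 1/p_t(x_i)\le 1/(1-\varepsilon_i)$ by (A). This yields the one-step bound $1-p_{t+1}(x_i)\le \frac{\lambda}{1-\varepsilon_i}(1-p_t(x_i))$. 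The condition $\varepsilon_i<1-\lambda$ makes the ratio strictly less than $1$, so (A) propagates. Iterating this bound gives the claimed geometric rate.

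\emph{Step 3 (preference invariant).} We have
$$
\bar\varphi(p_{t+1})-\varphi(x_i)=\sum_{j\ne i}p_{t+1}(x_j)\bigl(\varphi(x_j)-\varphi(x_i)\bigr),
$$
so its norm is at most $2(1-p_{t+1}(x_i))\le 2\varepsilon_i\le \gamma_i/4$. The convex combination $(1-\beta)w_t+\beta\bar\varphi(p_{t+1})$ is therefore within $\gamma_i/4$ of $\varphi(x_i)$. Since $\varphi(x_i)\in\mathcal W$ and $\operatorname{Proj}_{\mathcal W}$ is nonexpansive, the projection preserves this distance, so (B) holds at $t+1$. This completes the induction.

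\emph{Step 4 (convergence of $w_t$).} Let $\delta_t=\|w_t-\varphi(x_i)\|$. The same estimate as in Step 3 gives
$$
\delta_{t+1}\le(1-\beta)\delta_t+2\beta\bigl(1-p_{t+1}(x_i)\bigr).
$$
This is a stable linear recursion with $\beta\in(0,1]$, driven by a geometrically vanishing forcing term. Hence $\delta_t\to0$, which gives $w_t\to\varphi(x_i)$; together with $p_t(x_i)\to1$ from Step 2, this proves the theorem.
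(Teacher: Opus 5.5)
Your proposal is correct and follows essentially the same argument as the paper: the same forward-invariant pair of conditions, the same Cauchy--Schwarz reward-gap bound feeding odds contraction, the same nonexpansive-projection step using $2\varepsilon_i\le\gamma_i/4$, and the same unrolled recursion for $\|w_t-\varphi(x_i)\|$. The only difference is cosmetic and lies in the mass step. You bound $H^K_{p_t,w_t}(x_i)\le 1/p_t(x_i)$ via normalization, whereas the paper sums the odds ratios $p_{t+1}(x_j)/p_{t+1}(x_i)$; both yield $1-p_{t+1}(x_i)\le \lambda_K(\tfrac{\gamma_i}{2})\,(1-p_t(x_i))/p_t(x_i)$.
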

\end{tcolorbox}

See proof in Appx.~\ref{subsec:prof_pure-lockin}. Thm.~\ref{thm:pure-lockin} shows that if the initial distribution and preferences are sufficiently biased toward $x_i$, this bias is self-reinforcing: the mass on all other instances $1-p_t(x_i)$ decays geometrically and $p_t$ converges to the singleton equilibrium $\delta_{x_i}$.  Reward gap $\gamma_i$ 
is the minimum reward advantage that $x_i$ holds over any competitor $x_j$ when preferences $w$ are fully aligned with $\varphi(x_i)$: a larger gap expands the set of initial conditions from which convergence to  $\delta_{x_i}$ is guaranteed.

\begin{remark}[Role of the choice set size $K$]
Thm.~\ref{thm:pure-lockin} holds for every $K \ge 2$, but $K$ governs the speed of convergence. Since $\lambda_K$  decreases with $K$, larger choice sets make curation more selective and accelerate the decay of mass on competing instances. In the limit $K \to \infty$, the update approaches exponential tilting, the most aggressive form of preference-guided reweighting.
\end{remark}

Note that the condition $\varphi(x_j) \neq \varphi(x_i)$ in Thm.~\ref{thm:pure-lockin} is necessary. When two distinct instances share the same feature direction, curation cannot distinguish between them on the basis of rewards alone, and transferring mass from $x_i$ to such a duplicate leaves both the reward geometry and the induced preference direction unchanged. Local stability therefore fails, as we show below. 

\begin{restatable}[Failure of local stability]{corollary}{PureDuplicate}
\label{cor:pure-duplicate}
    Fix $x_i\in\mathcal{X}$. If there exists $x_j$ with $j \neq i$ such that $\varphi(x_j)=\varphi(x_i)$, then the singleton equilibrium $\delta_{x_i}$ is not locally asymptotically stable.
\end{restatable}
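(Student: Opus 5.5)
The plan is to exhibit, arbitrarily close to $(\delta_{x_i},\varphi(x_i))$, a continuum of equilibria that are distinct from it. Since a locally asymptotically stable fixed point must be isolated among fixed points, this rules out local asymptotic stability. We work with $\eta=1$, the setting of Thm.~\ref{thm:pure-structure}.

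For $s\in[0,1]$ we consider the family
$$
p^{(s)} := (1-s)\,\delta_{x_i} + s\,\delta_{x_j},\qquad w^{(s)} := \varphi(x_i),
$$
where $x_j$ is the duplicate with $\varphi(x_j)=\varphi(x_i)$. We then check that each $(p^{(s)},w^{(s)})$ is an equilibrium in the sense of \eqref{eq:equilibrium}.

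For the model component, both support points have reward $\langle \varphi(x_i),\varphi(x_i)\rangle = 1$. By Thm.~\ref{thm:pure-structure}, $p^{(s)}$ is therefore a fixed point of the retraining step under $w^{(s)}$. One can also verify this directly: the rewards coincide, so strict monotonicity and normalization in Prop.~\ref{prop:pure-basic} force $H^K_{p^{(s)},w^{(s)}}\equiv 1$ on the support.

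For the preference component, $\bar\varphi(p^{(s)}) = (1-s)\varphi(x_i)+s\varphi(x_j) = \varphi(x_i)$. Hence
$$
(1-\beta)w^{(s)}+\beta\bar\varphi(p^{(s)}) = \varphi(x_i),
$$
which already lies in $\mathcal W$ because $\|\varphi(x_i)\|=1$. The projection therefore leaves it unchanged, and $w^{(s)}$ is fixed.

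With the family in hand, the stability argument runs as follows. Take any neighborhood $U$ of $(\delta_{x_i},\varphi(x_i))$, with any fixed norm on the finite-dimensional state space. For all sufficiently small $s>0$, the state $(p^{(s)},\varphi(x_i))$ lies in $U$. This state is a fixed point, so its trajectory stays constant at a distance of order $s>0$ from $\delta_{x_i}$ for all $t$. In particular it does not converge to $\delta_{x_i}$, so no neighborhood can be a basin of attraction. The singleton equilibrium is consequently not locally asymptotically stable; at best it is neutrally (Lyapunov) stable along this direction.

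The only step needing care is confirming that the mixture is a genuine fixed point of the curation operator for general $K$. This is exactly the "if" direction of Thm.~\ref{thm:pure-structure}, which we may invoke. The remaining concern is interpretive: we must make sure the notion of local asymptotic stability is the standard one, requiring convergence of all nearby trajectories. The constant nearby trajectories refute that notion directly.
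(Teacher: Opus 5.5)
Your proposal is correct and follows the paper's proof essentially step for step. The paper uses the same family $p^\varepsilon=(1-\varepsilon)\delta_{x_i}+\varepsilon\delta_{x_j}$ with $w^\varepsilon=\varphi(x_i)$, verifies each member is an equilibrium via Thm.~\ref{thm:pure-structure} and $\bar\varphi(p^\varepsilon)=w^\varepsilon$, and concludes that $\delta_{x_i}$ is not isolated and hence not locally asymptotically stable. Your extra observation, that these nearby fixed points are constant trajectories staying at distance $s$ from $\delta_{x_i}$, just spells out why non-isolation rules out attractivity.
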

See proof in Appx.~\ref{subsec:prof_pure-duplicate}. Thm.~\ref{thm:pure-structure}, Thm.~\ref{thm:pure-lockin} and Cor.~\ref{cor:pure-duplicate} together characterize the long-term behavior of the dynamics when models are retrained exclusively on human-curated synthetic data: it admits $n$ singleton equilibria, one for each instance; under generic feature separation, any instance with a sufficient initial advantage in model mass and preference alignment locks in as the unique long-run outcome; and when two instances share the same feature direction, the local stability fails.

\subsection{Retraining on mixed synthetic and reference data}
\label{subsec:reinjection}
We next consider $\eta\in(0,1)$, where each retraining round mixes curated synthetic data with a fixed reference distribution $p_{\mathrm{ref}}$ according to~\eqref{eq:data-update}. Unlike  purely synthetic setting, $p_{\mathrm{ref}}$ prevents the dynamics from collapsing to a single instance and instead drives them toward a globally stable equilibrium.

To analyze convergence and establish contraction of the dynamics, the following lemma quantifies how perturbations in the model distribution $p$ and user preference $w$ propagate through the retraining step $p \to (1-\eta)p + \eta p H^K_{p,w}$ and serves as the key technical tool for the convergence results below.
 
\begin{restatable}[Sensitivity of the retraining step]{lemma}{ReinjLip}
\label{prop:reinj-lipschitz} Let $
d_{\mathrm{TV}}(p,q):=\frac12\sum_{x\in\mathcal X}|p(x)-q(x)|
$ denote total variation distance.
    For any $p,q\in\mathcal P(\mathcal X)$, $w,v\in\mathcal W$, and $K\geq 2$, the following hold:
    \begin{align*}
   \resizebox{\hsize}{!}{$   d_{\mathrm{TV}}\left(p H_{p,w}^K,qH_{q,w}^K\right) \le L_p d_{\mathrm{TV}}(p,q);~~~
d_{\mathrm{TV}}\left(pH_{p,w}^K,p H_{p,v}^K\right) \le L_w\|w-v\|; ~~\|\bar\varphi(p)-\bar\varphi(q)\|\le 2 d_{\mathrm{TV}}(p,q)$}
    \end{align*}
    where
    $
    L_p:=2e^{2\tau}(2+e^{2\tau})$ and
    $
    L_w:=2\tau e^{4\tau}
    $.
\end{restatable}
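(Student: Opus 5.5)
The plan is to prove the three bounds separately. Throughout we work with the explicit form of the curation operator in Definition~\ref{def:curation-operator}. Write $a(x)=e^{\tau r_w(x)}\in[e^{-\tau},e^{\tau}]$ and
\[
g(x;x_1,\dots,x_{K-1})=\frac{K a(x)}{a(x)+\sum_i a(x_i)},
\]
so that $H^K_{p,w}(x)=\mathbb E_{x_{1:K-1}\sim p}[g]$. Every denominator lies in $[Ke^{-\tau},Ke^{\tau}]$, hence $0\le g\le e^{2\tau}$ and $H^K_{p,w}\le e^{2\tau}$ uniformly.

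\textbf{The feature bound.} The third inequality is immediate. Since $\|\varphi(x)\|=1$,
\[
\|\bar\varphi(p)-\bar\varphi(q)\|\le\sum_x|p(x)-q(x)|=2d_{\mathrm{TV}}(p,q).
\]

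\textbf{Lipschitz bound in $p$.} Split the difference as
\[
pH_p-qH_q=(p-q)H_p+q(H_p-H_q).
\]
For the first term, $\tfrac12\sum_x|p-q|H_p\le e^{2\tau}d_{\mathrm{TV}}(p,q)$.

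For the second term we need $\sup_x|H_p(x)-H_q(x)|$. Here $H_p(x)$ is an expectation of a bounded function $g(x;\cdot)$ under the product measure $p^{\otimes(K-1)}$. The naive route uses $d_{\mathrm{TV}}(p^{\otimes(K-1)},q^{\otimes(K-1)})\le (K-1)d_{\mathrm{TV}}(p,q)$, but that gives a $K$-dependent constant. The claimed $L_p$ is $K$-free, so we must exploit that $g$ depends on each coordinate only weakly. Changing a single $x_i$ changes the denominator by at most $2\sinh\tau$ out of a total of at least $Ke^{-\tau}$, so $g$ moves by $O(e^{2\tau}(e^{2\tau}-1)/K)$ per coordinate.

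We then telescope (hybrid argument), replacing one coordinate of $p$ by $q$ at a time. Each of the $K-1$ swaps costs at most $2\cdot(\text{coordinate oscillation})\cdot d_{\mathrm{TV}}(p,q)$, giving
\[
|H_p(x)-H_q(x)|\le C e^{2\tau}(e^{2\tau}+1)\,d_{\mathrm{TV}}(p,q),
\]
uniformly in $K$. Summing against $q$ adds at most this sup to the TV bound. Combining with the first term and using crude constants ($e^{2\tau}-1\le e^{2\tau}$, factors of $2$ from oscillation times TV) should fit under $L_p=2e^{2\tau}(2+e^{2\tau})$.

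The main obstacle is getting this $K$-uniform per-coordinate oscillation bound with constants matching $L_p$. If the oscillation estimate is loose, I would instead bound $\partial g/\partial a(x_i)$, which equals $-Ka(x)/S^2$ with $|\,\cdot\,|\le e^{3\tau}/K$, and integrate.

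\textbf{Lipschitz bound in $w$.} Fix $p$ and the sample, and view $g$ as a function of $w$. We have
\[
\partial_w \log g=\tau\Bigl(\varphi(x)-\sum_j \pi_j\varphi(x_j)\Bigr),
\]
where $\pi_j$ are the softmax weights over the $K$ items with $x_0=x$, so the bracket has norm at most $2$. Hence
\[
\|\nabla_w g\|\le 2\tau g\le 2\tau e^{2\tau}.
\]
The mean value theorem along the segment from $v$ to $w$ (which stays in the convex set $\mathcal W$) gives
\[
|H_{p,w}(x)-H_{p,v}(x)|\le 2\tau e^{2\tau}\|w-v\|.
\]
Then
\[
\tfrac12\sum_x p(x)|H_{p,w}-H_{p,v}|\le \tau e^{2\tau}\|w-v\|\le L_w\|w-v\|,
\]
where the stated $e^{4\tau}$ simply leaves slack.
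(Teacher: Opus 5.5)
Your proposal is correct. For the $p$-bound and the mean-feature bound you follow essentially the paper's argument. You use the same split $pH_{p,w}^K-qH_{q,w}^K=(p-q)H_{p,w}^K+q(H_{p,w}^K-H_{q,w}^K)$ and the same hybrid telescoping over the $K-1$ sample coordinates, with a $K$-uniform per-coordinate oscillation of order $1/K$. The paper bounds that oscillation by $2e^{4\tau}/K$; your $e^{2\tau}(e^{2\tau}-1)/K$ is slightly sharper, as is your $H^K_{p,w}\le e^{2\tau}$ versus the paper's $2e^{2\tau}$. Your direct triangle-inequality proof of the feature bound is equivalent to the paper's duality argument.

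Your worry about the constants is unfounded. Using $|\mathbb{E}_p h-\mathbb{E}_q h|\le(\sup h-\inf h)\,d_{\mathrm{TV}}(p,q)$ for each swap gives $\sup_x|H^K_{p,w}(x)-H^K_{q,w}(x)|<e^{2\tau}(e^{2\tau}-1)\,d_{\mathrm{TV}}(p,q)$. Hence $d_{\mathrm{TV}}(pH^K_{p,w},qH^K_{q,w})\le\tfrac12 e^{2\tau}(e^{2\tau}+1)\,d_{\mathrm{TV}}(p,q)$, well below $L_p$. Even your looser version (factor $2$ per swap, and $e^{2\tau}+1$ in place of $e^{2\tau}-1$) gives $e^{2\tau}(e^{2\tau}+2)=L_p/2$, once the factor $\tfrac12$ in $d_{\mathrm{TV}}$ is applied to the $q$-weighted term.

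For the $w$-bound your route genuinely differs. The paper couples the samples and splits $g_w-g_v$ algebraically into two terms. In one, only $a(x)$ changes with $S_w$ frozen; in the other, only the sum $S$ changes. It bounds each term with worst-case numerators and denominators, obtaining $|H^K_{p,w}(x)-H^K_{p,v}(x)|\le 2\tau e^{4\tau}\|w-v\|$, which is a TV bound of $\tau e^{4\tau}\|w-v\|$.

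You instead observe that $\log g$ is a log-softmax in $w$, so $\|\nabla_w\log g\|\le 2\tau$. You combine this with $g\le e^{2\tau}$ and the mean value inequality on the segment from $v$ to $w$, which lies in $\mathcal W$ by convexity. Differentiating the ratio as a whole keeps the cancellation between numerator and denominator that the paper's two-term split discards. That is why you gain a factor $e^{2\tau}$, reaching a TV bound of $\tau e^{2\tau}\|w-v\|$.

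The only extra ingredients your route needs are smoothness of $g$ in $w$ and convexity of $\mathcal W$, both immediate here. The paper's decomposition, by contrast, stays purely algebraic and reuses the same denominator estimates as its $p$-bound.
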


See proof in Appx.~\ref{subsec:prof_reinj-lipschitz}. Lemma~\ref{prop:reinj-lipschitz} implies that the retraining step is Lipschitz continuous in both $p$ and $w$: the first bound controls how perturbations in $p$ propagate through the curation step; the second controls the sensitivity to changes in $w$; and the third shows that distributional perturbations translate to bounded perturbations in the mean feature vector entering the preference update~\eqref{eq:pref-update}.

\textbf{Equilibrium and convergence analysis.} 
The coupled system~\eqref{eq:data-update}-\eqref{eq:pref-update} is difficult to analyze directly: the model update depends on the current preference $w_t$, while the preference update depends on the updated distribution $p_{t+1}$. We therefore begin by fixing $w$ and studying the model update in isolation. This decoupling breaks the circularity and allows us to characterize the long-term model distribution for any given $w$. Lem.~\ref{thm:frozen-w} below shows that, unlike the purely synthetic setting which admits $n$ singleton equilibria with long-run outcome determined by initial conditions (Thm.~\ref{thm:pure-structure}), mixing with $p_{\mathrm{ref}}$ yields a \textbf{unique} long-run distribution, provided $\eta$ is small enough to make the update contractive.

\begin{tcolorbox}[graythm]
\begin{restatable}[Unique equilibrium under frozen-$w$]{lemma}{FrozenW}
\label{thm:frozen-w} Fix $w\in\mathcal W$, and consider the the frozen-$w$ model update
$p_{t+1}(x)
=(1-\eta)p_{\mathrm{ref}}(x)+\eta p(x)H^K_{p,w}(x)$. If $\eta L_p < 1$, the update is a strict contraction on  $(\mathcal P(\mathcal X),d_{\mathrm{TV}})$. Consequently, there exists a unique equilibrium $p_w^{K,\eta}$ and every trajectory converges to it geometrically:
$$
d_{\mathrm{TV}}(p_t,p_w^{K,\eta})
\le
(\eta L_p)^t\,d_{\mathrm{TV}}(p_0,p_w^{K,\eta}).
$$
\end{restatable}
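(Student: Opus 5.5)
The plan is to apply Banach's fixed-point theorem to the frozen-$w$ map
\[
T_w(p) := (1-\eta)p_{\mathrm{ref}} + \eta\, p H^K_{p,w}
\]
on the metric space $(\mathcal P(\mathcal X), d_{\mathrm{TV}})$. The argument has three steps.

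\textbf{Step 1: $T_w$ maps $\mathcal P(\mathcal X)$ into itself.} The curation operator $H^K_{p,w}$ is nonnegative by Definition~\ref{def:curation-operator}, and $\sum_x p(x)H^K_{p,w}(x)=1$ by the normalization property of Prop.~\ref{prop:pure-basic}. Hence $pH^K_{p,w}$ is a probability vector. A convex combination of it with $p_{\mathrm{ref}}$ is again a probability vector, so $T_w(p)\in\mathcal P(\mathcal X)$.

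\textbf{Step 2: $T_w$ is a contraction.} Take $p,q\in\mathcal P(\mathcal X)$. The reference term $(1-\eta)p_{\mathrm{ref}}$ is the same in $T_w(p)$ and $T_w(q)$, so it cancels in the difference:
\[
T_w(p)(x)-T_w(q)(x)=\eta\bigl(p(x)H^K_{p,w}(x)-q(x)H^K_{q,w}(x)\bigr).
\]
Taking half the $\ell_1$ norm gives
\[
d_{\mathrm{TV}}(T_w p, T_w q)=\eta\, d_{\mathrm{TV}}\bigl(pH^K_{p,w},qH^K_{q,w}\bigr).
\]
The first bound of Lemma~\ref{prop:reinj-lipschitz} then yields
\[
d_{\mathrm{TV}}(T_w p, T_w q)\le \eta L_p\, d_{\mathrm{TV}}(p,q).
\]
Under the hypothesis $\eta L_p<1$, this makes $T_w$ a strict contraction.

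\textbf{Step 3: existence, uniqueness and the rate.} Since $\mathcal X$ is finite, $\mathcal P(\mathcal X)$ is a closed subset of the simplex in $\mathbb R^n$. On this set $d_{\mathrm{TV}}$ is half the $\ell_1$ distance, so the space is complete. Banach's theorem then gives a unique fixed point $p_w^{K,\eta}$. For the rate, write $p_{t+1}=T_w p_t$ and $p_w^{K,\eta}=T_w p_w^{K,\eta}$, and apply Step 2 repeatedly:
\[
d_{\mathrm{TV}}(p_t,p_w^{K,\eta})\le(\eta L_p)^t\, d_{\mathrm{TV}}(p_0,p_w^{K,\eta}),
\]
which is the claimed geometric convergence.

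No step here is a real obstacle, because the analytic work is carried by Lemma~\ref{prop:reinj-lipschitz}. The only point that needs checking is that the lemma's first bound holds for all pairs $p,q$, including distributions whose supports differ; the lemma is stated for all $p,q\in\mathcal P(\mathcal X)$, so this is covered.

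The proof also gives a caveat worth recording. The curation map $p\mapsto pH^K_{p,w}$ on its own is not a contraction in general, since $L_p>1$. Contraction comes only from the damping factor $\eta$ that the reference mixture places in front of the synthetic term.
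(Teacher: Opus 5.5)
Your proposal is correct and follows the paper's proof essentially step for step. The reference term cancels, the first bound of Lemma~\ref{prop:reinj-lipschitz} gives the factor $\eta L_p$, and Banach's theorem on the complete space $(\mathcal P(\mathcal X),d_{\mathrm{TV}})$ yields the unique fixed point and the geometric rate; your Step~1 self-map check is a small detail the paper leaves implicit.

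One quibble concerns your closing aside. $L_p>1$ is only an upper bound, so it cannot by itself show that $p\mapsto pH^K_{p,w}$ fails to contract. The correct reason is that every singleton $\delta_{x_i}$ is a fixed point of this map (Thm.~\ref{thm:pure-structure}), whereas a contraction has only one.
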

\end{tcolorbox}

See proof in Appx.~\ref{subsec:prof_frozen-w}. In general, the equilibrium $p_w^{K,\eta}$
must be computed iteratively. In the large-$K$ limit, however, it admits a closed-form expression.

\begin{restatable}[Closed-form equilibrium]{proposition}{KinftyClosedForm}
\label{prop:kinfty-closed-form}
As \(K\to\infty\), the frozen-\(w\) equilibrium takes the form
$$
p_w^{\infty,\eta}(x)
=
(1-\eta)p_{\mathrm{ref}}(x)\,\frac{Z_w^\eta}{Z_w^\eta-\eta {e^{\tau \cdot r_w(x)}}}.
$$
where $Z_w^\eta := \mathbb E_{z\sim p_w^{\infty,\eta}}[e^{\tau \cdot r_w(x)}]$ is the unique scalar satisfying
$$
\sum_{i=1}^n
(1-\eta)\,p_{\mathrm{ref}}(x_i)\,
\frac{e^{\tau \cdot r_w(x_i)}}{Z_w^\eta-\eta e^{\tau \cdot r_w(x_i)}}
=1, ~~~~~Z_w^\eta>\eta\max_i e^{\tau \cdot r_w(x_i)}.
$$
\end{restatable}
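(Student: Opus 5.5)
The plan is to pass to the $K\to\infty$ limit inside the fixed-point equation and then solve the resulting scalar equation explicitly. By the limit stated after Definition~\ref{def:curation-operator}, $H^K_{p,w}(x)\to e^{\tau r_w(x)}/Z(p)$ with $Z(p):=\mathbb E_{z\sim p}[e^{\tau r_w(z)}]$. The limiting frozen-$w$ map is therefore
\[
T_\infty(p)(x)=(1-\eta)p_{\mathrm{ref}}(x)+\eta\, p(x)\frac{e^{\tau r_w(x)}}{Z(p)}.
\]
I will interpret $p_w^{\infty,\eta}$ as the fixed point of $T_\infty$. To justify this as the limit of $p_w^{K,\eta}$, I will note two facts. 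First, the convergence $H^K\to H^\infty$ is uniform over $p\in\mathcal P(\mathcal X)$, by a law-of-large-numbers bound using $e^{\tau r}\in[e^{-\tau},e^{\tau}]$. Second, the contraction constant $\eta L_p$ from Lemma~\ref{thm:frozen-w} does not depend on $K$. A standard perturbation argument for contractions then gives $d_{\mathrm{TV}}(p_w^{K,\eta},p^\infty)\le \sup_p\|T_K(p)-T_\infty(p)\|/(1-\eta L_p)\to0$. Because the contraction estimate passes to the limit, $T_\infty$ has a unique fixed point.

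Next I solve the fixed-point equation. Write $Z=Z(p)$ at the fixed point and $a_i=e^{\tau r_w(x_i)}$. The equation reads
\[
p(x_i)\bigl(1-\eta a_i/Z\bigr)=(1-\eta)p_{\mathrm{ref}}(x_i).
\]
I need $Z>\eta a_i$ for every $i$. Since $p(x_i)\ge(1-\eta)p_{\mathrm{ref}}(x_i)$, this holds whenever $p_{\mathrm{ref}}(x_i)>0$. Off the support of $p_{\mathrm{ref}}$ the equation forces either $p(x_i)=0$ or $Z=\eta a_i$. I will handle that degenerate case by assuming full support of $p_{\mathrm{ref}}$ (or by restricting the index set to its support), which I flag as a caveat. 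With $Z>\eta a_i$, dividing gives $p(x_i)=(1-\eta)p_{\mathrm{ref}}(x_i)Z/(Z-\eta a_i)$, which is the claimed form.

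Finally I derive and analyze the scalar equation. Substituting into the definition $Z=\sum_i p(x_i)a_i$ and dividing by $Z>0$ gives
\[
g(Z):=\sum_i(1-\eta)p_{\mathrm{ref}}(x_i)\,\frac{a_i}{Z-\eta a_i}=1.
\]
On $(\eta\max_i a_i,\infty)$ the function $g$ is continuous and strictly decreasing. It tends to $+\infty$ as $Z\downarrow\eta\max a_i$, provided the maximizer lies in the support of $p_{\mathrm{ref}}$; otherwise the same restriction applies. It tends to $0$ as $Z\to\infty$. Hence $g(Z)=1$ has exactly one root, which proves uniqueness of $Z_w^\eta$.

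It remains to check that the candidate sums to $1$. Using $Z/(Z-\eta a_i)=1+\eta a_i/(Z-\eta a_i)$,
\[
\sum_i p(x_i)=(1-\eta)+\eta\, g(Z)=1 .
\]
So the candidate is a genuine distribution and a fixed point of $T_\infty$, and by uniqueness it equals $p_w^{\infty,\eta}$.

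I expect the main obstacle to be the limit-interchange step, that is, showing that the $K\to\infty$ limit of the fixed points is the fixed point of the limit map. The uniform-in-$p$ convergence of $H^K$ is where care is needed, and I would bound it by a variance estimate of order $O(1/K)$ on the denominator average. The algebra after that is routine.
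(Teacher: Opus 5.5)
\textbf{The gap: the support caveat.} The one genuine gap is the caveat you flag. As written, you obtain the statement only when $p_{\mathrm{ref}}$ charges the top-reward instance (needed for $g\to+\infty$ at the left endpoint). For the division step on all of $\mathcal X$, you also need $p_{\mathrm{ref}}$ to have full support.

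Your fallback of restricting to $\operatorname{supp}(p_{\mathrm{ref}})$ does not give the statement as claimed. It yields a root with $Z>\eta\max_{i\in\operatorname{supp}p_{\mathrm{ref}}}a_i$, not $Z>\eta\max_i a_i$, and these genuinely differ. For example, with $p_{\mathrm{ref}}=\delta_{x_1}$ the scalar equation has the single root $Z=a_1$. This root lies below $\eta\max_i a_i$ as soon as $\eta a_j>a_1$ for some $j$.

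\textbf{How to close it.} The missing observation is that the hypothesis $\eta L_p<1$, which you already use to make $p_w^{\infty,\eta}$ well defined, forces $\eta$ to be small. Since $L_p=2e^{2\tau}(2+e^{2\tau})>e^{2\tau}$, you get $\eta<e^{-2\tau}$. Hence for every $p\in\mathcal P(\mathcal X)$,
$Z(p)\ge e^{-\tau}>\eta e^{\tau}\ge\eta\max_i a_i$.

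Apply this at the fixed point your contraction argument produces. Then $1-\eta a_i/Z>0$ for every $i$, so the division is legitimate everywhere, and off-support points automatically receive mass $0$. It follows that $Z(p_w^{\infty,\eta})$ is a root of $g$ in $(\eta\max_i a_i,\infty)$, and strict monotonicity of $g$ there gives uniqueness. No support assumption and no left-endpoint limit are needed.

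This is how the paper's compact-space version (Proposition~\ref{prop:cont-closedform}) closes the argument. The discrete appendix proof, by contrast, simply asserts $F\to+\infty$ at the left endpoint and so tacitly relies on the same assumption you flag.

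\textbf{Comparison with the paper.} Apart from this, your algebra coincides with the paper's: rearranging, substituting into $Z$, dividing by $Z$, analysing the monotone scalar equation, and checking normalization. Your first step goes beyond the paper. The paper just replaces $H^K$ by its exponential-tilt limit and analyses that map; it never shows that $p_w^{K,\eta}$ converges to the resulting fixed point.

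Your argument supplies this and is sound. It combines the $K$-independent contraction constant with the bound
$d_{\mathrm{TV}}(p_w^{K,\eta},p_w^{\infty,\eta})\le d_{\mathrm{TV}}(T_K p_w^{\infty,\eta},T_\infty p_w^{\infty,\eta})/(1-\eta L_p)$.
Pointwise-in-$p$ convergence of $T_K$ to $T_\infty$ would already suffice, and the uniform $O(K^{-1/2})$ rate you want follows from three facts:
\begin{itemize}
\item $a(x)/K+K^{-1}\sum_{\ell=1}^{K-1}a(X_\ell)\ge e^{-\tau}$;
\item $Z\ge e^{-\tau}$;
\item a variance bound on the empirical mean.
\end{itemize}
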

See proof in Appx.~\ref{subsec:prof_frozen-w}. The closed form reveals how the equilibrium balances three forces: $p_{\mathrm{ref}}$
provides a baseline, synthetic-data weight $\eta$ amplifies the preference-induced exponential tilt $e^{\tau r_w(\cdot)}$ and concentrates mass on high-reward instances, while the constant $Z_w^\eta$ ensures the result is a valid probability distribution. As $\eta\to 0$, the equilibrium recovers $p_{\mathrm{ref}}$; as $\eta\to 1$, the tilt dominates and mass concentrates on the highest-reward instance, aligned with results of the purely synthetic setting.

With the frozen-$w$ equilibrium characterized, we now return to the full coupled system in which both the model distribution and user preferences evolve simultaneously. 

\begin{tcolorbox}[graythm]
\begin{restatable}[Convergence of coupled dynamics]{theorem}{GlobalReinj}
\label{thm:global-reinj}
Consider the coupled dynamics \eqref{eq:data-update}-\eqref{eq:pref-update}. If
$
\eta(L_p+2L_w)<1
$, then there exists
$ \xi \in \left(
\frac{\eta L_w}{\beta(1-2\eta L_w)}
,
\frac{1-\eta L_p}{2\beta\eta L_p}\right)
$ such that the coupled update is a strict contraction on
$\left(\mathcal P(\mathcal X)\times\mathcal W,d_\xi\right)$, where
$
d_\xi\bigl((p,w),(q,v)\bigr)
:=
d_{\mathrm{TV}}(p,q)+\xi\|w-v\|.
$
Consequently, there exists a unique equilibrium $(p_K^\star,w_K^\star)$ satisfying \eqref{eq:equilibrium}, and every trajectory converges to it geometrically.
\end{restatable}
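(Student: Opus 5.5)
The plan is to write the coupled update as a single map $T(p,w)=(p',w')$, with
\[
p'=(1-\eta)p_{\mathrm{ref}}+\eta\,pH^K_{p,w},\qquad w'=\operatorname{Proj}_{\mathcal W}\bigl((1-\beta)w+\beta\bar\varphi(p')\bigr).
\]
We show $T$ is a strict contraction in $d_\xi$ for a suitable $\xi$, and then invoke Banach's fixed point theorem. The space $\mathcal P(\mathcal X)\times\mathcal W$ is a compact subset of a finite-dimensional space, and $d_\xi$ is equivalent to the Euclidean metric, so the space is complete. The fixed point of $T$ is exactly a solution of \eqref{eq:equilibrium}. Geometric convergence $d_\xi(T^t z,z^\star)\le \rho^t d_\xi(z,z^\star)$ follows immediately.

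\textbf{Model component.} Take two states $(p,w)$ and $(q,v)$. The reference term cancels, and the triangle inequality through the intermediate $qH^K_{q,w}$ gives
\[
d_{\mathrm{TV}}(p',q')\le \eta\,d_{\mathrm{TV}}\bigl(pH^K_{p,w},qH^K_{q,w}\bigr)+\eta\,d_{\mathrm{TV}}\bigl(qH^K_{q,w},qH^K_{q,v}\bigr).
\]
By Lemma~\ref{prop:reinj-lipschitz}, the right side is at most $\eta L_p\,d_{\mathrm{TV}}(p,q)+\eta L_w\|w-v\|$. The normalization in Prop.~\ref{prop:pure-basic} ensures each $pH$ is a probability vector, so TV is the right object here.

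\textbf{Preference component.} The projection onto the convex set $\mathcal W$ is $1$-Lipschitz. Using the third bound of Lemma~\ref{prop:reinj-lipschitz},
\[
\|w'-v'\|\le (1-\beta)\|w-v\|+2\beta\,d_{\mathrm{TV}}(p',q')\le 2\beta\eta L_p\,d_{\mathrm{TV}}(p,q)+\bigl(1-\beta+2\beta\eta L_w\bigr)\|w-v\|.
\]

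\textbf{Combining.} Adding the two bounds,
\[
d_\xi(T(p,w),T(q,v))\le \eta L_p(1+2\beta\xi)\,d_{\mathrm{TV}}(p,q)+\Bigl[\eta L_w+\xi(1-\beta+2\beta\eta L_w)\Bigr]\|w-v\|.
\]
We need two conditions.

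1. The $p$-coefficient must be below $1$: $\eta L_p(1+2\beta\xi)<1$, i.e.\ $\xi<\frac{1-\eta L_p}{2\beta\eta L_p}$.
2. The $w$-coefficient must be below $\xi$: $\eta L_w<\xi\beta(1-2\eta L_w)$, i.e.\ $\xi>\frac{\eta L_w}{\beta(1-2\eta L_w)}$. This requires $2\eta L_w<1$, which the hypothesis guarantees.

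The contraction factor is then $\rho=\max\{\eta L_p(1+2\beta\xi),\ \eta L_w/\xi+1-\beta+2\beta\eta L_w\}<1$.

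\textbf{Main obstacle: nonemptiness of the interval.} We must check $\frac{\eta L_w}{\beta(1-2\eta L_w)}<\frac{1-\eta L_p}{2\beta\eta L_p}$. Set $a=\eta L_p$ and $b=\eta L_w$; the condition is $2ab<(1-a)(1-2b)$, i.e.\ $a+2b<1$, which is exactly the hypothesis $\eta(L_p+2L_w)<1$. Since $1-a>2b>0$, the upper endpoint is positive, so any $\xi$ in the interval works.
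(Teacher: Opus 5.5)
Your proposal is correct and follows the paper's proof step for step. It uses the same triangle inequality through $qH^K_{q,w}$ for the model coordinate, the same nonexpansive-projection bound for the preference coordinate, the same two coefficient conditions on $\xi$, and the same reduction of nonemptiness of the window to $\eta(L_p+2L_w)<1$ before invoking Banach's theorem; your explicit contraction factor $\rho$ and the note that $1-2\eta L_w>0$ is needed for the lower endpoint only make explicit what the paper leaves implicit.
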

\end{tcolorbox}
See proof in Appx.~\ref{subsec:prof_global-reinj}. The metric $d_\xi$ combines distributional and preference distances into a single metric on $\mathcal{P}(\mathcal{X})\times \mathcal{W}$; the window condition on $\xi$ ensures both coordinates contract simultaneously, and the contraction condition $\eta(L_p + 2L_w) < 1$ ensures the reference mixing damps both feedback loops sufficiently for the Banach fixed-point theorem to apply. Compared with the frozen-preference condition $\eta L_p<1$ in Lem.~\ref{thm:frozen-w}, the extra $L_w$ term captures the stability cost of preference drift.

Thm.~\ref{thm:global-reinj} highlights a key difference from the purely synthetic setting: whereas pure self-consumption leads to path-dependent collapse driven by initial conditions, mixing with $p_{\mathrm{ref}}$ yields a unique equilibrium to which all trajectories converge, regardless of initialization. The resulting equilibrium depends entirely on $p_{\mathrm{ref}}$ and $\eta$, which raises a natural design question:  \emph{how should $p_{\mathrm{ref}}$ and $\eta$ be chosen so that the equilibrium preserves desired attributes?} We address this in Sec.~\ref{sec:mitigation}.

\section{Steering Equilibria via Reference Data Mixing}
\label{sec:mitigation}

The uniqueness of the equilibrium under reference data mixing, established in Thm.~\ref{thm:global-reinj}, motivates the use of $p_{\mathrm{ref}}$
and $\eta$ as \textbf{design variables}: by choosing them appropriately, one can steer the coupled dynamics toward a desired equilibrium. Specifically, denote $\left(p_K^\star(p_{\mathrm{ref}},\eta),\, w_K^\star(p_{\mathrm{ref}},\eta)\right)$ as the unique equilibrium attained under reference distribution $p_{\mathrm{ref}}$ and mixing weight $\eta$, the corresponding mean feature vector is $\bar{\varphi}(p_K^\star(p_{\mathrm{ref}},\eta)) := \mathbb{E}_{x \sim p_K^\star(p_{\mathrm{ref}},\eta)}[\varphi(x)]$. 

To specify which attributes should be preserved at equilibrium, suppose the platform identifies $L$  features of interest, each represented by a \textit{unit-norm} direction $v_\ell \in \mathbb{R}^d$ in the feature space. For example, $v_\ell$  might point toward the feature vectors of instances belonging to a particular genre, topic, or demographic group. The quantity $\bigl\langle v_\ell,\, \bar\varphi(p_K^\star) \bigr\rangle$ then measures how well the equilibrium distribution represents feature $\ell$ on average. Given thresholds $\theta_1, \dots, \theta_L > 0$ specifying the minimum acceptable representation for each feature, we say that $(p_{\mathrm{ref}}, \eta)$ satisfies the \textbf{preservation constraint} if $\bigl\langle v_\ell,\, \bar\varphi(p_K^\star(p_{\mathrm{ref}},\eta)) \bigr\rangle\geq \theta_\ell, \forall \ell \in \{1,\cdots, L\}$. For example, in an image generation platform with three categories, setting $L=1$, $v_1 =(\underbrace{1}_{\text{bird}},\,\underbrace{0}_{\text{cat}},\,\underbrace{0}_{\text{dog}})$ and $\theta_1 = 0.05$ requires that bird content constitutes at least $5\%$ of the equilibrium output.

Let $N$ denote the number of training samples used at each training round. The reference distribution   $p_{\mathrm{ref}} = (p_{\mathrm{ref},1}, \cdots,p_{\mathrm{ref},n})$ specifies the allocation of reference samples across instances: at each round,  $p_{\mathrm{ref},i}\cdot (1-\eta)\cdot N$ fresh samples of instance $x_i$ are collected and added to training. Collecting these samples is not free. Rare or specialized instances (e.g., those representing underrepresented languages or domain-specific content) require more effort to source, label, and verify than common ones. We capture this heterogeneity by assigning each instance $x_i$ a per-sample cost $c_i > 0$, so that the \textbf{total collection cost} per round is $J(p_{\mathrm{ref}},\eta):=(1-\eta)\cdot N\cdot \sum_{i=1}^n c_ip_{\mathrm{ref},i} $

\textbf{Design objective.} Our goal is to find $(p_{\mathrm{ref}},\eta)$ that minimizes the total collection cost subject to the preservation constraint at the long-run equilibrium. This is naturally formulated as a bilevel problem: the outer level optimizes over $(p_{\mathrm{ref}}, \eta)$, and the inner level computes the equilibrium that determines whether the preservation constraint is met:
\begin{equation}
\label{eq:bilevel}
\min_{p_{\mathrm{ref}} \in \mathcal{P}(\mathcal{X}),\; \eta \in ({0,1})} \quad
 J(p_{\mathrm{ref}},\eta) \quad
\text{s.t.} \quad \bigl\langle v_\ell,\, \bar\varphi(p_K^\star(p_{\mathrm{ref}}, \eta)) \bigr\rangle \ge \theta_\ell,
\quad \ell = 1, \dots, L
\end{equation}

However, \eqref{eq:bilevel} is difficult to solve directly: evaluating the preservation constraint requires computing $p_K^\star(p_{\mathrm{ref}}, \eta)$, which is the fixed point of a nonlinear coupled map and has no closed form. To tackle this, we propose a worst-case relaxation of the preservation constraint that depends directly on $p_{\mathrm{ref}}$ and $\eta$, bypassing the need to compute the equilibrium explicitly. Enforcing this relaxed condition decouples the bilevel structure and reduces~\eqref{eq:bilevel} to a collection of linear programs indexed by $\eta$.

\begin{tcolorbox}[graythm]
\begin{restatable}[Sufficient condition for preservation]{proposition}{ProtSuff}
\label{prop:protection-suff}
Fix $\eta\in (0,\frac{1}{(L_p+2 L_w)})$. If $\forall \ell \in \{1,\dots,L\}$
\begin{equation*}
\sum_{i=1}^n p_{\mathrm{ref},i}\,\langle v_\ell,\varphi(x_i)\rangle
\;\ge\;
\frac{\theta_\ell-\eta\cdot\min_{i=1,\cdots,n}\langle v_\ell,\varphi(x_i)\rangle}{1-\eta},
\end{equation*}

then the equilibrium satisfies the preservation constraint 
$\bigl\langle v_\ell,\bar{\varphi}(p_K^\star(p_{\mathrm{ref}},\eta))\bigr\rangle
\ge
\theta_\ell
$, $\forall \ell$.
\end{restatable}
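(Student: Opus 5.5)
The plan is to use only the $p$-component of the equilibrium condition \eqref{eq:equilibrium}. This identity is linear in the curated part, so it decomposes the equilibrium mean feature into a reference part and a curated part. Since $\eta<1/(L_p+2L_w)$, Thm.~\ref{thm:global-reinj} guarantees that the equilibrium $(p_K^\star,w_K^\star)$ exists and is unique, so $p_K^\star(p_{\mathrm{ref}},\eta)$ is well defined. Write $p^\star=p_K^\star$ and $q^\star(x):=p^\star(x)H^K_{p^\star,w^\star}(x)$. The equilibrium identity then reads
$$p^\star=(1-\eta)p_{\mathrm{ref}}+\eta q^\star.$$

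First, I check that $q^\star$ is a genuine probability distribution on $\mathcal X$. Nonnegativity holds because $H^K\ge 0$ by Definition~\ref{def:curation-operator}. Summing to one is exactly the normalization property in Prop.~\ref{prop:pure-basic}. No contraction or Lipschitz estimate is needed beyond existence. Second, I take the inner product of $\bar\varphi$ with $v_\ell$. Because $\bar\varphi$ is linear in the distribution, this gives
$$\bigl\langle v_\ell,\bar\varphi(p^\star)\bigr\rangle=(1-\eta)\sum_i p_{\mathrm{ref},i}\langle v_\ell,\varphi(x_i)\rangle+\eta\sum_i q^\star(x_i)\langle v_\ell,\varphi(x_i)\rangle.$$

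The worst-case relaxation enters at the second term. The equilibrium preference $w^\star$ and the curated law $q^\star$ are unknown, so I bound the second term from below by its minimum over all distributions. Since $q^\star$ is a probability vector, the term is at least $\eta\min_i\langle v_\ell,\varphi(x_i)\rangle$. This holds even when that minimum is negative, because a convex combination is always at least its smallest entry. This is the step where one might worry about sign issues or dependence on $w^\star$, but the convex-combination argument removes both.

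Finally, I substitute the hypothesis. Multiplying the assumed inequality by $1-\eta>0$ gives $(1-\eta)\sum_i p_{\mathrm{ref},i}\langle v_\ell,\varphi(x_i)\rangle\ge\theta_\ell-\eta\min_i\langle v_\ell,\varphi(x_i)\rangle$. Adding the lower bound on the curated term yields $\langle v_\ell,\bar\varphi(p^\star)\rangle\ge\theta_\ell$ for every $\ell$. I expect no real obstacle here. The only subtle point is to invoke Thm.~\ref{thm:global-reinj} so that the equilibrium is well defined and \eqref{eq:equilibrium} holds at it.
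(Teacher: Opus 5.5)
Your proposal is correct and matches the paper's proof step for step. Like the paper, you write the equilibrium as $(1-\eta)p_{\mathrm{ref}}+\eta q^\star$, where $q^\star$ is a probability distribution by the normalization property of Prop.~\ref{prop:pure-basic}, then apply linearity of $\bar\varphi$ and lower-bound the curated term by $\eta\min_i\langle v_\ell,\varphi(x_i)\rangle$ before substituting the hypothesis.
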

\end{tcolorbox}

\begin{wrapfigure}[13]{r}{0.56\textwidth}
\vspace{-24pt}
\begin{minipage}{0.56\textwidth}
\begin{algorithm}[H]
\caption{Cost-Optimal Reference Data Mixing}
\label{alg:ref-design}
\begin{algorithmic}[1]
\REQUIRE Data size $N$, feature vectors $\{\varphi(x_i)\}_{i=1}^n$, directions $\{v_\ell\}_{\ell=1}^L$, thresholds $\{\theta_\ell\}_{\ell=1}^L$, costs $\{c_i\}_{i=1}^n$ 
\ENSURE Reference distribution $p_{\mathrm{ref}}^\star$, mixing weight $\eta^\star$
\STATE $J_{\min}\gets+\infty$
\FOR{$\eta$ on a grid in $(0,\eta_{\max})$}
    \STATE Solve LP~\eqref{eq:inner-LP} $\to$ $p_{\mathrm{ref}}(\eta)$, $C^\star(\eta)$
    \IF{LP feasible and $J(\eta)<J_{\min}$}
        \STATE $J_{\min}\gets J(\eta)$, \; $\eta^\star\gets\eta$, \; $p_{\mathrm{ref}}^\star\gets p_{\mathrm{ref}}(\eta)$
    \ENDIF
\ENDFOR
\RETURN $p_{\mathrm{ref}}^\star,\,\eta^\star$
\end{algorithmic}
\end{algorithm}
\end{minipage}
\vspace{-5pt}
\end{wrapfigure}

See proof in Sec.~\ref{subsec:prof_protection-suff}. 
Replacing the equilibrium constraint in~\eqref{eq:bilevel} by the sufficient condition
of Prop.~\ref{prop:protection-suff} eliminates the bilevel structure entirely. Since
$\eta$ appears only in the right-hand side and in the cost factor $(1-\eta)$, fixing
$\eta$ decouples the problem into a linear program over $p_{\mathrm{ref}}$.

\textbf{Algorithm. }The complete design procedure combines an outer search over $\eta$ with an inner linear program (LP)
solve for $p_{\mathrm{ref}}$, as summarized in Alg.~\ref{alg:ref-design}. The search range is $(0,\eta_{\max})$,
where $\eta_{\max}$ upper-bounds mixing weight. Thm.~\ref{thm:global-reinj} provides a conservative range $(0,\frac{1}{(L_p + 2L_w)})$ for $\eta$, but in practice, $\eta_{\max}$ can be set larger since the Lipschitz constants are worst-case. The procedure is performed entirely offline prior to deployment and remains computationally lightweight: the inner problem is a linear program with $n$ variables and $L+1$ constraints, and the outer problem is one-dimensional.

\underline{\textit{Inner problem: optimal reference distribution for fixed $\eta$.}} For any fixed $\eta\in(0,\bar\eta)$, Prop.~\ref{prop:protection-suff}  reduces the design task to the following linear program:
\begin{equation}
\label{eq:inner-LP}
C^\star(\eta)
\;:=\;
\min_{p_{\mathrm{ref}}\in\mathcal{P}(\mathcal{X})}
\sum_{i=1}^n p_{\mathrm{ref},i}\,c_i
~~~~
\text{s.t.}
~~~~
\sum_{i=1}^n p_{\mathrm{ref},i}\,\langle v_\ell,\varphi(x_i)\rangle
\;\ge\;
\frac{\theta_\ell-\eta\min_{i}\langle v_\ell,\varphi(x_i)\rangle}{1-\eta},~
\forall \ell
\end{equation}

The solution $p_{\mathrm{ref}}(\eta)$ is the cheapest reference distribution that guarantees the preservation constraint at equilibrium for the given $\eta$. By standard linear programming theory, an optimal solution always exists with at most $L+1$ strictly positive entries,  meaning the optimal $p_{\mathrm{ref}}$ concentrates on at most  $L+1$ instances. The platform therefore needs to actively source only a small number of distinct instance types, regardless of how large $\mathcal{X}$ is. Each inner problem can be solved efficiently offline using standard LP solvers such as simplex or interior-point methods.

\underline{\textit{Outer problem: selecting the mixing weight.}}
Given  $C^\star(\eta)$, the per-round collection cost is $J(\eta) := (1 - \eta) N  C^\star(\eta)$. The outer problem
\begin{equation}
\label{eq:outer-search}
\eta^\star
\in
\arg\min_{\eta\in(0,\bar\eta)}
(1-\eta)\,N\,C^\star(\eta).
\end{equation}
balances two competing effects. A larger $\eta$ means fewer reference samples per round, reducing the direct cost $(1-\eta)\,N\,C^\star(\eta)$;  but it simultaneously tightens the preservation constraint, since the right-hand side of~\eqref{eq:inner-LP} grows with $\eta$, forcing $C^\star(\eta)$ to increase. The optimal $\eta^\star$ resolves this tension, and since the problem is one-dimensional, it can be found efficiently by a simple grid search.

\section{Experiments}
\label{sec:experiments}
This section aims to empirically illustrate our theoretical results in a realistic training condition, where finite samples, imperfect model training, and a concrete model architecture introduce errors absent from the abstract analysis\footnote{Please see our code at \url{https://github.com/osu-srml/Model-Reward-Coevolution}}.

\textbf{Datasets.} We conduct experiments on two datasets (i) Image dataset: \textbf{CIFAR-10} \cite{cifar10}: It contains 60,000 images from 10 classes $\{\text{airplane}:=0, \text{automobile}:=1, \text{bird}:=2, \text{cat}:=3, \text{deer}:=4, \text{dog}:=5, \text{frog}:=6, \text{horse}:=7, \text{ship}:=8, \text{truck}:=9 \}$. (ii) Text dataset: \textbf{AG~News}~\citep{AGNews}: It contains 127,600 texts from 4 classes $\{\text{world}:=0, \text{sports}:=1, \text{business}:=2, \text{science/technology}:=3 \}$

\textbf{Models.} We experiment with three generative architectures: OT-CFM~\citep{FlowMatching,Improvingflow-based} and DDPM~\citep{ddpm,ImprovedDDpm} for image generation,
and GPT-2~\citep{gpt2} for text generation. We present OT-CFM results in this section; DDPM and GPT-2 results appear in Appx.~\ref{subsec:ddpm} and~\ref{subsec:cpt2}, respectively. All models are adopted from public GitHub repositories and are used with their default hyperparameters. Each model is initialized from its released pretrained checkpoint, keeping architectures and optimization settings unchanged.

\textbf{Feature map $\varphi$ and preference $w_t$.} For CIFAR-10, we use a pretrained ResNet-56~\citep{ResNet} classifier (accuracy $\geq 93\%$). The feature map $\varphi(x) \in \mathbb{R}^{10}$ is the normalized softmax output, where $\varphi(x)_c$ represents the predicted probability of class $c$. The preference vector $w_t \in \mathbb{R}^{10}$ is updated via Eq.~\eqref{eq:pref-update} with projection onto the unit ball.
For AG~News, we use a pretrained BERT~\citep{bert} classifier with 4-dimensional normalized softmax features; see Appx.~\ref{subsec:cpt2} for details.

\textbf{Iterative Retraining.} For OT-CFM on CIFAR-10, each round $t$ proceeds as follows. The current model generates 10,000 candidate images, from which 5,000 are selected via $K$-way Luce curation with reward $r_t(x) = \langle w_t, \varphi(x) \rangle$. The curated samples are mixed with reference data at ratio $\eta$ to form a training set of size 5,000, on which the model is fine-tuned for 1,000 steps. After training, 10,000 fresh samples are generated for evaluation: we compute per-class proportions and update $w_t$. We run each experiment for 20 rounds. We set $\tau = 5.0$, $K = \infty$, and $\beta = 0.01$ for the main experiments. The effect of each parameter is discussed in Appx.~\ref{subsubsec:cfm_para}. Due to the computational cost of training and sampling from generative models, we report a single run for each setting, with the random seed varying across rounds. To verify reproducibility, Appx.~\ref{subsec:ddpm} reports three independent runs under the same setting, confirming that the trajectories are consistent.

\textbf{Different initializations, different outcomes ($\eta=1$).} Under pure synthetic training, the system exhibits initialization-dependent lock-in. When $w_0 =[1,0,0,0,\dots]$ (prefer airplane), curation progressively favors airplane images. The model's airplane proportion rises from $\sim$10\% to $>$95\% over 20 rounds as shown in Fig.~\ref{Fig:main}(a), while all other classes collapse below 5\%. When $w_0 = [0,0,0,1,\dots]$ (prefer cat), the same dynamics drive the model toward cat instead
(Fig.~\ref{Fig:main}(c)).
In both cases, the curation signal is strong enough ($\tau = 5.0$, $K=\infty$) to steer the entire generated distribution. Appx.~\ref{subsubsec:cfm_para} varies $\beta$ and shows that preference adaptation mainly changes the speed of preference drift, while the qualitative lock-in behaviors remain consistent.

\begin{figure}[thbp!]
\begin{center}
\centerline{\includegraphics[width=\textwidth]{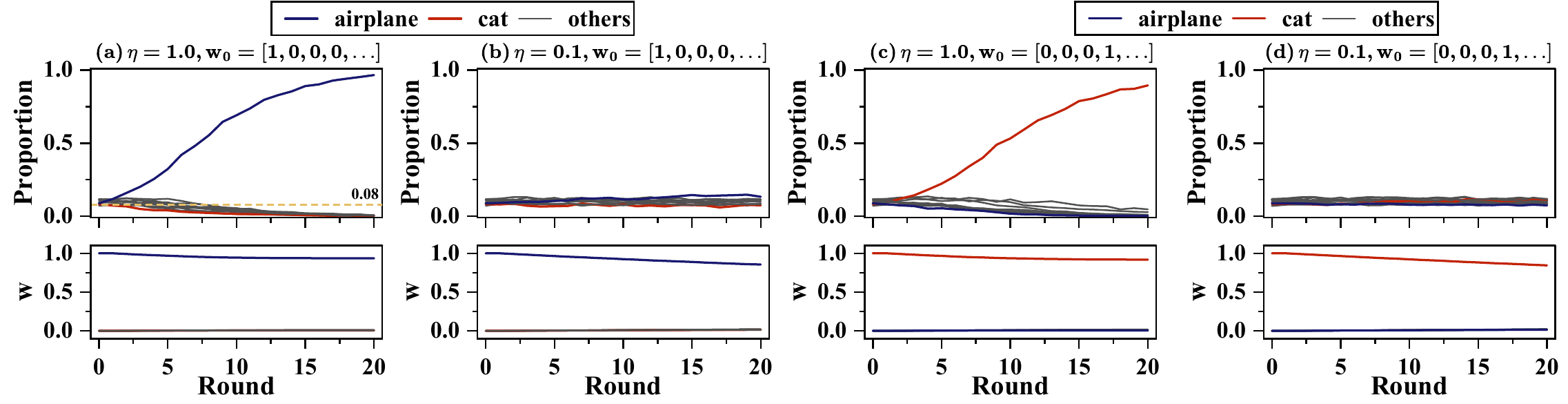}}
\caption{Co-evolution dynamics under different initializations and mixing ratios. \textbf{Top}: per-class generation proportions. \textbf{Bottom}: preference vector $w_t$.(a)~The model locks into airplane, reaching $>$95\% by round 20.
(b)~Uniform reference data stabilizes the distribution near equilibrium.
(c)~A different initialization leads to a different outcome.
(d)~The system converges to a similar equilibrium, independent of initialization.}
\vskip -0.2in
\label{Fig:main}
\end{center}
\end{figure}

\textbf{Enough reference data yields a unique equilibrium ($\eta < 1$).} When 90\% of each training batch comes from the uniform CIFAR-10 training set ($\eta = 0.1$), regardless of whether $w_0 =[1,0,0,0,\dots]$ (Fig.~\ref{Fig:main}(b)) or $w_0 = [0,0,0,1,\dots]$ (Fig.~\ref{Fig:main}(d)), the class proportions remain close to uniform across all 20 rounds. The two runs converge to a similar equilibrium, consistent with the uniqueness guarantee of Thm.~\ref{thm:global-reinj}: for sufficiently small $\eta$, the reference distribution stabilizes the system to a single fixed point independent of initialization.

\begin{figure}[t]
\centering
\begin{minipage}[c]{0.67\textwidth}
\centering
  \captionof{table}{
    Comparison of reference data strategies.
    $C^*$: per-sample cost; $J$: total cost $(1-\eta) \cdot N \cdot C^*$.
    Last four columns: proportion of each protected class at round 20. $\theta_{\ell} = 0.08$ for all protected classes.
  }
  \label{tab:alg1-compare}
  \small
  \begin{tabular}{llcccccc}
    \toprule
    $p_{\mathrm{ref}}$ & $\eta$ & $C^*$ & $J$
      & frog & horse & ship & truck \\
    \midrule
    Uniform & 0.50 & 16.4 & 41,000
      & .067 & .077 & .059 & .067 \\
    Uniform & 0.20 & 16.4 & 65,600
      & .112 & .113 & .097 & .105\\
    \textbf{Ours} & \textbf{0.67} & \textbf{26.81} & \textbf{44,043}
      & \textbf{.084} & \textbf{.113} & \textbf{.088} & \textbf{.092} \\
    \bottomrule
  \end{tabular}
\end{minipage}
\hfill
\begin{minipage}[c]{0.28\textwidth}
    \centering
    \includegraphics[width=\linewidth]{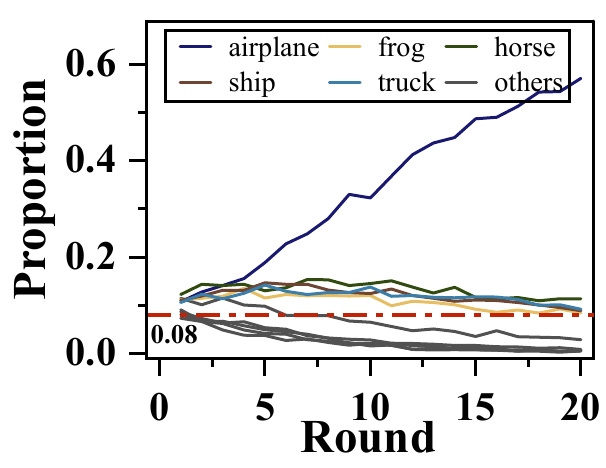}
    \caption{Class proportions under designed reference.}
    \label{Fig:cfm_pre1}
\end{minipage}
\vskip -0.2in
\end{figure}

\textbf{Reference data design.} We evaluate the effectiveness of Alg.~\ref{alg:ref-design} for preserving minority classes under co-evolution. We set the goal is to guarantee that the last four classes (frog, horse, ship, truck) each maintain a proportion of at least $\theta = 0.08$ at equilibrium. We assign heterogeneous costs reflecting class rarity: cheap classes (airplane, automobile: $c=1$) are abundant, while protected classes are expensive (frog: 30, horse: 28, ship: 27, truck: 25). The preference is initialized to $w_0  =[1,0,0,0,\dots]$ (airplane).

Tbl.~\ref{tab:alg1-compare} compares our designed reference with uniform reference and Fig.~\ref{Fig:cfm_pre1} shows the class proportions under  our designed reference. Our designed reference achieves two advantages simultaneously. First, it uses a higher synthetic ratio ($\eta = 0.67$) than either uniform baseline, reducing the total amount of reference data required. Second, at round 20, all protected classes exceed the threshold $\theta_{\ell} = 0.08$, suggesting that the designed reference distribution steers the trajectory toward a feasible equilibrium. The partial LP search results over a uniform grid of 50 points in $\eta \in [0.2, 0.9]$ and results under a different setting are provided in Appx.~\ref{subsubsec:cfm_pref}.

\section{Discussion} Our analysis focuses on a basic co-evolutionary setting in which a single aggregate preference state interacts with a self-consuming generative model through preference-guided curation. The same framework naturally extends to richer forms of preference dynamics. In particular, we consider nonlinear gap-response updates that include saturating and state-dependent gated responses; the model-update branch remains unchanged, while the stability analysis becomes response-dependent. Beyond this, stochastic individual-level adaptation provides a natural micro-level interpretation of the deterministic update studied here, while heterogeneous user populations lead to coupled group-specific preferences interacting through a shared generative distribution. We discuss these extensions and the resulting theoretical questions in Appx.~\ref{sec:discussion}.

\section{Conclusion}
We studied self-consuming generative retraining in which the model distribution and user preferences co-evolve. The system was formulated as a coupled dynamic over $(p_t, w_t)$, with the model updated by reference-mixed user-curated data and the preference vector drifting toward the model's mean feature. Under purely synthetic curation $(\eta = 1)$, the dynamics admit singleton equilibria, and any instance with a sufficient initial advantage locks in. With reference mixing satisfying $\eta(L_p + 2L_w) < 1$, the coupled update becomes a strict contraction with a unique, globally attracting equilibrium. Building on this, we  further formulate equilibrium steering as a bilevel problem and reduce it to a family of linear programs indexed by $\eta$. Experiments empirically validate our analysis and confirm the effectiveness of our design.
Several directions remain open: relaxing the exact observability of $\varphi$ and $w_t$ to partially observed quantities, extending the contraction analysis to nonlinear rewards, and generalizing the dynamics to multi-platform settings with heterogeneous user populations.

\begin{ack}
This material is based upon work supported by the U.S. National Science Foundation under Grant Nos. IIS-2202699, IIS-2416895, IIS-2301599, CMMI-2301601, DMS-2529302, and OIA-2616279. The authors declare no competing interests.
\end{ack}

\small
\bibliographystyle{plainnat}
\bibliography{Reference}


\appendix
\newpage
\appendix
\startcontents[appendix]

\section*{Appendix Contents}
\hypersetup{linkcolor=black}
\printcontents[appendix]{}{1}{
    \setcounter{tocdepth}{2}
}
\hypersetup{linkcolor=blue}
\newpage
\section{Related Work}
\label{sec:relatedwork}

\paragraph{Self-consuming} When generative models are repeatedly trained on their own outputs, the training distribution can drift from the original data distribution. Early empirical studies demonstrated that this loop leads to progressive loss of distributional coverage and tail information \citep{GoMad, AICollapse, CollapseInevitable}, with \citet{amplification, xie2024automating} showing that feedback loops can amplify existing dataset biases or even introduce complex feature distribution shifts. Subsequent theoretical work characterized the conditions under which collapse occurs and the rate at which errors accumulate \citep{Satbility, TheoreticalPerspective, Tails, StatisticalAnalysisLM, RateCollapse}. Several mitigation strategies have been proposed, including retaining real data in the training loop \citep{GoMad, Satbility, CollapseInevitable}, filtering generated samples via verification or latent-space methods \citep{Verification, cai2026stabilizing}, and incorporating self-correction during generation \citep{self_correction}. Recent work has further broadened the study of generative feedback from isolated single-model loops to settings with multiple interacting models, networked generative ecosystems, and strategic interactions among generative platforms \citep{netowrkedestability, market, WhenandHow}. Our work builds on this line by studying a setting where the training loop is further shaped by preference-guided curation and where the preference signal itself co-evolves with the model.

\paragraph{Self-consuming with curated data}
A growing body of work has studied self-consuming retraining with human curation. \citet{Curation} show that, under a fixed reward signal, human curation acts as an implicit preference-optimization mechanism. \citet{AdversarialCuration} further show that when the same curation process is noisy or adversarially manipulated, the retraining loop can become systematically misaligned from genuine user preferences. 
Work on multi-model self-consuming systems shows that even benign human curation need not improve alignment once models interact through recycled data: its effect can be amplified, dampened, or even reversed by cross-model coupling \citep{WhenandHow}. 
More recently, \citet{HeterogeneousHumanCuration} studied self-consuming generative models with heterogeneous human curation, proving convergence and stability results across pure synthetic and reference-mixed retraining regimes. 
Related iterative alignment~\citet{AlignmentCollapse} studies the co-evolution of a policy and a learned reward-model estimate around a fixed underlying human preference. Their Stackelberg decomposition characterizes how gradient-based policy optimization can corrupt the reward-model estimate, and their FPO intervention regularizes the policy update to account for this influence.
However, these formulations still treat the underlying human preference as exogenous. In many deployed settings, model outputs can reshape what users read, compare, and upload, thereby altering the preferences that guide future curation. Our work studies this endogenous feedback loop by allowing model retraining and user preferences to co-evolve. Tbl.~\ref{tab:comparison} summarizes the key differences.

\paragraph{Performative prediction.}
Performative prediction (PP) studies learning problems where the deployed model changes the data distribution it will later face~\citep{perdomo2020performative}. This feedback effect appears in many human-facing ML systems: applicants may strategically modify their features, users may change their participation or engagement behavior, and adversaries may adapt to deployed classifiers. A central difficulty in PP is that the learner is no longer optimizing over a fixed distribution. Previous works mainly focused on the optimization in discriminative PP settings. Several works develop algorithms and convergence guarantees for finding performatively stable solutions. \citet{perdomo2020performative} introduced repeated risk minimization and repeated gradient descent under sensitivity and convexity assumptions. Subsequent work studied stochastic and online variants, including greedy and lazy deployment schemes~\citep{mendler2020stochastic,mendler2022anticipating}. State-dependent performative prediction further generalizes the framework by allowing the next distribution to depend not only on the deployed model but also on the current population state~\citep{brown2022performative,li2022state}. More recent work has moved beyond the strongly convex setting. For example, \citet{mofakhami2023performative} considered neural-network-based PP under structural assumptions, and \citet{li2024stochastic} established convergence to stationary performatively stable points for smooth non-convex losses, though with variance-dependent error terms. Another line of work directly optimizes the performative risk or seeks performatively optimal solutions under additional assumptions on the distribution map~\citep{izzo2021learn,miller2021outside,ray2022decision,liu2023two,zheng2024profl}. Others study PP under weaker convexity assumptions, distributional robustness, privacy, or welfare-oriented objectives~\citep{zhao2022optimizing,zhu2023online,kim2022making,li2024clipped,jia2024distributionally,xue2024distributionally, xie2025sprint, jin2026addressing}. These works show that the choice of solution concept is not only technical: it determines whether the learner aims for equilibrium-like stability, direct performative optimality, or robustness to uncertainty in the distributional response.  

A related perspective on endogenous user responses comes from preference dynamics under personalized recommendations \cite{PreferenceDynamics,zhang2019group} and strategic behavior \cite{zhang2020fair,zhang2022fairness,xie2024non,jin2022incentive,xie2024learning}. \citet{PreferenceDynamics} study how users' preferences evolve through interaction with a recommendation platform. Their setting is closely related on the preference side, but the platform selects from a fixed catalog without generative retraining on user-curated outputs. Our framework additionally closes the self-consuming training loop: model outputs shape user preferences, the evolving preferences guide curation, and the retained outputs are used to retrain the generator. The resulting dynamics involve both the generative distribution and the underlying preference, bringing curation-induced generator collapse and reference-data mixing into the analysis of preference evolution.

\newpage
\section{Complete Instantiation Examples}
\label{sec:example}
This appendix provides concrete examples that instantiate every symbol in our framework across several domains. We give two discrete examples (Appx.~\ref{subsec:ex-discrete}) and one continuous examples (Appx.~\ref{subsec:ex-continuous}).

\subsection{Discrete Examples}
\label{subsec:ex-discrete}
\paragraph{Example A: Image Generation with Object Classes}
Consider an image generation platform with $n = 10$ object categories.
\begin{itemize}[leftmargin=*,itemsep=0pt,topsep=0pt]
    \item \textbf{Sample space:} $\mathcal{X} = \{x_1, \dots, x_{10}\}$, representing the types: airplane, automobile, bird, cat, deer, dog, frog, horse, ship, truck.
    \item \textbf{Distribution space:} $\mathcal{P}(\mathcal{X})$ is the probability simplex over 10 types.
    \item \textbf{Feature map:} $\varphi: \mathcal{X} \to \mathbb{R}^{10}$ is the softmax output of a pretrained classifier, normalized to $\|\varphi(x_i)\| = 1$.
\end{itemize}

\textbf{Concrete feature vectors.}
After normalization, the feature vectors for three representative types are:
\begin{align*}
    \varphi(x_{\text{airplane}}) &= \frac{(0.95,\; 0.02,\; 0.01,\; 0.00,\; 0.00,\; 0.00,\; 0.00,\; 0.01,\; 0.01,\; 0.00)}{\|\cdot\|} \\
    &\approx (0.999,\; 0.021,\; 0.011,\; 0.000,\; 0.000,\; 0.000,\; 0.000,\; 0.011,\; 0.011,\; 0.000), \\[4pt]
    \varphi(x_{\text{cat}}) &= \frac{(0.01,\; 0.01,\; 0.03,\; 0.90,\; 0.01,\; 0.02,\; 0.00,\; 0.01,\; 0.00,\; 0.01)}{\|\cdot\|} \\
    &\approx (0.011,\; 0.011,\; 0.033,\; 0.998,\; 0.011,\; 0.022,\; 0.000,\; 0.011,\; 0.000,\; 0.011), \\[4pt]
    \varphi(x_{\text{frog}}) &= \frac{(0.00,\; 0.00,\; 0.02,\; 0.01,\; 0.02,\; 0.01,\; 0.92,\; 0.01,\; 0.00,\; 0.01)}{\|\cdot\|} \\
    &\approx (0.000,\; 0.000,\; 0.022,\; 0.011,\; 0.022,\; 0.011,\; 0.999,\; 0.011,\; 0.000,\; 0.011).
\end{align*}
Each vector is dominated by one coordinate, reflecting high classifier confidence.
 
\textbf{Initial distribution.}
$p_0 = (0.10,\; 0.10,\; \dots,\; 0.10)$ (uniform across all types).
 
\textbf{User preference.}
Suppose the user prefers airplane and ship content. The initial preference vector is
$$
w_0 = \frac{(0.6,\; 0,\; 0,\; 0,\; 0,\; 0,\; 0,\; 0,\; 0.4,\; 0)}{\|(0.6,\; 0,\; 0,\; 0,\; 0,\; 0,\; 0,\; 0,\; 0.4,\; 0)\|}
\approx (0.832,\; 0,\; 0,\; 0,\; 0,\; 0,\; 0,\; 0,\; 0.555,\; 0),
$$
with $\|w_0\| = 1$.
 
\textbf{Reward computation.}
The reward $r_0(x) = \langle w_0, \varphi(x)\rangle$ at round $t = 0$:
\begin{align*}
    r_0(x_{\text{airplane}}) &= 0.832 \times 0.999 + 0.555 \times 0.011 + \cdots \approx 0.837, \\
    r_0(x_{\text{cat}}) &= 0.832 \times 0.011 + 0.555 \times 0.000 + \cdots \approx 0.009, \\
    r_0(x_{\text{frog}}) &= 0.832 \times 0.000 + 0.555 \times 0.000 + \cdots \approx 0.000.
\end{align*}
Airplane receives reward $\approx 0.84$, while cat and frog receive near-zero reward. With curation strength $\tau > 0$, the $K$-way Luce rule strongly favors airplane. Over successive rounds, $p_t$ concentrates on airplane, $w_t$ drifts further toward the airplane direction, and frog and horse vanish (Thm.~\ref{thm:pure-lockin}).
 
\textbf{Protection.}
To prevent frog and horse from disappearing, the designer sets $L = 2$:
\begin{align*}
    v_1 &= (\underbrace{0}_{0},\;\underbrace{0}_{1},\;\underbrace{0}_{2},\;\underbrace{0}_{43},\;\underbrace{0}_{4},\;\underbrace{0}_{5},\;\underbrace{1}_{6},\;\underbrace{0}_{7},\;\underbrace{0}_{8},\;\underbrace{0}_{9}), \quad \theta_1 = 0.05 \quad \text{(frog $\geq 5\%$)}, \\
    v_2 &= (\underbrace{0}_{0},\;\underbrace{0}_{1},\;\underbrace{0}_{2},\;\underbrace{0}_{6},\;\underbrace{0}_{4},\;\underbrace{0}_{5},\;\underbrace{0}_{6},\;\underbrace{1}_{7},\;\underbrace{0}_{8},\;\underbrace{0}_{9}), \quad \theta_2 = 0.05 \quad \text{(horse $\geq 5\%$)}.
\end{align*}
The constraint $\langle v_1, \bar\varphi(p^\star)\rangle \geq 0.05$ ensures that frog-type images constitute at least $5\%$ of the equilibrium output in the feature-weighted sense.
 
\textbf{Cost.}
If real-world images are equally easy to collect across all types, we set $c_i = 1$ uniformly. In practice, some types are naturally rarer. For instance, high-quality frog images may be scarce compared to airplane or cat images, making them more expensive to collect. Setting $c_{\text{frog}} = 5$ reflects this scarcity.

\paragraph{Example B: Text Generation with Writing Style}

Consider a text generation platform where outputs are categorized by writing style.
\begin{itemize}[leftmargin=*,itemsep=0pt,topsep=0pt]
    \item \textbf{Sample space:} $\mathcal{X} = \{x_1, \dots, x_5\}$, representing five style types: formal, casual, poetic, technical, humorous.
    \item \textbf{Distribution space:} $\mathcal{P}(\mathcal{X}) = \Delta_5$.
    \item \textbf{Feature map:} $\varphi: \mathcal{X} \to \mathbb{R}^5$ is the softmax output of a fine-tuned style classifier, normalized to $\|\varphi(x_i)\| = 1$.
\end{itemize}
 
\textbf{Concrete feature vectors.}
\begin{align*}
    \varphi(x_{\text{casual}}) &\approx (0.033,\; 0.997,\; 0.033,\; 0.022,\; 0.055) / \|\cdot\|, \\
    \varphi(x_{\text{poetic}}) &\approx (0.055,\; 0.033,\; 0.993,\; 0.022,\; 0.055) / \|\cdot\|, \\
    \varphi(x_{\text{technical}}) &\approx (0.044,\; 0.022,\; 0.011,\; 0.995,\; 0.011) / \|\cdot\|.
\end{align*}
 
\textbf{User preference and reward.}
Most users prefer casual content:
$$
w_0 = \frac{(0.1,\; 0.8,\; 0.0,\; 0.0,\; 0.1)}{\|(0.1,\; 0.8,\; 0.0,\; 0.0,\; 0.1)\|}
\approx (0.124,\; 0.990,\; 0.000,\; 0.000,\; 0.124).
$$
Then:
\begin{align*}
    r_0(x_{\text{casual}}) &= \langle w_0, \varphi(x_{\text{casual}})\rangle \approx 0.99, \\
    r_0(x_{\text{poetic}}) &= \langle w_0, \varphi(x_{\text{poetic}})\rangle \approx 0.04, \\
    r_0(x_{\text{technical}}) &= \langle w_0, \varphi(x_{\text{technical}})\rangle \approx 0.02.
\end{align*}
Casual text dominates curation. Over rounds, poetic and technical writing vanish.
 
\textbf{Protection.}
Preserve poetic and technical writing:
\begin{align*}
     v_1 &= (0,\; 0,\; 1,\; 0,\; 0), \quad \theta_1 = 0.08 \quad \text{(poetic $\geq 8\%$)}, \\
    v_2 &= (0,\; 0,\; 0,\; 1,\; 0), \quad \theta_2 = 0.05 \quad \text{(technical $\geq 5\%$)}.
\end{align*}
 
\textbf{Cost.}
$c_{\text{poetic}} = 3$ (high-quality poetic text requires expert curation), $c_{\text{casual}} = 1$ (abundant online).

\subsection{Continuous Examples}
\label{subsec:ex-continuous}
\paragraph{Example C: Image Generation with CLIP Embedding}
Consider an image generation platform where outputs live in a continuous space.
\begin{itemize}[leftmargin=*,itemsep=0pt,topsep=0pt]
    \item \textbf{Sample space:} $\mathcal{X} \subset \mathbb{R}^m$ is a compact subset of image space.
    \item \textbf{Distribution space:} $\mathcal{P}(\mathcal{X})$ is the set of all Borel probability measures on $\mathcal{X}$.
    \item \textbf{Feature map:} $\varphi: \mathcal{X} \to \mathbb{R}^{512}$ is a pretrained CLIP image encoder, with $\|\varphi(x)\| = 1$. Feature dimension $d = 512$. Semantically similar images have nearby embeddings.
\end{itemize}
 
\textbf{Concrete feature vectors.}
Three representative images and their (truncated) CLIP embeddings:
\begin{align*}
    \varphi(x_{\text{sunset}}) &= (0.12,\; -0.08,\; 0.43,\; \dots)_{512}, \quad \text{(vibrant, warm tones)} \\
    \varphi(x_{\text{portrait}}) &= (-0.05,\; 0.31,\; 0.15,\; \dots)_{512}, \quad \text{(human subject, neutral)} \\
    \varphi(x_{\text{mono}}) &= (-0.22,\; -0.14,\; -0.35,\; \dots)_{512}, \quad \text{(dark, monochrome)}
\end{align*}
each normalized to $\|\varphi\| = 1$.
 
\textbf{User preference and reward.}
Users consistently select vibrant, colorful images. After 5 rounds, the preference vector has drifted to
$$
w_5 = (0.15,\; -0.06,\; 0.38,\; \dots)_{512}, \quad \|w_5\| = 0.73.
$$
The reward for each image:
\begin{align*}
    r_5(x_{\text{sunset}}) &= \langle w_5, \varphi(x_{\text{sunset}})\rangle \approx 0.71, \\
    r_5(x_{\text{portrait}}) &= \langle w_5, \varphi(x_{\text{portrait}})\rangle \approx 0.18, \\
    r_5(x_{\text{mono}}) &= \langle w_5, \varphi(x_{\text{mono}})\rangle \approx -0.41.
\end{align*}
Sunset images receive high reward; monochrome images receive negative reward. Curation suppresses monochrome content, $p_t$ concentrates on vibrant images, $w_t$ drifts further from the monochrome direction.
 
\textbf{Mean feature vector.}
The equilibrium mean feature vector is
$$
\bar\varphi(p^\star) = \int_{\mathcal{X}} \varphi(x)\, p^\star(dx) \in \mathbb{R}^{512}.
$$
As $p_t$ concentrates on vibrant images, $\bar\varphi(p_t)$ moves toward the vibrant cluster in CLIP space and away from the monochrome direction.
 
\textbf{Protection.}
To preserve monochrome imagery, the designer collects 500 reference monochrome images and computes their mean CLIP embedding:
\[
v_1 = \frac{\frac{1}{500}\sum_{j=1}^{500} \varphi(x_j^{\text{mono}})}{\left\|\frac{1}{500}\sum_{j=1}^{500} \varphi(x_j^{\text{mono}})\right\|} \in \mathbb{R}^{512}, \qquad \theta_1 = 0.05.
\]
The constraint $\langle v_1, \bar\varphi(p^\star)\rangle \geq 0.05$ ensures the equilibrium retains at least $0.05$ projection onto the monochrome direction in CLIP space.
 
\textbf{Cost.}
$c(x)$ varies across CLIP space: reference images in the monochrome region are expensive to source (niche artistic photography), while vibrant natural images are cheap (abundant on the web).
\section{Proofs}
\label{sec:prof}
\subsection{Proof of Explicit Retraining Update}
\label{subsec:prof_explicit-update}
\begin{tcolorbox}[graythm]
\begin{lemma}[Explicit retraining update]
\label{lem:explicit-update}
    Let $p_{t+1}$ be defined by \eqref{eq:retrain}. Then, 
    \begin{equation}
        p_{t+1}(x)     =   (1-\eta)p_{\mathrm{ref}}(x)   +   \eta p_t(x) H^K_{p_t,w_t}(x),
    \end{equation}   
        where
        $$    H^K_{p_t,w_t}(x):=\mathbb{E}_{x_1,\dots,x_{K-1}\sim p_t}\left[\frac{K e^{\tau \cdot r_{w_t}(x)}}{e^{\tau \cdot r_{w_t}(x)}+\sum_{i=1}^{K-1}e^{\tau \cdot  r_{w_t}(x_i)}}\right]. $$  
        Moreover, for every fixed $x\in \mathcal{X}$,  $$  H^K_{p_t,w_t}(x)\xrightarrow{K\to\infty}  \frac{e^{\tau r_t(x)}}{\mathbb{E}_{z\sim p_t}[e^{\tau r_t(z)}]}.  $$
  
\end{lemma}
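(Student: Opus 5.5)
The plan has three parts: reduce the maximization to a cross-entropy problem, compute the law of the curated sample, and take the large-$K$ limit.

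\textbf{Reduction to cross-entropy.} Let $q_t(x) := \Pr(\hat x = x)$ be the marginal law of the curated sample when $x_1,\dots,x_K\sim p_t$ i.i.d. and $\hat x\sim \mathrm{Luce}_K$. The objective in \eqref{eq:retrain} is
\[
\sum_{x}\bigl[(1-\eta)p_{\mathrm{ref}}(x)+\eta q_t(x)\bigr]\log p(x) = \sum_x m(x)\log p(x),
\]
with $m := (1-\eta)p_{\mathrm{ref}}+\eta q_t\in\mathcal P(\mathcal X)$. By Gibbs' inequality, $\sum_x m\log p \le \sum_x m\log m$, with equality iff $p=m$. Equivalently, the objective equals $-H(m)-\mathrm{KL}(m\|p)$. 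So the maximizer over $\mathcal P(\mathcal X)$ is unique and equals $m$. It then remains to show $q_t(x)=p_t(x)H^K_{p_t,w_t}(x)$.

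\textbf{Law of the curated sample.} Write $a(z) := e^{\tau r_{w_t}(z)}$. I will use exchangeability of the $K$ i.i.d. draws together with the symmetry of the Luce rule. First,
\[
q_t(x) = \sum_{k=1}^K \mathbb E\Bigl[\mathbf 1\{x_k=x\}\,\frac{a(x_k)}{\sum_j a(x_j)}\Bigr].
\]
Each of the $K$ terms is identical by symmetry, so this equals $K\,\mathbb E\bigl[\mathbf 1\{x_1=x\}\,a(x_1)/\sum_j a(x_j)\bigr]$. Conditioning on $x_1=x$, which has probability $p_t(x)$, and using independence of $x_2,\dots,x_K$ gives
\[
q_t(x) = p_t(x)\,\mathbb E_{x_2,\dots,x_K\sim p_t}\Bigl[\frac{K a(x)}{a(x)+\sum_{i=2}^K a(x_i)}\Bigr] = p_t(x)H^K_{p_t,w_t}(x).
\]
Ties, i.e.\ repeated values among the draws, need no special care, because the Luce rule is stated per index. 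This proves the update formula.

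\textbf{Large-$K$ limit.} Rewrite the integrand as
\[
\frac{a(x)}{\tfrac{a(x)}{K}+\tfrac{K-1}{K}\,S_{K-1}}, \qquad S_{K-1} := \frac{1}{K-1}\sum_{i=1}^{K-1}a(x_i).
\]
By the strong law of large numbers, $S_{K-1}\to \mathbb E_{z\sim p_t}[a(z)]$ almost surely, so the integrand converges almost surely to $a(x)/\mathbb E_{p_t}[a]$. Since $r\in[-1,1]$, we have $a\in[e^{-\tau},e^{\tau}]$, and the integrand is bounded by $a(x)/\bigl(\tfrac{K-1}{K}e^{-\tau}\bigr)\le 2e^{2\tau}$. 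Bounded convergence then yields convergence of the expectation.

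The only delicate point is this last step: one must note that the SLLN applies even though each $K$ uses a fresh sample. This can be avoided by realizing all $K$ on a single i.i.d. sequence, which leaves every expectation unchanged because each $H^K$ depends only on the law of the draws. Alternatively, Chebyshev gives convergence in probability, and boundedness again finishes.
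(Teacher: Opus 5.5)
Your proposal is correct and follows essentially the same route as the paper: reduce to cross-entropy maximization with maximizer $(1-\eta)p_{\mathrm{ref}}+\eta q_t$, compute $q_t$ by exchangeability (fix $x$ in one slot and multiply by $K$), and obtain the large-$K$ limit from the SLLN plus bounded convergence with the bound $2e^{2\tau}$. Your differences are minor rigor improvements: you treat $\eta=1$ and $\eta<1$ in one stroke, you prove the maximizer is unique via Gibbs' inequality, and you realize all $K$ on a single i.i.d.\ sequence so the SLLN applies across $K$, whereas the paper splits the two $\eta$ cases and invokes the SLLN without comment.
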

\end{tcolorbox}
\begin{proof}
First, by minimization of the cross-entropy, we know that for any distribution $q \in \mathcal{P}(\mathcal{X})$, $\arg\max_{p\in\mathcal{P}(\mathcal{X})}\mathbb{E}_{x\sim q}[\log p(x)] = q$

\paragraph{Case 1: $\eta=1$.}
When $\eta=1$, the objective in \eqref{eq:retrain} reduces to
$$
p_{t+1}
=
\arg\max_{p\in\mathcal{P}(\mathcal{X})}
\mathbb{E}_{\substack{
x_1,\dots,x_K\sim p_t\\
\hat x\sim \mathrm{Luce}_K(x_1,\dots,x_K)}}
[\log p(\hat x)]
$$
Let $p_t^{\mathrm{cur}}$ denote the distribution of the retained sample $\hat x$. By the population MLE argument above, $p_{t+1}=p_t^{\mathrm{cur}}$

To characterize $p_t^{\mathrm{cur}}$, fix round $t$ and sample$X_1,\dots,X_K \overset{\mathrm{i.i.d.}}{\sim} p_t$. Let $i_K\in[K]$ be selected according to
$$
\Pr(i_K=i\mid x_1,\dots,x_K)
=
\frac{e^{\tau r_t(x_i)}}{\sum_{j=1}^K e^{\tau r_t(x_j)}}.
$$
Define the retained sample by $\hat X:=X_{i_K}$. Then $p_t^{\mathrm{cur}}$ is the law of $\hat X$. By symmetry of the $K$ candidates,
$$
\begin{aligned}
p_{t+1}(x)=p_t^{\mathrm{cur}}(x)
&=
\sum_{i=1}^{K}\int_{y_j,\;j\neq i}
p_t(y_1,\dots,y_{i-1},x,y_{i+1},\dots,y_K)\,
\Pr(i_K=i\mid x,y_j,\;j\neq i)
\prod_{j\neq i}dy_j \\
&=
K\int_{y_1,\dots,y_{K-1}}
p_t(y_1,\dots,y_{K-1},x)\,
\Pr(i_K=K\mid y_1,\dots,y_{K-1},x)\,
dy_1\cdots dy_{K-1} \\
&=
p_t(x)\,K\int_{y_1,\dots,y_{K-1}}
\frac{e^{\tau r_t(x)}}{e^{\tau r_t(x)}+\sum_{i=1}^{K-1}e^{\tau r_t(y_i)}}
\prod_{i=1}^{K-1}p_t(y_i)\,dy_1\cdots dy_{K-1}.
\end{aligned}
$$
Therefore,
$$
p_{t+1}(x)=p_t(x)\,H^K_{p_t,w_t}(x),
$$
where
$$
\begin{aligned}
H^K_{p_t,w_t}(x)
&:=
K\int_{y_1,\dots,y_{K-1}}
\frac{e^{\tau r_t(x)}}{e^{\tau r_t(x)}+\sum_{i=1}^{K-1}e^{\tau r_t(y_i)}}
\prod_{i=1}^{K-1}p_t(y_i)\,dy_1\cdots dy_{K-1} \\
&=
\mathbb{E}_{X_1,\dots,X_{K-1}\overset{\mathrm{i.i.d.}}{\sim}p_t} \left[
\frac{K e^{\tau r_t(x)}}{e^{\tau r_t(x)}+\sum_{i=1}^{K-1}e^{\tau r_t(X_i)}}
\right].
\end{aligned}
$$

Now consider the large-$K$ limit. Since $r_t(x)\in[-1,1]$, we have
$$
e^{-\tau}\le e^{\tau r_t(x)}\le e^\tau
\qquad\text{for all }x\in\mathcal{X}.
$$
Rewrite the denominator as
$$
\frac{e^{\tau r_t(x)}}{K}
+
\frac{K-1}{K}\cdot \frac1{K-1}\sum_{i=1}^{K-1}e^{\tau r_t(X_i)}.
$$
By the strong law of large numbers,
$$
\frac1{K-1}\sum_{i=1}^{K-1}e^{\tau r_t(X_i)}
\xrightarrow[]{\mathrm{a.s.}}
\mathbb{E}_{z\sim p_t}[e^{\tau r_t(z)}].
$$
Hence, for every fixed $x\in\mathcal{X}$,
$$
\frac{e^{\tau r_t(x)}}
{\frac{e^{\tau r_t(x)}}{K}
+\frac{K-1}{K}\cdot \frac1{K-1}\sum_{i=1}^{K-1}e^{\tau r_t(X_i)}}
\xrightarrow[]{\mathrm{a.s.}}
\frac{e^{\tau r_t(x)}}{\mathbb{E}_{z\sim p_t}[e^{\tau r_t(z)}]}.
$$
Therefore,
$$
H^K_{p_t,w_t}(x)\xrightarrow{K\to\infty}
\frac{e^{\tau r_t(x)}}{\mathbb{E}_{z\sim p_t}[e^{\tau r_t(z)}]}.
$$
Consequently, in the large-$K$ limit,
$$
p_{t+1}(x)
\xrightarrow{K\to\infty}
p_t(x)\frac{e^{\tau r_t(x)}}{\mathbb{E}_{z\sim p_t}[e^{\tau r_t(z)}]}
$$

Moreover, for every $K\ge2$,
$$
0\le H^K_{p_t,w_t}(x)
\le
\frac{K e^\tau}{e^{-\tau}+(K-1)e^{-\tau}}
\le 2e^{2\tau}.
$$
Hence the family $\{H^K_{p_t,w_t}(x)\}_{K\ge2}$ is uniformly bounded. By dominated convergence,
$$
\int_{\mathcal{X}}\left|
H^K_{p_t,w_t}(x)-\frac{e^{\tau r_t(x)}}{\mathbb{E}_{z\sim p_t}[e^{\tau r_t(z)}]}
\right|p_t(dx)\to 0.
$$
This proves the total-variation convergence of the curated distribution to its large-$K$ exponential-reweighting limit.

\paragraph{Case 2: $\eta<1$.}
When $\eta<1$, the objective in \eqref{eq:retrain} can be written as
$$
(1-\eta)\mathbb{E}_{x\sim p_{\mathrm{ref}}}[\log p(x)]
+
\eta \mathbb{E}_{x\sim p_t^{\mathrm{cur}}}[\log p(x)]
=
\mathbb{E}_{x\sim q_t}[\log p(x)],
$$
where
$$
q_t:=(1-\eta)p_{\mathrm{ref}}+\eta\,p_t^{\mathrm{cur}}.
$$
Applying the same population MLE argument yields
$$
p_{t+1}=q_t=(1-\eta)p_{\mathrm{ref}}+\eta\,p_t^{\mathrm{cur}}.
$$

For finite $K$, since
$$
p_t^{\mathrm{cur}}(x)=p_t(x)\,H^K_{p_t,w_t}(x),
$$
we obtain
$$
\begin{aligned}
p_{t+1}(x)
&=
(1-\eta)p_{\mathrm{ref}}(x)+\eta\,p_t^{\mathrm{cur}}(x) \\
&=
(1-\eta)p_{\mathrm{ref}}(x)
+
\eta\,p_t(x)\,H^K_{p_t,w_t}(x) \\
&=
(1-\eta)p_{\mathrm{ref}}(x)
+
\eta\,p_t(x)\,
\mathbb{E}_{X_1,\dots,X_{K-1}\overset{\mathrm{i.i.d.}}{\sim}p_t} \left[
\frac{K e^{\tau r_t(x)}}{e^{\tau r_t(x)}+\sum_{i=1}^{K-1}e^{\tau r_t(X_i)}}
\right].
\end{aligned}
$$

As $K\to\infty$,
$$
H^K_{p_t,w_t}(x)\to
\frac{e^{\tau r_t(x)}}{\mathbb{E}_{z\sim p_t}[e^{\tau r_t(z)}]},
$$
and therefore
$$
p_{t+1}(x)
\xrightarrow{K\to\infty}
(1-\eta)p_{\mathrm{ref}}(x)
+
\eta\,p_t(x)\frac{e^{\tau r_t(x)}}{\mathbb{E}_{z\sim p_t}[e^{\tau r_t(z)}]}.
$$
\end{proof}

\subsection{Proof of Propostion~\ref{prop:pure-basic}}
\label{subsec:prof_pure-basic}
\begin{tcolorbox}[graythm]
\PureBasic*
\end{tcolorbox}

\begin{proof}
Write $$a(x) := e^{\tau r_w(x)}$$
For a fixed  $X_1, \dots, X_{K-1} \overset{\text{iid}}{\sim} p$, let $$S:= \sum_{l=1}^{K-1} a(X_l) \ge 0$$
Since $a(X_\ell)>0$ and $K\ge 2$, we have $S>0$

So $$
H_{p,w}^K(x)=
\mathbb{E}_{X_1,\dots,X_{K-1}\overset{\mathrm{i.i.d.}}{\sim}p} \left[
\frac{K e^{\tau r_w(x)}}{e^{\tau r_w(x)}+\sum_{i=1}^{K-1}e^{\tau r_w(X_i)}}
\right]
=\mathbb{E}_{X_1,\dots,X_{K-1}\overset{\mathrm{i.i.d.}}{\sim}p} \frac{Ka(x)}{a(x) + S}$$
We prove the three claims in order.

\paragraph{Part 1: strict monotonicity.}
Define
$$
f_S(a):=\frac{Ka}{a+S}
$$
Then
$$
f_S'(a)=\frac{K(a+S)-Ka}{(a+S)^2}
=\frac{KS}{(a+S)^2}>0
$$
Hence $f_S(a)$ is strictly increasing in $a\in(0,\infty)$.

On the other hand,
$$
a'(x)
=
\frac{d}{dr_w(x)} e^{\tau r_w(x)}
=
\tau e^{\tau r_w(x)}
>0
$$
so $a(x)$ is strictly increasing in $r_w(x)$.

Therefore, for every fixed sample path,
$$
r_w(x_i)>r_w(x_j)
\quad\Longrightarrow\quad
a(x_i)>a(x_j)
\quad\Longrightarrow\quad
f_S(a(x_i))>f_S(a(x_j))
$$
That is,
$$
r_w(x_i)>r_w(x_j)
\quad\Longrightarrow\quad
\frac{K e^{\tau r_w(x_i)}}{e^{\tau r_w(x_i)}+S}
>
\frac{K e^{\tau r_w(x_j)}}{e^{\tau r_w(x_j)}+S}
$$
Taking expectation over $X_1,\dots,X_{K-1}$ yields
$$
r_w(x_i)>r_w(x_j)
\quad\Longrightarrow\quad
H^K_{p,w}(x_i)>H^K_{p,w}(x_j)
$$

Conversely, since $H^K_{p,w}(x)$ is a strictly increasing function of $r_w(x)$, it also reflects
strict order, and hence
$$
r_w(x_i)>r_w(x_j)
\quad\Longleftrightarrow\quad
H^K_{p,w}(x_i)>H^K_{p,w}(x_j)
$$

For $K=\infty$, we have
$$
H_{p,w}^\infty(x)=\frac{e^{\tau r_w(x)}}{\mathbb{E}_{z\sim p}[e^{\tau r_w(z)}]}
$$
where ${\mathbb{E}_{z\sim p}[e^{\tau r_w(z)}]}$ is independent of $x$. Thus strict monotonicity is immediate.
 
\paragraph{Part 2: normalization.}
For sample$X_1,\dots,X_K \overset{\mathrm{i.i.d.}}{\sim} p_t$, let $i_K\in[K]$ be selected according to
$$
\Pr(i_K=i\mid x_1,\dots,x_K)
=
\frac{e^{\tau r_t(x_i)}}{\sum_{j=1}^K e^{\tau r_t(x_j)}}
$$
Define the retained sample by $\hat X:=X_{i_K}$.
By the derivation in Lemma~\ref{lem:explicit-update},
the law of $\hat X$ is precisely
$$
\Pr(\hat X=x_i)=p(x_i)H^K_{p,w}(x_i)
$$
Therefore,
$$
\sum_{i=1}^n p(x_i)H^K_{p,w}(x_i)
=
\sum_{i=1}^n \Pr(\hat X=x_i)
=
1
$$

For $K=\infty$, we have
$$
H_{p,w}^\infty(x)=\frac{e^{\tau r_w(x)}}{\mathbb{E}_{z\sim p}[e^{\tau r_w(z)}]}
$$
so
$$
\sum_{i=1}^n p(x_i)H_{p,w}^\infty(x_i)
=
\frac{\sum_{i=1}^n p(x_i){e^{\tau r_w(x_i)}})}{\mathbb{E}_{z\sim p}[e^{\tau r_w(z)}]}
=
1
$$
 
\paragraph{Part 3: explicit odds contraction.}
Write
$$
a(x_i):=e^{\tau r_w(x_i)}
\qquad
a(x_j):=e^{\tau r_w(x_j)}
$$
If $r_w(x_i)-r_w(x_j)\ge r_\Delta>0$, then
$$
a(x_j) \le e^{-\tau r_\Delta}a(x_i)
$$
and therefore
$$
a(x_i)-a(x_j) \ge (1-e^{-\tau r_\Delta})a(x_i)
$$

Let
$$S:= \sum_{l=1}^{K-1} a(X_l) \ge 0$$
For any $S\ge 0$,
$$
\frac{a(x_j)/(a(x_j)+S)}{a(x_i)/(a(x_i)+S)}
=
\frac{a(x_j)(a(x_i)+S)}{a(x_i)(a(x_j)+S)}
=
1-\frac{S \cdot (a(x_i)-a(x_j))}{a(x_i)(a(x_j)+S)}
$$
So,
\begin{align}
    \frac{H^K_{p,w}(x_j)}{H^K_{p,w}(x_i)}&=
\frac{\mathbb{E}_{X_1,\dots,X_{K-1}\overset{\mathrm{i.i.d.}}{\sim}p} \left[
\frac{K e^{\tau r_w(x_j)}}{e^{\tau r_w(x_j)}+\sum_{i=1}^{K-1}e^{\tau r_w(X_i)}}\right]}
{\mathbb{E}_{X_1,\dots,X_{K-1}\overset{\mathrm{i.i.d.}}{\sim}p} \left[
\frac{K e^{\tau r_w(x_i)}}{e^{\tau r_w(x_i)}+\sum_{i=1}^{K-1}e^{\tau r_w(X_i)}}\right]} \notag\\
&=1-
\mathbb{E}_{X_1,\dots,X_{K-1}\overset{\mathrm{i.i.d.}}{\sim}p} \frac{S \cdot (a(x_i)-a(x_j))}{a(x_i)(a(x_j)+S)} 
\label{eq:ratio-identity-clean}
\end{align}

Since $r_w(x)\in[-1,1]$, each $a(x)\in[e^{-\tau},e^{\tau}]$.

Using $a(x_i)-a(x_j) \ge (1-e^{-\tau r_\Delta})a(x_i)
$ and $a(x_j)\le e^\tau$, we obtain
$$
\frac{S \cdot (a(x_i)-a(x_j))}{a(x_i)(a(x_j)+S)}
\ge
(1-e^{-\tau r_\Delta})\frac{S}{e^\tau+S}
$$
Since $a(X_\ell)\ge e^{-\tau}$ so
$$
S \ge (K-1)e^{-\tau}
$$
Because the map $s\mapsto \frac{s}{(e^\tau+s)}$ is increasing,
$$
\mathbb{E}_{X_1,\dots,X_{K-1}\overset{\mathrm{i.i.d.}}{\sim}p} \frac{S}{e^\tau+S} 
\ge
\frac{(K-1)e^{-\tau}}{e^\tau+(K-1)e^{-\tau}}
=
\frac{K-1}{e^{2\tau}+K-1}
$$
Substituting this into \eqref{eq:ratio-identity-clean} yields
$$
\frac{H^K_{p,w}(x_j)}{H^K_{p,w}(x_i)}
\le
1-(1-e^{-\tau r_\Delta})\frac{K-1}{e^{2\tau}+K-1}
=: \lambda_K(r_\Delta)
$$

For $K=\infty$, the claim is immediate:
$$
\frac{H_{p,w}^\infty(x_j)}{H_{p,w}^\infty(x_i)}
=
\frac{\frac{e^{\tau r_w(x_j)}}{\mathbb{E}_{z\sim p}[e^{\tau r_w(z)}]}}{\frac{e^{\tau r_w(x_i)}}{\mathbb{E}_{z\sim p}[e^{\tau r_w(z)}]}}
=
\frac{a(x_j)}{a(x_i)}
\le e^{-\tau r_\Delta}
$$

Finally,
$$
\lambda_K(r_\Delta)\xrightarrow{K\to\infty} e^{-\tau r_\Delta}
$$

And for $K\ge 2$, $r_\Delta>0$,
$$
0<\lambda_K(r_\Delta)<1.
$$
This completes the proof.
\end{proof}

\subsection{Proof of Theorem~\ref{thm:pure-structure}}
\label{subsec:prof_pure-structure}
\begin{tcolorbox}[graythm]
\PureStructure*
\end{tcolorbox}
\begin{proof}
We prove the two claims in turn.

\paragraph{Part 1: Data fixed-point characterization.}
\emph{($\Rightarrow$)} Suppose $p$ is a fixed point of the data update at preference $w$, i.e., for all $x\in\mathcal X$,
$$
p(x)H^K_{p,w}(x)=p(x)
$$
Then for every $x$ with $p(x)>0$,
$$
H^K_{p,w}(x)=1
$$
Now let $x,x'\in\mathcal X$ satisfy $p(x)>0$ and $p(x')>0$. If
$$
r_w(x)\neq r_w(x')
$$
then by Proposition~\ref{prop:pure-basic} (strict monotonicity),
$$
r_w(x)>r_w(x')
\quad\Longrightarrow\quad
H^K_{p,w}(x)>H^K_{p,w}(x')
$$
or vice versa.

But both $x$ and $x'$ lie in the support of $p$, so we already showed
$$
H^K_{p,w}(x)=H^K_{p,w}(x')=1
$$
a contradiction. Therefore,
$$
p(x)>0,\ p(x')>0
\quad\Longrightarrow\quad
r_w(x)=r_w(x')
$$

\emph{($\Leftarrow$)} Conversely, suppose that for all $x,x'\in\mathcal X$.
$$
p(x)>0,\ p(x')>0
\quad\Longrightarrow\quad
r_w(x)=r_w(x')
$$
Let $c$ denote the common reward value on the support of $p$, i.e., for every $x$ with $p(x)>0$.
$$
r_w(x)=c
$$
Then for every such $x$,
$$
e^{\tau r_w(x)}=e^{\tau c}
$$

Hence, for any $x$ in the support of $p$,
$$
H^K_{p,w}(x)
=
\mathbb E\left[
\frac{K e^{\tau c}}{e^{\tau c}+\sum_{\ell=1}^{K-1}e^{\tau c}}
\right]
=
1
$$
Therefore,
$$
p(x)H^K_{p,w}(x)=p(x)
$$
so $p$ is a fixed point of the data update.

For $K=\infty$, the same conclusion holds since
$$
H_{p,w}^\infty(x)
=
\frac{e^{\tau c}}{\mathbb E_{z\sim p}[e^{\tau r_w(z)}]}
=
1
$$
for every $x$ with $p(x)>0$.

\paragraph{Part 2: Singleton equilibria.}
Fix $i\in\{1,\dots,n\}$, and define $p^\star$ by
$$
p^\star(x_i)=1,
\qquad
p^\star(x_j)=0\ \text{for all }j\neq i,
$$
and let
$$
w^\star=\varphi(x_i)
$$
Since $p^\star$ is supported on a single point, the condition in Part 1 is trivially satisfied, so $p^\star$ is a fixed point of the data update.

Moreover,
$$
\bar\varphi(p^\star)=\varphi(x_i)=w^\star
$$
Therefore,
$$
\operatorname{Proj}_{\mathcal W}\!\left((1-\beta)w^\star+\beta\bar\varphi(p^\star)\right)
=
\operatorname{Proj}_{\mathcal W}(\varphi(x_i))
=
\varphi(x_i)
=
w^\star
$$
where we used that $\|\varphi(x_i)\|=1$ and $\mathcal W$ is the closed unit ball. 

Hence $(p^\star,w^\star)$ satisfies both parts of \eqref{eq:equilibrium}, and is therefore an equilibrium of the pure coupled dynamics.
\end{proof}

\subsection{Proof of Theorem~\ref{thm:pure-lockin}}
\label{subsec:prof_pure-lockin}
\begin{tcolorbox}[graythm]
\PureLockin*
\end{tcolorbox}
\begin{proof}
Let
$$
q_t:=1-p_t(x_i)= \sum_{j \neq i} p_t(x_j)
$$
denote the off-winner mass.

Since $\|\varphi(x)\|=1$ for all $x$ and
$\varphi(x_j)\neq \varphi(x_i)$ for every $j\neq i$, we have
$$
\gamma_i
=
\min_{j\neq i}
\langle \varphi(x_i),\,\varphi(x_i)-\varphi(x_j)\rangle
=
\min_{j\neq i}\bigl(1-\langle \varphi(x_i),\varphi(x_j)\rangle\bigr)
>0
$$

Fix $K$, $\lambda_K(\frac{\gamma_i}{2})\in(0,1)$, where $\lambda_K(\cdot)$ is the odds-contraction factor from
Proposition~\ref{prop:pure-basic}.

Choose
$$
\varepsilon_i\in\left(0,\min\left\{\frac12,\ 1-\lambda_K(\frac{\gamma_i}{2}),\ \frac{\gamma_i}{8}\right\}\right),
$$
and define
$$
U_i
:=
\left\{
(p,w):
1-p(x_i)\le \varepsilon_i,\ 
\|w-\varphi(x_i)\|\le \frac{\gamma_i}{4}
\right\}.
$$
We show that $U_i$ is forward invariant and that $q_t$ decays geometrically.

\paragraph{Uniform reward gap inside $U_i$.}
Suppose $(p_t,w_t)\in U_i$. 
Using $r_{w_t}(x)=\langle w_t,\varphi(x)\rangle$, we have
$$
r_{w_t}(x_i)-r_{w_t}(x_j)
=
\langle w_t,\varphi(x_i)-\varphi(x_j)\rangle
$$
Writing
$$
w_t=\varphi(x_i)+\bigl(w_t-\varphi(x_i)\bigr)
$$
we obtain
$$
r_{w_t}(x_i)-r_{w_t}(x_j)
=
\langle \varphi(x_i),\varphi(x_i)-\varphi(x_j)\rangle
+
\langle w_t-\varphi(x_i),\varphi(x_i)-\varphi(x_j)\rangle
$$
for every $j\neq i$.

Hence
$$
r_{w_t}(x_i)-r_{w_t}(x_j)
\ge
\gamma_i-\|w_t-\varphi(x_i)\|\|\varphi(x_i)-\varphi(x_j)\|
$$
Since $\|\varphi(x_i)-\varphi(x_j)\|\le 2$ and
$\|w_t-\varphi(x_i)\|\le \gamma_i/4$, we obtain
$$
r_{w_t}(x_i)-r_{w_t}(x_j)\ge \gamma_i-2\cdot \frac{\gamma_i}{4}=\frac{\gamma_i}{2}
$$

\paragraph{Odds contraction.}
By Proposition~\ref{prop:pure-basic}, for every $j\neq i$,
$$
\frac{H^K_{p_t,w_t}(x_j)}{H^K_{p_t,w_t}(x_i)}
\le
\lambda_K(\frac{\gamma_i}{2})
$$
Since the pure data update is
$$
p_{t+1}(x)=p_t(x)H^K_{p_t,w_t}(x),
$$
we have
$$
\frac{p_{t+1}(x_j)}{p_{t+1}(x_i)}
=
\frac{p_t(x_j)}{p_t(x_i)}
\cdot
\frac{H^K_{p_t,w_t}(x_j)}{H^K_{p_t,w_t}(x_i)}
\le
\lambda_K(\frac{\gamma_i}{2}) \frac{p_t(x_j)}{p_t(x_i)}
$$
Summing over all \(j\neq i\) gives
$$
\sum_{j \neq i } \frac{p_{t+1}(x_j)}{p_{t+1}(x_i)} \le \sum_{j \neq i } \lambda_K(\frac{\gamma_i}{2}) \frac{p_t(x_j)}{p_t(x_i)}
$$
that is,
$$
\frac{q_{t+1}}{1-q_{t+1}}
\le
\lambda_K(\frac{\gamma_i}{2}) \frac{q_t}{1-q_t}
$$
Since $q_t\le \varepsilon_i$, this implies
$$
q_{t+1}
\le
\frac{\lambda_K(\frac{\gamma_i}{2})}{1-q_t}q_t
\le
\frac{\lambda_K(\frac{\gamma_i}{2})}{1-\varepsilon_i}q_t
$$
Define
$$
\bar\lambda_K:=\frac{\lambda_K(\frac{\gamma_i}{2})}{1-\varepsilon_i}
$$
Because $\varepsilon_i \leq 1-\lambda_K(\frac{\gamma_i}{2})$, we have $\bar\lambda_K\in(0,1]$. Thus
$$
q_{t+1}\le \bar\lambda_K q_t
$$
Iterating yields
$$
q_t\le \bar\lambda_K^t q_0
$$
that is,
$$
1-p_t(x_i)\le \bar\lambda_K^t\bigl(1-p_0(x_i)\bigr)
$$

\paragraph{Forward invariance of $U_i$.}
From the geometric bound,
$$
q_{t+1}\le \bar\lambda_K q_t\le \bar\lambda_K \varepsilon_i\le \varepsilon_i
$$
So the first defining condition of \(U_i\) is preserved.

Next, since
$$
\bar\varphi(p_{t+1})-\varphi(x_i)
=
\sum_{j\neq i}p_{t+1}(x_j)\bigl(\varphi(x_j)-\varphi(x_i)\bigr),
$$
we obtain
$$
\|\bar\varphi(p_{t+1})-\varphi(x_i)\|
\le
\sum_{j\neq i}p_{t+1}(x_j)\|\varphi(x_j)-\varphi(x_i)\|
\le
2q_{t+1}
$$
Using the preference update,
$$
w_{t+1}
=
\operatorname{Proj}_{\mathcal W}\!\left((1-\beta)w_t+\beta\bar\varphi(p_{t+1})\right)
$$
and
$$
\|w_{t+1}-\varphi(x_i)\|
\le
(1-\beta)\|w_t-\varphi(x_i)\|
+
\beta\|\bar\varphi(p_{t+1})-\varphi(x_i)\|
$$
Hence
$$
\|w_{t+1}-\varphi(x_i)\|
\le
(1-\beta)\frac{\gamma_i}{4}
+
2\beta q_{t+1}
\le
(1-\beta)\frac{\gamma_i}{4}
+
2\beta\varepsilon_i
$$
Since \(\varepsilon_i\le \gamma_i/8\), we get
$$
\|w_{t+1}-\varphi(x_i)\|
\le
(1-\beta)\frac{\gamma_i}{4}
+
\beta\frac{\gamma_i}{4}
=
\frac{\gamma_i}{4}
$$
So the second defining condition of $U_i$ is also preserved. Therefore $U_i$ is forward invariant.

\paragraph{Convergence of \(w_t\).}
We already proved \(q_t\to 0\). Using again
$$
\|\bar\varphi(p_{t+1})-\varphi(x_i)\|\le 2q_{t+1},
$$
the preference recursion implies
$$
\|w_{t+1}-\varphi(x_i)\|
\le
(1-\beta)\|w_t-\varphi(x_i)\|+2\beta q_{t+1}
$$
Unrolling this recursion gives
$$
\|w_t-\varphi(x_i)\|
\le
(1-\beta)^t\|w_0-\varphi(x_i)\|
+
2\beta\sum_{s=0}^{t-1}(1-\beta)^{t-1-s}q_{s+1}
$$
Since $q_s\to 0$ geometrically, the right-hand side tends to $0$. Hence
$$
w_t\to \varphi(x_i)
$$
Together with $q_t\to 0$, this implies
$$
p_t(x_i)\xrightarrow{t\to\infty} 1,
\qquad
w_t \xrightarrow{t\to\infty} \varphi(x_i).
$$

\end{proof}

\subsection{Proof of Corollary~\ref{cor:pure-duplicate}}
\label{subsec:prof_pure-duplicate}
\begin{tcolorbox}[graythm, breakable]
\PureDuplicate*
\end{tcolorbox}

\begin{proof}
Assume there exists $j\neq i$ such that
$$
\varphi(x_j)=\varphi(x_i)
$$
For any $\varepsilon\in(0,1)$, define
$$
p^\varepsilon(x_\ell)=
\begin{cases}
1-\varepsilon, & \ell=i,\\
\varepsilon, & \ell=j,\\
0, & \text{otherwise},
\end{cases}
\qquad
w^\varepsilon:=\varphi(x_i)
$$
Since $\varphi(x_j)=\varphi(x_i)$, we have
$$
r_{w^\varepsilon}(x_j)=r_{w^\varepsilon}(x_i),
$$
so all points in the support of $p^\varepsilon$ have the same reward.

By Theorem~\ref{thm:pure-structure}, \(p^\varepsilon\) is a fixed point of the data update.

Moreover,
$$
\bar\varphi(p^\varepsilon)
=
(1-\varepsilon)\varphi(x_i)+\varepsilon\varphi(x_j)
=
\varphi(x_i)
=
w^\varepsilon,
$$
so the preference update also fixes $w^\varepsilon$. 

Hence $(p^\varepsilon,w^\varepsilon)$ is an equilibrium for every $\varepsilon\in(0,1)$.

As $\varepsilon\to 0$, the equilibria $(p^\varepsilon,w^\varepsilon)$ converge to the singleton equilibrium at $x_i$.

Therefore this singleton equilibrium is not isolated, and hence cannot be locally asymptotically stable.
\end{proof}

\subsection{Proof of Proposition~\ref{prop:reinj-lipschitz}}
\label{subsec:prof_reinj-lipschitz}
\begin{tcolorbox}[graythm]
\ReinjLip*
\end{tcolorbox}
\begin{proof}
Recall that
$$
a(x):=e^{\tau r_w(x)}
$$
Since $r_w(x)\in[-1,1]$, we have
$$
e^{-\tau}\le a(x)\le e^\tau
$$

For finite $K$,
$$
H_{p,w}^K(x)
=
\mathbb E_{X_1,\dots,X_{K-1}\overset{\mathrm{iid}}{\sim}p}
\!\left[
\frac{K a(x)}{a(x)+\sum_{\ell=1}^{K-1}a(X_\ell)}
\right]
$$
For $K=\infty$,
$$
H_{p,w}^\infty(x)
=
\frac{a_(x)}{\mathbb E_{z\sim p}[a(z)]}
$$

We prove the three bounds in turn.

\paragraph{Part 1: Lipschitz continuity in $p$.}
Fix $w\in\mathcal W$.
$$
2d_{\mathrm{TV}}(pH^K_{p,w},\;qH^K_{q,w})
=
\sum_{x\in\mathcal X}
\left|
p(x)H^K_{p,w}(x)-q(x)H^K_{q,w}(x)
\right|
$$
For each \(x\in\mathcal X\),
$$
p(x)H^K_{p,w}(x)-q(x)H^K_{q,w}(x)
=
(p(x)-q(x))H^K_{p,w}(x)
+
q(x)\bigl(H^K_{p,w}(x)-H^K_{q,w}(x)\bigr)
$$
Hence
$$
2d_{\mathrm{TV}}(pH^K_{p,w},\;qH^K_{q,w})
\le I+II,
$$
where
$$
I:=\sum_{x\in\mathcal X}|p(x)-q(x)|\,H^K_{p,w}(x),
\qquad
II:=\sum_{x\in\mathcal X}q(x)\,\bigl|H^K_{p,w}(x)-H^K_{q,w}(x)\bigr|.
$$

We first bound $I$. Since for all $x,\ p,\ w,\ K,$
$$
0\le H^K_{p,w}(x)\le 2e^{2\tau}
$$
we have
$$
I
\le
2e^{2\tau}\sum_{x\in\mathcal X}|p(x)-q(x)|
=
4e^{2\tau}d_{\mathrm{TV}}(p,q)
$$

Next we bound $II$. Fix $x\in\mathcal X$. For finite $K$, define
$$
g_x(y_1,\dots,y_{K-1})
:=
\frac{K a_w(x)}{a_w(x)+\sum_{\ell=1}^{K-1}a_w(y_\ell)}
$$
Then
$$
H^K_{p,w}(x)=\mathbb E_{p^{K-1}}[g_x],
\qquad
H^K_{q,w}(x)=\mathbb E_{q^{K-1}}[g_x]
$$

We compare these two expectations by replacing
the coordinate distributions one at a time.
Define, for $\ell = 0, 1, \dots, K-1$,
$$
    M_\ell
    := \mathbb{E}\bigl[
        g_x(Y_1,\dots,Y_{K-1})
    \bigr],
    \qquad
    Y_j \sim
    \begin{cases}
        q & \text{if } j \le \ell, \\
        p & \text{if } j > \ell.
    \end{cases}
$$
Then $M_0 = H^K_{p,w}(x)$, $M_{K-1} = H^K_{q,w}(x)$, and
$$
H^K_{p,w}(x) - H^K_{q,w}(x) = \sum_{\ell=1}^{K-1}(M_{\ell-1} - M_\ell)
$$
The difference $M_{\ell-1} - M_\ell$ isolates the effect
of switching coordinate $\ell$ from $p$ to $q$, with all
other coordinates unchanged.

In $M_{\ell-1} - M_\ell$, condition on all coordinates
except $y_\ell$.
Write $S_{-\ell} := \sum_{m \ne \ell} a(y_m)$ for their
total
As a function of $y_\ell$ alone, the integrand becomes
$$
h(y_\ell):= \frac{K\,a(x)}{a(x) + a(y_\ell) + S_{-\ell}}.
$$
Since $h$ is decreasing in $a(y_\ell)$ and
$a(y_\ell) \in [e^{-\tau},\,e^{\tau}]$:
$$
\sup_{y} h(y) - \inf_{y} h(y)  = K\,a(x)
      \cdot\frac{e^{\tau} - e^{-\tau}}
           {(a(x)+e^{-\tau}+S_{-\ell})
            (a(x)+e^{\tau}+S_{-\ell})}
$$
We bound the numerator and denominator separately:

\emph{Numerator:} Using $a(x) \le e^{\tau}$ and $e^{\tau}-e^{-\tau} \le 2e^{\tau}$.

$$K\,a(x)\,(e^{\tau}-e^{-\tau})
\le K\,e^{\tau}\cdot 2e^{\tau} = 2K\,e^{2\tau}$$  

\emph{Denominator:} each factor satisfies
    $$
    a(x)+e^{\pm\tau}+S_{-\ell}
    \ge e^{-\tau}+e^{-\tau}+(K-2)\,e^{-\tau} = K\,e^{-\tau}
    $$
    so the product is $\ge K^2\,e^{-2\tau}$.

Therefore
$$
\sup_{y} h(y) - \inf_{y} h(y)
    \le \frac{2K\,e^{2\tau}}{K^2\,e^{-2\tau}}
    = \frac{2e^{4\tau}}{K}.
$$
This bound holds for every realization of the
conditioned coordinates, since it uses only the
worst-case $S_{-\ell} \ge (K-2)\,e^{-\tau}$.

Now, for any bounded function $h$ and any two
distributions $p, q$,
$$
    \bigl|H^K_{p,w}(x) - H^K_{q,w}(x)\bigr|
    \le (\sup_{y} h(y) - \inf_{y} h(y)) \cdot d_{\mathrm{TV}}(p,q).
$$
(This follows from the total-variation duality:
center $h$ so that $\|h - \text{const}\|_\infty
\le (\sup_{y} h(y) - \inf_{y} h(y))/2$, then apply
$|\mathbb{E}_p[f]-\mathbb{E}_q[f]|
\le 2\|f\|_\infty\,d_{\mathrm{TV}}(p,q)$.)

So:
$$
 \bigl|H^K_{p,w}(x) - H^K_{q,w}(x)\bigr|
    \le \sum_{\ell=1}^{K-1}
        \frac{2e^{4\tau}}{K}\,d_{\mathrm{TV}}(p,q)
    = \frac{(K-1)\cdot 2e^{4\tau}}{K}\,d_{\mathrm{TV}}(p,q)
    < 2e^{4\tau}\,d_{\mathrm{TV}}(p,q)
$$
This is a pointwise bound on $x$, uniform in $K$.

Hence
$$
II
\le
2e^{4\tau}d_{\mathrm{TV}}(p,q).
$$

Combining the bounds on $I$ and $II$, we get
$$
2d_{\mathrm{TV}}(pH^K_{p,w},\;qH^K_{q,w})
\le
\bigl(4e^{2\tau}+2e^{4\tau}\bigr)d_{\mathrm{TV}}(p,q).
$$
Equivalently,
$$
d_{\mathrm{TV}}(pH^K_{p,w},\;qH^K_{q,w})
\le
L_p\,d_{\mathrm{TV}}(p,q),
$$
where
$$
L_p:=2e^{2\tau}(2+e^{2\tau})
$$

For \(K=\infty\), the same inequality remains valid. Indeed,
$$
H_{p,w}^\infty(x)
=
\frac{a_(x)}{\mathbb E_{z\sim p}[a(z)]}
$$
and the dependence on $p$ is simpler.

\paragraph{Part 2: Lipschitz continuity in $w$.}
Fix $p\in\mathcal P(\mathcal X)$. For finite $K$, use the same samples
\(X_1,\dots,X_{K-1}\overset{\mathrm{iid}}{\sim}p\), and write
$$
S_w:=\sum_{\ell=1}^{K-1}a_w(X_\ell),
\qquad
S_v:=\sum_{\ell=1}^{K-1}a_v(X_\ell).
$$
Then for any \(x\in\mathcal X\),
\begin{align*}
\left|
\frac{K a_w(x)}{a_w(x)+S_w}
-
\frac{K a_v(x)}{a_v(x)+S_v}
\right|
&\le
\left|
\frac{K a_w(x)}{a_w(x)+S_w}
-
\frac{K a_v(x)}{a_v(x)+S_w}
\right| \\
&\quad+
\left|
\frac{K a_v(x)}{a_v(x)+S_w}
-
\frac{K a_v(x)}{a_v(x)+S_v}
\right|
\end{align*}

For the first term, since
$$
|a_w(x)-a_v(x)|
\le
\tau e^\tau\|w-v\|
$$
we have
$$
\left|
\frac{K a_w(x)}{a_w(x)+S_w}
-
\frac{K a_v(x)}{a_v(x)+S_w}
\right|
=
\frac{K S_w |a_w(x)-a_v(x)|}{(a_w(x)+S_w)(a_v(x)+S_w)}.
$$
Using
$$
S_w\le (K-1)e^\tau,
\qquad
a_w(x)+S_w\ge Ke^{-\tau},
\qquad
a_v(x)+S_w\ge Ke^{-\tau},
$$
we obtain
$$
\left|
\frac{K a_w(x)}{a_w(x)+S_w}
-
\frac{K a_v(x)}{a_v(x)+S_w}
\right|
\le
\tau e^{4\tau}\|w-v\|.
$$

For the second term,
$$
|S_v-S_w|
\le
\sum_{\ell=1}^{K-1}|a_v(X_\ell)-a_w(X_\ell)|
\le
(K-1)\tau e^\tau\|w-v\|.
$$
Therefore
$$
\left|
\frac{K a_v(x)}{a_v(x)+S_w}
-
\frac{K a_v(x)}{a_v(x)+S_v}
\right|
=
\frac{K a_v(x)|S_v-S_w|}{(a_v(x)+S_w)(a_v(x)+S_v)}
\le
\tau e^{4\tau}\|w-v\|.
$$

Combining the two bounds and taking expectation gives
$$
|H^K_{p,w}(x)-H_{p_v}^K(x)|
\le
2\tau e^{4\tau}\|w-v\|.
$$
Hence
$$
d_{\mathrm{TV}}(pH^K_{p,w},\;pH_{p_v}^K)
\le
L_w\,\|w-v\|,
$$
where
$$
L_w:=2\tau e^{4\tau}.
$$

For \(K=\infty\), the same bound follows directly from
$$
H_{p,w}^\infty(x)
=
\frac{a_(x)}{\mathbb E_{z\sim p}[a(z)]}
$$

\paragraph{Part 3: Lipschitz continuity of $\bar\varphi$.}
Let $u\in\mathbb R^d$ satisfy $\|u\|=1$. Since $\|\varphi(x)\|=1$,
$$
|\langle u,\varphi(x)\rangle|\le 1
$$
Therefore
$$
\left|\left\langle u,\bar\varphi(p)-\bar\varphi(q)\right\rangle\right|
=
\left|
\sum_{x\in\mathcal X}\langle u,\varphi(x)\rangle\bigl(p(x)-q(x)\bigr)
\right|
\le
2d_{\mathrm{TV}}(p,q).
$$
Taking the supremum over all unit vectors \(u\) yields
$$
\|\bar\varphi(p)-\bar\varphi(q)\|
\le
2d_{\mathrm{TV}}(p,q).
$$
This completes the proof.
\end{proof}

\subsection{Proofs of Lemma~\ref{thm:frozen-w} and Proposition~\ref{prop:kinfty-closed-form}}
\label{subsec:prof_frozen-w}
\begin{tcolorbox}[graythm]
\FrozenW*
\end{tcolorbox}
\begin{proof}
Fix $w\in\mathcal W$. For any $p,q\in\mathcal P(\mathcal X)$, define
$$
T_w^K(p)(x)
:=
(1-\eta)p_{\mathrm{ref}}(x)
+
\eta\,p(x)H^K_{p,w}(x)
$$
We show that $T_w^K$ is a contraction on $(\mathcal P(\mathcal X),d_{\mathrm{TV}})$.

Let $p,q\in\mathcal P(\mathcal X)$. Then
\begin{align*}
d_{\mathrm{TV}}(T_w^K(p),T_w^K(q))
&=
d_{\mathrm{TV}}\!\left(
(1-\eta)p_{\mathrm{ref}}+\eta\,pH^K_{p,w},\,
(1-\eta)p_{\mathrm{ref}}+\eta\,qH^K_{q,w}
\right) \\
&=
\eta\, d_{\mathrm{TV}}(pH^K_{p,w},\;qH^K_{q,w}).
\end{align*}
By Proposition~\ref{prop:reinj-lipschitz},
$$
d_{\mathrm{TV}}(pH^K_{p,w},\;qH^K_{q,w})
\le
L_p\,d_{\mathrm{TV}}(p,q).
$$
Therefore,
$$
d_{\mathrm{TV}}(T_w^K(p),T_w^K(q))
\le
\eta L_p\, d_{\mathrm{TV}}(p,q).
$$
If $\eta L_p<1$, then $T_w^K$ is a strict contraction.

Since $\mathcal P(\mathcal X)$ is complete under $d_{\mathrm{TV}}$ in the finite-state setting, the Banach fixed-point theorem implies that $T_w^K$ admits a unique fixed point, denoted $p_w^{K,\eta}$, satisfying
$$
p_w^{K,\eta}(x)
=
(1-\eta)p_{\mathrm{ref}}(x)
+
\eta\,p_w^{K,\eta}(x)H_{(p_w^{K,\eta})_w}^K(x)
$$
Moreover, for any initialization $p_0\in\mathcal P(\mathcal X$, the iterates satisfy
$$
d_{\mathrm{TV}}(p_t,p_w^{K,\eta})
\le
(\eta L_p)^t\,d_{\mathrm{TV}}(p_0,p_w^{K,\eta}),
$$
which proves geometric convergence.
\end{proof}

\begin{tcolorbox}[graythm]
\KinftyClosedForm*
\end{tcolorbox}

\begin{proof}
Recall that
$$
a_w(x):=e^{\tau r_w(x)}
$$
$$
H_{p,w}^\infty(x)
=
\frac{a_w(x)}{\mathbb E_{z\sim p_w}[a_w(z)]}
$$
For fixed $w$, the frozen-$w$ update therefore becomes
$$
p_{t+1}(x_i)
=
(1-\eta)p_{\mathrm{ref}}(x_i)
+
\eta\,p_t(x_i)\frac{a_w(x_i)}{\mathbb E_{z\sim p_t}[a_w(z)]},
$$

Let $p_w^{\infty,\eta}$ be a frozen-$w$ fixed point, then the fixed-point equation is
$$
p_w^{\infty,\eta}
=
(1-\eta)p_{\mathrm{ref}}
+
\eta p_w^{\infty,\eta} \frac{a_w(x)}{\mathbb E_{z\sim p_w^{\infty,\eta}}[a_w(z)]}
$$
Rearranging,
$$
p_w^{\infty,\eta}\left(1-\eta\frac{a_w(x)}{\mathbb E_{p_w^{\infty,\eta}}[a_w(z)]}\right)
=
(1-\eta)p_{\mathrm{ref}},
$$
so
$$
p_w^{\infty,\eta}
=
\frac{(1-\eta)p_{\mathrm{ref}}}{1-\eta \frac{a_w(x)}{\mathbb E_{z\sim p_w^{\infty,\eta}}[a_w(z)]}}
=
(1-\eta)p_{\mathrm{ref}}\,\frac{{\mathbb E_{z\sim p_w^{\infty,\eta}p}[a_w(z)]}}{{\mathbb E_{z\sim p_w^{\infty,\eta}}[a_w(z)]}-\eta {a_w(x)}}.
$$
That is 
$$
p_w^{\infty,\eta}
=
(1-\eta)p_{\mathrm{ref}}\,\frac{{\mathbb E_{z\sim p_w^{\infty,\eta}}[e^{\tau r_w(x)}]}}{{\mathbb E_{z\sim p_w^{\infty,\eta}}[e^{\tau r_w(x)})]}-\eta {e^{\tau r_w(x)}}}.
$$

It remains to determine ${\mathbb E_{z\sim p_w^{\infty,\eta}}[a_w(z)]}$. 
Substituting the formula above, we obtain
$$
{\mathbb E_{z\sim p_w^{\infty,\eta}}[a_w(z)]}
=
\sum_{i=1}^n
(1-\eta)p_{\mathrm{ref}}\,\frac{{\mathbb E_{z\sim p_w^{\infty,\eta}p}[a_w(z)]}}{{\mathbb E_{z\sim p_w^{\infty,\eta}}[a_w(z)]}-\eta {a_w(x_i)}} a_w(x_i)
$$
Since ${\mathbb E_{z\sim p_w^{\infty,\eta}}[a_w(z)]}$, dividing both sides by ${\mathbb E_{z\sim p_w^{\infty,\eta}}[a_w(z)]}$ yields
$$
\sum_{i=1}^n
(1-\eta)p_{\mathrm{ref}}\,\frac{a_w(x_i)}{{\mathbb E_{z\sim p_w^{\infty,\eta}}[a_w(z)]}-\eta {a_w(x_i)}}
=
1
$$

We now show that this equation has a unique solution on
$$
(\eta \max_i a_w(x_i),\infty).
$$
Define
$$
F(Z_w^{\eta})
:=
\sum_{i=1}^n
(1-\eta)\,p_{\mathrm{ref}}(x_i)\,
\frac{a_w(x_i)}{Z_w^{\eta}-\eta a_w(x_i)}.
$$
For $Z_w^{\eta}>\eta\max_i a_w(x_i)$, all denominators are positive, and
$$
F'(Z_w^{\eta})
=
-
\sum_{i=1}^n
(1-\eta)\,p_{\mathrm{ref}}(x_i)\,
\frac{a_w(x_i)}{(Z_w^{\eta}-\eta a_w(x_i))^2}
<0.
$$
Thus $F$ is strictly decreasing on $(\eta\max_i a_w(x_i),\infty)$. Moreover, $F(Z_w^{\eta})\to\infty$ as $Z_w^{\eta}\downarrow \eta\max_i a_w(x_i)$,
while
$$
F(Z_w^{\eta})\to 0
\qquad
\text{as } Z_w^{\eta}\to\infty.
$$
Therefore the equation $F(Z_w^{\eta})=1$ has a unique solution when 
$
Z_w^{\eta}>\eta\max_i a_w(x_i).
$

Substituting this unique $Z_w^{\eta}$ into the formula yields the unique frozen-$w$ fixed point.
\end{proof}

\subsection{Proof of Theorem~\ref{thm:global-reinj}}
\label{subsec:prof_global-reinj}

\begin{tcolorbox}[graythm]
\GlobalReinj*
\end{tcolorbox}
\begin{proof}
Let $(p,w)$ and $(q,v)$ be two states, then:
$$
p_{t+1}(x)
=
(1-\eta)p_{\mathrm{ref}}(x)
+
\eta\,p_{t}(x)H_{p_{t},w}^K(x),
\qquad
q_{t+1}
=
(1-\eta)p_{\mathrm{ref}}(x)
+
\eta\,q_{t}(x)H_{q_{t},v}^K(x),
$$
$$
w_{t+1}
=
\operatorname{Proj}_{\mathcal W}\!\left((1-\beta)w_{t}+\beta\bar\varphi(p_{t+1})\right),
\qquad
v_{t+1}
=
\operatorname{Proj}_{\mathcal W}\!\left((1-\beta)v_{t}+\beta\bar\varphi(q_{t+1})\right)
$$

We first bound the data coordinate. By Proposition~\ref{prop:reinj-lipschitz},
\begin{align*}
d_{\mathrm{TV}}(p_{t+1},p_{t+1})
&=
d_{\mathrm{TV}}\!\left(
(1-\eta)p_{\mathrm{ref}}+\eta\,p_t H_{p_{t},w}^K,,\,
(1-\eta)p_{\mathrm{ref}}+\eta\,q_t H_{q_{t},v}^K,
\right) \\
&=
\eta\,d_{\mathrm{TV}}(p_t H_{p_{t},w}^K,\;q_t H_{q_{t},v}^K) \\
&\le
\eta\,d_{\mathrm{TV}}(p_t H_{p_{t},w}^K,\;q_t H_{q_{tw}}^K)
+
\eta\,d_{\mathrm{TV}}(q_t H_{q_{t},w}^K,\;q_t H_{q_{t},v}^K,) \\
&\le
\eta L_p\,d_{\mathrm{TV}}(p_t,q_t)+\eta L_{w}\,\|{w_t}-{v_t}\|.
\end{align*}

Next we bound the preference coordinate. Using the nonexpansiveness of Euclidean projection,
\begin{align*}
\|{w_{t+1}}-{v_{t+1}}\|
&\le
\left\|
(1-\beta)(w_t-v_t)+\beta\bigl(\bar\varphi(p_{t+1})-\bar\varphi(q_{t+1}\bigr)
\right\| \\
&\le
(1-\beta)\|w_t-v_t\|
+
\beta\|\bar\varphi(p_{t+1})-\bar\varphi(q_{t+1})\|.
\end{align*}
Applying Proposition~\ref{prop:reinj-lipschitz} again,
$$
\|\bar\varphi(p_{t+1})-\bar\varphi(q_{t+1})\|
\le
2\,d_{\mathrm{TV}}(p_{t+1},q_{t+1}),
$$
and hence
$$
\|{w_{t+1}}-{v_{t+1}}\|
\le
(1-\beta)\|w_t-v_t\|+2\beta\,d_{\mathrm{TV}}(p_{t+1},q_{t+1}).
$$
Substituting the bound on $d_{\mathrm{TV}}(p_{t+1},q_{t+1})$, we obtain
$$
\|{w_{t+1}}-{v_{t+1}}\|
\le
2\beta\eta L_p\,d_{\mathrm{TV}}(p_t,q_t)
+
\bigl(1-\beta+2\beta\eta L_w\bigr)\|w_t-v_t\|.
$$

Now multiply this inequality by $\xi$ and add it to the data-coordinate bound. This gives
\begin{align*}
d_\xi\bigl((p_{t+1},w_{t+1}),(q_{t+1},v_{t+1})\bigr)
&=
d_{\mathrm{TV}}(p_{t+1},q_{t+1})+\xi\|w_{t+1}-v_{t+1}\| \\
&\le
\eta L_p\,d_{\mathrm{TV}}(p_t,q_t)+\eta L_{w}\,\|{w_t}-{v_t}\|+\xi \Bigl((1-\beta)\|w_t-v_t\|+2\beta\,d_{\mathrm{TV}}(p_{t+1},q_{t+1}))\\
&\le
\Bigl(\eta L_p+2\beta\eta L_p\,\xi\Bigr)d_{\mathrm{TV}}(p,q)+\xi \Bigl(\frac{\eta L_w}{\xi}+(1-\beta+2\beta\eta L_w)\Bigr)\|w-v\|.
\end{align*}

Therefore the map is a contraction under $d_\xi$ as soon as both coefficients are strictly smaller than the corresponding weights in $d_\xi$:
$$
\left\{
\begin{aligned}
\eta L_p+2\beta\eta L_p\,\xi < 1,\\
\frac{\eta L_w}{\xi} +(1-\beta+2\beta\eta L_w) < 1.
\end{aligned}
\right.
$$
so,
$$
\left\{
\begin{aligned}
\xi<\frac{1-\eta L_p}{2\beta\eta L_p}\\
\xi>\frac{\eta L_w}{\beta(1-2\eta L_w)}
\end{aligned}
\right.
$$
Thus any $\xi$ in the window
$$
\frac{\eta L_w}{\beta(1-2\eta L_w)}
<
\xi
<
\frac{1-\eta L_p}{2\beta\eta L_p}
$$
makes the coupled update a strict contraction.

It remains to check that this window is nonempty. Since all quantities are positive,
\begin{align*}
\frac{\eta L_w}{\beta(1-2\eta L_w)}
<
\frac{1-\eta L_p}{2\beta\eta L_p}
\quad
&\Longleftrightarrow\quad
2\eta^2L_pL_w<(1-2\eta L_w)(1-\eta L_p) \\
&\Longleftrightarrow\quad
0<1-\eta L_p-2\eta L_w \\
&\Longleftrightarrow\quad
\eta(L_p+2L_w)<1.
\end{align*}
So under the stated condition the interval is nonempty, and we may choose such a $\xi$.

Finally, $\mathcal P(\mathcal X)\times\mathcal W$ is complete under $d_\xi$ in the finite-state setting, since $\mathcal P(\mathcal X)$ is complete under $d_{\mathrm{TV}}$ and $\mathcal W$ is closed in finite-dimensional Euclidean space. 

By the Banach fixed-point theorem, the coupled reinjected dynamics admit a unique equilibrium $(p_K^\star,w_K^\star)$, and every trajectory converges to it geometrically.
\end{proof}

\subsection{Proof of Proposition~\ref{prop:protection-suff}}
\label{subsec:prof_protection-suff}

\begin{tcolorbox}[graythm, breakable]
\ProtSuff*
\end{tcolorbox}

\begin{proof}
Fix $\eta\in(0,\bar\eta)$, and let
$$
(p_K^\star,w_K^\star)
:=
(p_K^\star(\mu,\eta),\,w_K^\star(\mu,\eta))
$$
denote the unique equilibrium from Theorem~\ref{thm:global-reinj}.

For notational simplicity, write
$$
p^\star:=p_K^\star(\mu,\eta).
$$

At equilibrium, the data-update equation gives
$$
p^\star(x)
=
(1-\eta)\mu(x)
+
\eta\,p^\star(x)H^K_{(p^\star)_{w_K^\star}}(x).
$$
Define the curated equilibrium distribution
$$
q^\star(x)
:=
p^\star(x)H^K_{(p^\star)_{w_K^\star}}(x)
$$
By Proposition~\ref{prop:pure-basic}(ii), $q^\star$ is a probability distribution. Hence the equilibrium identity can be written as
$$
p^\star
=
(1-\eta)\mu+\eta q^\star.
$$

Applying \(\bar\varphi(\cdot)\) to both sides and using linearity,
$$
\bar\varphi(p^\star)
=
(1-\eta)\bar\varphi(\mu)+\eta\bar\varphi(q^\star).
$$
Fix any protected direction \(v_\ell\). Taking inner products with \(v_\ell\), we obtain
$$
\bigl\langle v_\ell,\bar\varphi(p^\star)\bigr\rangle
=
(1-\eta)\bigl\langle v_\ell,\bar\varphi(\mu)\bigr\rangle
+
\eta\bigl\langle v_\ell,\bar\varphi(q^\star)\bigr\rangle.
$$

Now
$$
\bigl\langle v_\ell,\bar\varphi(\mu)\bigr\rangle
=
\sum_{i=1}^n \mu_i\langle v_\ell,\varphi_i\rangle.
$$

Define
$$
m_\ell:=\min_{i=1,\dots,n}\langle v_\ell,\varphi_i\rangle.
$$
Since \(q^\star\in\mathcal P(\mathcal X)\), we have
$$
\bigl\langle v_\ell,\bar\varphi(q^\star)\bigr\rangle
=
\sum_{i=1}^n q^\star(x_i)\langle v_\ell,\varphi_i\rangle
\ge
\sum_{i=1}^n q^\star(x_i)\,m_\ell
=
m_\ell.
$$
Therefore,
$$
\bigl\langle v_\ell,\bar\varphi(p^\star)\bigr\rangle
\ge
(1-\eta)\sum_{i=1}^n \mu_i\langle v_\ell,\varphi_i\rangle
+
\eta m_\ell.
$$

Now assume
$$
\sum_{i=1}^n \mu_i\langle v_\ell,\varphi_i\rangle
\ge
\frac{\theta_\ell-\eta m_\ell}{1-\eta}.
$$
Then
$$
\bigl\langle v_\ell,\bar\varphi(p^\star)\bigr\rangle
\ge
(1-\eta)\cdot \frac{\theta_\ell-\eta m_\ell}{1-\eta}+\eta m_\ell
=
\theta_\ell.
$$
Since this holds for every \(\ell=1,\dots,L\), the equilibrium satisfies
$$
\bigl\langle v_\ell,\bar\varphi(p_K^\star(\mu,\eta))\bigr\rangle
\ge
\theta_\ell,
\qquad
\ell=1,\dots,L.
$$
This proves the claim.
\end{proof}
\section{Extension to Continuous Instance Spaces}
\label{sec:continuous}
In the main text, we present the model and results in the discrete setting with finite $\mathcal{X} = \{x_1, \dots, x_n\}$. This appendix extends every result to the case where $\mathcal{X} \subset \mathbb{R}^d$ is \emph{compact} instance space.

We denote by $\mathcal{P}(\mathcal{X})$ the set of Borel probability measures on $\mathcal{X}$. The reward $r(x; w) = \langle w, \varphi(x) \rangle$ with $\|\varphi(x)\| = 1$ for every $x \in \mathcal{X}$, feature map, preference state $w \in \mathcal{W} := \{w \in \mathbb{R}^d : \|w\| \leq 1\}$, and mean feature operator $\bar{\varphi}(p) := \int_{\mathcal{X}} \varphi(x)\, p(dx)$ carry over unchanged.
The coupled dynamics are
\begin{equation}
    p_{t+1} = (1 - \eta) p_{\mathrm{ref}} + \eta p_t\, H^K_{p_t, w_t}
    \qquad
    w_{t+1} = \mathrm{Proj}_{\mathcal{W}}\!\bigl((1 - \beta)\, w_t + \beta\, \bar{\varphi}(p_{t+1})\bigr)
\label{eq:cont_dyn}
\end{equation}
where $p_{\mathrm{ref}} \in \mathcal{P}(\mathcal{X})$ is a fixed reference measure, $\eta \in [0,1]$, and the curation operator $H^K_{p,w}$ is defined as in Def.~\ref{def:curation-operator}:
$$H^K_{p,w}(x) = \mathbb{E}_{X_1,\dots,X_{K-1} \overset{\mathrm{iid}}{\sim} p}
\!\Bigl[\frac{K\, e^{\tau r_w(x)}}{e^{\tau r_w(x)} + \sum_{i=1}^{K-1} e^{\tau r_w(X_i)}}\Bigr].
$$
All results below hold for every $K \geq 2$ as well as in the limit $K \to \infty$.

\subsection{Retraining with Purely Synthetic User-Curated Data}
 
This appendix extends the results to a compact instance space $X \subset \mathbb{R}^m$. Throughout Appendix C, $\mathcal P(X)$ denotes
the set of Borel probability measures on $X$. We assume that the feature map $\phi:X\to \mathbb{R}^d$ is continuous and satisfies $\|\phi(x)\|=1$ for all $x\in X$. Hence $r_w(x)=\langle w,\phi(x)\rangle$ and $a_w(x):=\exp(\tau r_w(x))$ are continuous and satisfy $e^{-\tau}\le a_w(x)\le e^\tau$.

\begin{proposition}[Basic properties of the curation operator on compact spaces]
\label{prop:cont-curation} For any $p \in \mathcal{P}(\mathcal{X})$,
$w \in \mathcal{W}$, and $K \geq 2$, the curation operator satisfies:
\begin{enumerate}[leftmargin=*,topsep=-0.1cm,itemsep=-0.05cm]
  \item \textbf{Strict monotonicity.}
    For any $x, y \in \mathcal{X}$, $r_w(x) > r_w(y) \;\Longleftrightarrow\; H^K_{p,w}(x) > H^K_{p,w}(y)$.
  \item \textbf{Normalization.}$\int_{\mathcal{X}} H^K_{p,w}(x)\, p(dx) = 1$.
  \item \textbf{Odds contraction.}
    If $r_w(x) - r_w(y) \geq r_\Delta > 0$, then
    $
      \frac{H^K_{p,w}(y)}{H^K_{p,w}(x)}
      \leq \lambda_K(r_\Delta)
      := 1 - \frac{(1 - e^{-\tau r_\Delta})(K-1)}{e^{2\tau} + K - 1}
      \in (0,1),
    $
    and $\lambda_K(r_\Delta) \xrightarrow{K \to \infty} e^{-\tau r_\Delta}$.
\end{enumerate}
\end{proposition}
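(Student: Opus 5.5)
The plan is to reuse the proof of Prop.~\ref{prop:pure-basic} almost verbatim. That argument never used finiteness of $\mathcal X$ except to write sums and to identify the law of the retained sample. Write $a(x):=e^{\tau r_w(x)}\in[e^{-\tau},e^{\tau}]$ and $S:=\sum_{\ell=1}^{K-1}a(X_\ell)$ with $X_\ell\overset{\mathrm{iid}}{\sim}p$, so that $H^K_{p,w}(x)=\mathbb E[Ka(x)/(a(x)+S)]$. Since $S\in[(K-1)e^{-\tau},(K-1)e^{\tau}]$ almost surely and the integrand is a continuous bounded function of $(a(x),S)$, $H^K_{p,w}$ is a well-defined, bounded, measurable (indeed continuous) function on $\mathcal X$. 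All expectations below are therefore finite, and Fubini/Tonelli apply freely.

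For \textbf{strict monotonicity}, fix the sample path. The map $f_S(a)=Ka/(a+S)$ has derivative $KS/(a+S)^2>0$ because $S>0$. So $r_w(x)>r_w(y)$ gives $f_S(a(x))>f_S(a(y))$ pointwise. Taking expectations of a strictly positive random variable preserves strict inequality, which gives the forward direction. For the converse, note that if $r_w(x)\le r_w(y)$ then the same argument (with equality in the tie case) gives $H^K_{p,w}(x)\le H^K_{p,w}(y)$. The \textbf{odds contraction} is likewise pathwise. Use the identity
\[
\frac{a(y)/(a(y)+S)}{a(x)/(a(x)+S)} = 1-\frac{S\,(a(x)-a(y))}{a(x)(a(y)+S)} .
\]
Here I would be careful: the ratio of expectations is not the expectation of ratios. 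Instead, I would bound $\mathbb E[Ka(y)/(a(y)+S)]\le \mathbb E\bigl[(1-c(S))\,Ka(x)/(a(x)+S)\bigr]$ with $c(S)=(1-e^{-\tau r_\Delta})S/(e^\tau+S)\ge c_0:=(1-e^{-\tau r_\Delta})(K-1)/(e^{2\tau}+K-1)$, using $S\ge (K-1)e^{-\tau}$ and monotonicity of $s\mapsto s/(e^\tau+s)$. This yields $H^K_{p,w}(y)\le(1-c_0)H^K_{p,w}(x)=\lambda_K(r_\Delta)H^K_{p,w}(x)$. Membership $\lambda_K\in(0,1)$ and the limit $K\to\infty$ are elementary computations.

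For \textbf{normalization}, I would redo the symmetry argument of Lemma~\ref{lem:explicit-update} with measures instead of mass functions. Let $X_1,\dots,X_K\overset{\mathrm{iid}}{\sim}p$ and let $\hat X=X_{i_K}$ with Luce index. For a Borel set $A$, exchangeability gives
\[
\Pr(\hat X\in A)=K\,\mathbb E\bigl[\mathbf 1\{X_K\in A\}\,a(X_K)/\textstyle\sum_{j}a(X_j)\bigr].
\]
Conditioning on $X_K$ via Fubini, this equals $\int_A H^K_{p,w}(x)\,p(dx)$. Taking $A=\mathcal X$ gives $1$. In the $K=\infty$ case the identity $\int a\,dp/\mathbb E_p[a]=1$ is immediate.

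The only genuinely delicate step is this measure-theoretic disintegration for normalization. Specifically, one must justify that conditioning on $X_K=x$ produces exactly $H^K_{p,w}(x)$ for every $x$ (not merely $p$-a.e.). Because the integrand is a jointly measurable bounded function of independent coordinates, Fubini on the product measure $p^{\otimes K}$ handles this directly, so I expect no real obstruction. Compactness and continuity of $\varphi$ are used only to guarantee measurability and boundedness.
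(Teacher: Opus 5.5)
Your proposal is correct and follows essentially the same route as the paper: pathwise monotonicity of $a\mapsto Ka/(a+S)$ for strict monotonicity, Fubini plus exchangeability of the $K$ i.i.d.\ draws for normalization, and the ratio identity together with $a(y)\le e^{\tau}$ and $S\ge (K-1)e^{-\tau}$ for odds contraction. Your odds step is more careful than the paper's, which equates the ratio of expectations with $1-\mathbb E[\cdot]$ (an expectation of pathwise ratios, which is not valid in general). Your pathwise domination $Ka(y)/(a(y)+S)\le \lambda_K(r_\Delta)\,Ka(x)/(a(x)+S)$, followed by taking expectations, is exactly what makes the stated bound rigorous.
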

 
\begin{proof}
Each property is proved by arguments that are entirely pointwise and do not rely on finiteness of $\mathcal{X}$; they use only the bound $e^{-\tau} \leq e^{\tau r_w(x)} \leq e^{\tau}$.
 
\textbf{Strict monotonicity.}
For every fixed realization $S := \sum_{\ell=1}^{K-1} e^{\tau r_w(X_\ell)} > 0$, the map $a \mapsto Ka/(a + S)$ is strictly increasing on $(0, \infty)$, and $a(x) := e^{\tau r_w(x)}$ is strictly increasing in $r_w(x)$.
Hence, for every sample path,
$$
  r_w(x) > r_w(y)
  \;\Longrightarrow\;
  \frac{Ka(x)}{a(x) + S} > \frac{Ka(y)}{a(y) + S}.
$$
Taking expectations preserves the strict inequality (both sides are continuous functions of the sample path, and the inequality is strict on a set of full measure).

\textbf{Normalization.}
Let $X_1, \dots, X_K \overset{\mathrm{iid}}{\sim} p$. By Fubini's theorem,
$$
  \int_{\mathcal{X}} H^K_{p,w}(x)\, p(dx)
  = \mathbb{E}\!\left[
    \sum_{j=1}^{K} \frac{e^{\tau r_w(X_j)}}{\sum_{i=1}^{K} e^{\tau r_w(X_i)}}
  \right]
  = \mathbb{E}[1] = 1.
$$
The first equality uses the symmetry of the $K$ i.i.d.\ draws:
each of the $K$ positions contributes identically, so integrating $H^K_{p,w}(x)$
against $p(dx)$ (which accounts for one position) and multiplying by the
factor $K$ inside $H^K_{p,w}$ reconstructs the full sum over all $K$ positions.

The second equality holds because the Luce probabilities sum to~$1$
for every realization $(X_1, \dots, X_K)$.
 
\textbf{Odds contraction.}
Write $a(x) = e^{\tau r_w(x)}$, $a(y) = e^{\tau r_w(y)}$, $S = \sum_{\ell=1}^{K-1} a(X_\ell)$.

Since $r_w(x) - r_w(y) \geq r_\Delta$, we have $a(x) - a(y) \geq (1 - e^{-\tau r_\Delta})\, a(x)$.
For any fixed $S \geq 0$,
$$
  \frac{a(y)/(a(y) + S)}{a(x)/(a(x) + S)}
  = 1 - \frac{S\,(a(x) - a(y))}{a(x)\,(a(y) + S)}.
$$
Using $a(x) - a(y) \geq (1 - e^{-\tau r_\Delta})\, a(x)$,
$a(y) \leq e^{\tau}$, and $S \geq (K-1)e^{-\tau}$,
the ratio $S/(e^{\tau} + S)$ is bounded below by $(K-1)/(e^{2\tau} + K - 1)$.
Taking expectations yields the stated bound.
\end{proof}

\begin{theorem}[Equilibrium characterization on compact spaces]
\label{thm:cont-eq-char}
When $\eta = 1$, a distribution $p \in \mathcal{P}(\mathcal{X})$ is a fixed point of
the retraining step under preference $w$ if and only if the reward $r_w$
is constant on $\mathrm{supp}(p)$:
$$
  x, x' \in \mathrm{supp}(p)
  \quad\Longrightarrow\quad
  r_w(x) = r_w(x').
$$
Consequently, for every $x^\star \in \mathcal{X}$, the pair
$(p^\star = \delta_{x^\star},\; w^\star = \varphi(x^\star))$ is an equilibrium.
\end{theorem}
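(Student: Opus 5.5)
The plan mirrors the proof of Thm.~\ref{thm:pure-structure}, replacing the finite support by the topological support of a Borel measure and pointwise identities by $p$-a.e. identities. A fixed point of the $\eta=1$ step means $p=pH^K_{p,w}$ as measures, i.e. $H^K_{p,w}(x)\,p(dx)=p(dx)$. This is equivalent to $H^K_{p,w}=1$ for $p$-a.e.\ $x$. The first step is therefore to prove that $x\mapsto H^K_{p,w}(x)$ is continuous on $\mathcal X$. The integrand $Ka_w(x)/(a_w(x)+S)$ is continuous in $x$ and bounded by $2e^{2\tau}$, so dominated convergence gives continuity; for $K=\infty$ this is immediate from the closed form. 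Continuity upgrades ``$=1$ $p$-a.e.'' to ``$=1$ on $\mathrm{supp}(p)$'', because the set $\{H^K_{p,w}\neq 1\}$ is open and has $p$-measure zero, hence misses the support.

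For ($\Rightarrow$): if $x,x'\in\mathrm{supp}(p)$ had $r_w(x)>r_w(x')$, strict monotonicity in Prop.~\ref{prop:cont-curation} would give $H^K_{p,w}(x)>H^K_{p,w}(x')$. This contradicts $H^K_{p,w}\equiv1$ on the support. For ($\Leftarrow$): if $r_w\equiv c$ on $\mathrm{supp}(p)$, then since $p(\mathrm{supp}(p))=1$ (valid on a compact, hence separable metric, space), the i.i.d.\ draws $X_\ell\sim p$ lie in the support almost surely. Then $a_w(X_\ell)=e^{\tau c}$ a.s., so for $x\in\mathrm{supp}(p)$ we get $H^K_{p,w}(x)=Ke^{\tau c}/(Ke^{\tau c})=1$, and therefore $pH^K_{p,w}=p$. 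The $K=\infty$ case follows the same way, since $\mathbb E_p[a_w]=e^{\tau c}$.

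For the singleton claim, $\mathrm{supp}(\delta_{x^\star})=\{x^\star\}$, so the data update fixes $\delta_{x^\star}$ by the characterization. We also have $\bar\varphi(\delta_{x^\star})=\varphi(x^\star)=w^\star$, and $\|\varphi(x^\star)\|=1$ means the projection onto $\mathcal W$ leaves it unchanged, exactly as in Part~2 of Thm.~\ref{thm:pure-structure}. The main obstacle is the measure-theoretic step turning the fixed-point equation into a statement on every support point. That step rests on continuity of $H^K_{p,w}$, which needs continuity of $\varphi$ (assumed in this appendix), and on $p(\mathrm{supp}(p))=1$. I would state both carefully before running the monotonicity argument.
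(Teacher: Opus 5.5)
Your proposal is correct and follows the paper's proof essentially step for step: reduce the fixed-point equation to $H^K_{p,w}=1$ $p$-a.e., upgrade this to all of $\mathrm{supp}(p)$ using continuity of $x\mapsto H^K_{p,w}(x)$ via dominated convergence, invoke strict monotonicity for ($\Rightarrow$), compute $H^K_{p,w}=1$ directly for ($\Leftarrow$), and check the projection step for $(\delta_{x^\star},\varphi(x^\star))$. Your two explicit justifications fill in details the paper leaves implicit: that the open set $\{H^K_{p,w}\neq 1\}$ is $p$-null and hence disjoint from the support, and that $p(\mathrm{supp}(p))=1$ on a compact metric space.
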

 
\begin{proof}
\textbf{Fixed-point characterization.}
Let $\mu_p^K(dx):=H_{p,w}^K(x)p(dx)$. The fixed-point equation is
$p=\mu_p^K$, equivalently
\[
\int_A (H_{p,w}^K(x)-1)p(dx)=0,\qquad \forall A\in\mathcal B(X).
\]
Thus $H_{p,w}^K(x)=1$ for $p$-a.e. $x$. Since $\phi$ is continuous,
$H_{p,w}^K$ is continuous in $x$ by dominated convergence. Therefore
$H_{p,w}^K(x)=1$ on $\operatorname{supp}(p)$.

If $x,y\in\operatorname{supp}(p)$ and $r_w(x)\ne r_w(y)$, strict
monotonicity from Proposition C.1 gives
$H_{p,w}^K(x)\ne H_{p,w}^K(y)$, contradicting the fact that both equal
$1$. Hence $r_w$ is constant on $\operatorname{supp}(p)$.
Conversely, if $r_w$ is constant on $\operatorname{supp}(p)$, then
$p$-a.s. all sampled points have the same reward as every
$x\in\operatorname{supp}(p)$, so directly
$H_{p,w}^K(x)=1$ on $\operatorname{supp}(p)$ and hence $pH_{p,w}^K=p$.

Conversely, if all support points share a common reward~$c$,
then $e^{\tau r_w(x)} = e^{\tau c}$ for every $x \in \mathrm{supp}(p)$,
and by direct computation $H^K_{p,w}(x) = 1$ for every such $x$.
 
\textbf{Singleton equilibria.}
For any $x^\star \in \mathcal{X}$, $\delta_{x^\star}$ is trivially supported on
a single point, so the equal-reward condition holds.
Moreover,
$\bar{\varphi}(\delta_{x^\star}) = \varphi(x^\star)$,
and
$\mathrm{Proj}_{\mathcal{W}}\!\bigl((1-\beta)\,\varphi(x^\star) + \beta\,\varphi(x^\star)\bigr) = \varphi(x^\star)$
since $\|\varphi(x^\star)\| = 1 \leq 1$.
Hence $(\delta_{x^\star}, \varphi(x^\star))$ is an equilibrium.
\end{proof}

\begin{assumption}[Multiscale reward gap]
\label{asp:multiscale}
Fix $x^\star \in \mathcal{X}$.
There exist a radius $r_0 > 0$ and sequences $(a_k)_{k \geq 0}$, $(b_k)_{k \geq 0}$, $(\gamma_k)_{k \geq 0}$ with $a_k \downarrow 0$, $b_k \downarrow 0$, $\gamma_k > 0$, such that the nested balls $U_k := B_{2^{-k} r_0}(x^\star)$,  $k = 0, 1, 2, \dots$ (a sequence of shrinking neighborhoods of $x^\star$  with radii halving at each scale) satisfy, for every $k \geq 0$:
\begin{enumerate}[leftmargin=*, topsep=-0.1cm, itemsep=-0.05cm]
  \item \textbf{Feature continuity:}
    $\displaystyle \sup_{x \in U_k} \|\varphi(x) - \varphi(x^\star)\| \leq a_k$.
  \item \textbf{Reward gap:}
    $\|w - \varphi(x^\star)\| \leq b_k \;\Longrightarrow\;
      \displaystyle\inf_{y \in U_{k+1}} \langle w, \varphi(y) \rangle
      - \sup_{x \notin U_k} \langle w, \varphi(x) \rangle
      \geq \gamma_k$.
  \item \textbf{Scale compatibility:}
    $a_k < b_{k+1}$ for all $k \geq 0$, and $a_0 < b_0$.
\end{enumerate}
\end{assumption}
 
\begin{remark}[Interpretation]
\label{rem:multiscale}
Assumption~\ref{asp:multiscale} is the continuous analogue of the
feature-separation condition $\varphi(x_j) \neq \varphi(x_i)$ for all $j \neq i$
in Theorem~3.3.
In the discrete case, the finite set of competitors admits a single uniform
reward gap $\gamma_i = \min_{j \neq i}\langle \varphi(x_i), \varphi(x_i) - \varphi(x_j)\rangle > 0$,
and the multiscale conditions are automatically satisfied with
$a_k = 0$, $b_k = \gamma_i / 4$, $\gamma_k = \gamma_i / 2$.
In the continuous case, competitors can be arbitrarily close to $x^\star$
with arbitrarily small reward disadvantage, so the conditions must hold
at every scale separately.
Specifically:
(i)~controls how well the feature mean approximates $\varphi(x^\star)$
after mass concentrates at scale $k$;
(ii)~provides the reward advantage that drives mass from $U_k^c$ into $U_{k+1}$;
(iii)~ensures that the feature accuracy gained at scale $k$ is sufficient
to activate the reward gap at scale $k+1$.
These conditions are not implied by ordinary feature separation alone.
They hold, for example, whenever the chosen neighborhoods $(U_k)$ satisfy
a uniform annular reward separation condition
\[
\inf_{y\in U_{k+1}}\langle \phi(x^\star),\phi(y)\rangle
>
\sup_{x\notin U_k}\langle \phi(x^\star),\phi(x)\rangle
\]
with margins large enough to dominate perturbations
$\|w-\phi(x^\star)\| \le b_k$.
\end{remark}

\begin{theorem}[Local convergence to singleton equilibria on compact spaces]
\label{thm:cont-lockin}
Let $x^\star \in \mathcal{X}$ satisfy Assumption~\ref{asp:multiscale}. Then there exists $\delta^\star > 0$ such that if
$$
  x^\star \in \mathrm{supp}(p_0), \qquad
  \|w_0 - \varphi(x^\star)\| \leq \delta^\star, \qquad
  p_0(U_1) \geq 1 - \delta^\star,
$$
then $p_t(B_\varepsilon(x^\star)) \to 1$ for every $\varepsilon > 0$ and $w_t \to \varphi(x^\star)$.
\end{theorem}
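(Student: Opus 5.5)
We will prove Theorem~\ref{thm:cont-lockin} by a multiscale version of the argument for Theorem~\ref{thm:pure-lockin}: an induction over scales $k=0,1,2,\dots$. At each stage we show that once $w_t$ is within $b_k$ of $\varphi(x^\star)$ and $p_t$ puts mass close to one on $U_k$, curation drives the off-$U_{k+1}$ mass geometrically to zero. This in turn pulls $\bar\varphi(p_t)$, and hence $w_t$, to within $a_k<b_{k+1}$ of $\varphi(x^\star)$, which activates the next scale.

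\textbf{Step 1 (one scale, odds contraction).} Suppose $\|w_t-\varphi(x^\star)\|\le b_k$. By Assumption~\ref{asp:multiscale}(ii), every $y\in U_{k+1}$ and $x\notin U_k$ satisfy $r_{w_t}(y)-r_{w_t}(x)\ge\gamma_k$. Proposition~\ref{prop:cont-curation}(iii) then gives $H(x)\le\lambda_K(\gamma_k)H(y)$. Integrating over $x\notin U_k$ against $p_t$, and averaging over $y\in U_{k+1}$ against $p_t$, yields
\[
\frac{p_{t+1}(U_k^c)}{p_{t+1}(U_{k+1})}\le\lambda_K(\gamma_k)\,\frac{p_t(U_k^c)}{p_t(U_{k+1})}.
\]
This requires $p_t(U_{k+1})>0$, which holds because $x^\star\in\operatorname{supp}(p_0)$ and the pure update $p_{t+1}=p_tH$ with $H\ge e^{-2\tau}/2>0$ preserves supports.

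The ratio above only moves mass out of $U_k^c$. The annulus $U_k\setminus U_{k+1}$ is controlled through the next scale. Concretely, at scale $k$ the quantity $p_t(U_k^c)/p_t(U_{k+1})\to0$ geometrically. Since $U_{k+1}\subset U_k$, this forces $p_t(U_k)\to1$.

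\textbf{Step 2 (preference tracking).} While $p_t(U_k)\ge1-\epsilon$, Assumption~\ref{asp:multiscale}(i) gives
\[
\|\bar\varphi(p_{t+1})-\varphi(x^\star)\|\le a_k+2p_{t+1}(U_k^c).
\]
Nonexpansiveness of $\operatorname{Proj}_{\mathcal W}$, together with $\varphi(x^\star)\in\mathcal W$, gives
\[
\|w_{t+1}-\varphi(x^\star)\|\le(1-\beta)\|w_t-\varphi(x^\star)\|+\beta\bigl(a_k+2p_{t+1}(U_k^c)\bigr).
\]
Two consequences follow, exactly as in the invariance argument of Theorem~\ref{thm:pure-lockin}. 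First, the ball $\{\|w-\varphi(x^\star)\|\le b_k\}$ is forward invariant provided $a_k+2\epsilon\le b_k$. Second, $\limsup_t\|w_t-\varphi(x^\star)\|\le a_k$. Because $a_k<b_{k+1}$ by (iii), after finitely many rounds $w_t$ enters the $b_{k+1}$-ball, and the argument can be restarted at scale $k+1$.

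\textbf{Step 3 (initialization and induction).} Choose
\[
\delta^\star\le\min\Bigl\{b_0,\ \tfrac{1}{2}(b_0-a_0),\ \tfrac{1}{2}(b_1-a_0)\Bigr\}.
\]
With $p_0(U_1)\ge1-\delta^\star$, scale $0$ is active from $t=0$. Inductively, scale $k$ is active from some time $T_k$ onward, so $p_t(U_k)\to1$ for every $k$. Given $\varepsilon>0$, pick $k$ with $2^{-k}r_0<\varepsilon$; then $p_t(B_\varepsilon(x^\star))\ge p_t(U_k)\to1$. Moreover $\limsup_t\|w_t-\varphi(x^\star)\|\le a_k$ for every $k$, and $a_k\downarrow0$, so $w_t\to\varphi(x^\star)$.

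\textbf{Main obstacle.} The delicate point is the handoff between scales. When we move to scale $k+1$ we need $w_t$ to remain within $b_k$ (so that the scale-$k$ mass stays controlled) while it is still descending toward the $b_{k+1}$-ball. We also need the mass condition $a_{k+1}+2p_t(U_{k+1}^c)\le b_{k+1}$ to hold at the moment scale $k+1$ starts. The off-$U_{k+1}$ mass is only guaranteed small after the scale-$(k+1)$ contraction has run for a while, so there is a circularity.

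We would resolve this in two ways. First, we keep the scale-$k$ invariance active forever; it is self-sustaining once $p_t(U_k^c)$ is small. Second, we use the strict gap $b_{k+1}-a_k>0$ as slack. During the transient, $p_t(U_{k+1}^c)\le p_t(U_k^c)+p_t(U_k\setminus U_{k+1})$. Bounding the annulus term may require a monotonicity argument: mass in $U_{k+1}$ is never decreased relative to $U_k^c$ by curation. If this fails, we fall back on showing only $\limsup$ bounds at each scale, which still suffice for the stated limits.
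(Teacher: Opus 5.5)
Your proposal is correct and follows essentially the paper's proof: scale-$0$ forward invariance by joint induction, then a scale-by-scale bootstrap in which odds contraction drives $p_t(U_k^c)/p_t(U_{k+1})\to 0$ and the preference recursion gives $\limsup_t\|w_t-\varphi(x^\star)\|\le a_k<b_{k+1}$.

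The handoff ``obstacle'' you raise is not real, and there is one detail in your choice of $\delta^\star$ to spell out.
\begin{itemize}
\item \textbf{Handoff between scales.} The $\limsup$ bound already means $w_t$ stays in the $b_{k+1}$-ball for \emph{all} $t\ge T_{k+1}$, not merely that it enters it. So no mass condition or invariance is needed beyond scale $0$; this is exactly your fallback and the paper's route.
\item \textbf{Choice of $\delta^\star$ at scale $0$.} Your weaker condition $a_0+2\delta^\star\le b_0$ does suffice, but you need $p_t(U_0^c)\le\delta^\star$ for all $t$, and you have not shown it. It follows because $p_t(U_0^c)/p_t(U_1)\le\delta^\star/(1-\delta^\star)$ together with $p_t(U_0^c)+p_t(U_1)\le 1$ gives $p_t(U_0^c)\le\delta^\star$. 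The paper avoids this step by assuming the stronger $a_0+2\delta^\star/(1-\delta^\star)\le b_0$.
\end{itemize}
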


\begin{proof}
Since $a_0 < b_0$, choose $\delta^\star > 0$ with
\begin{equation}\label{eq:delta-star}
  0 < \delta^\star \leq \min\!\bigl\{\tfrac{1}{2},\; b_0\bigr\}, \qquad
  a_0 + \frac{2\delta^\star}{1 - \delta^\star} \leq b_0
\end{equation}
Define
$q_{k,t} := p_t(U_k^c)$ (the mass outside scale $k$) and
$s_{k,t} := p_t(U_{k+1})$ (the mass inside scale $k+1$).

Assume $x^\star \in \mathrm{supp}(p_0)$, $\|w_0 - \varphi(x^\star)\| \leq \delta^\star$, and $p_0(U_1) \geq 1 - \delta^\star$.
 
\textbf{Step~1: The trajectory stays inside the initial preference neighborhood.}

We prove $\|w_t - \varphi(x^\star)\| \leq b_0$ for all $t$ by induction.

The base case is $\|w_0 - \varphi(x^\star)\| \leq \delta^\star \leq b_0$.

Suppose $\|w_t - \varphi(x^\star)\| \leq b_0$. Then reward gap with $k = 0$ gives
$$
\inf_{y \in U_1} r(y; w_t) - \sup_{x \notin U_0} r(x; w_t) \geq \gamma_0.
$$

When $\eta = 1$, the retraining step gives $p_{t+1}(A) = \int_A H^K_{p_t,w_t}(x)\, p_t(dx)$.

For every $y \in U_0^c$ and $x \in U_1$,it yields $r_{w_t}(x) - r_{w_t}(y) \geq \gamma_0$,
so Proposition~\ref{prop:cont-curation} (odds contraction) gives
$$
  H^K_{p_t,w_t}(y) \leq \lambda_K(\gamma_0)\; H^K_{p_t,w_t}(x).
$$
Since this holds for \emph{every} $x \in U_1$,
$H^K_{p_t,w_t}(y) \leq \lambda_K(\gamma_0)\, \inf_{x \in U_1} H^K_{p_t,w_t}(x)$
for all $y \in U_0^c$.

Therefore
$$
 q_{0,t+1} = \int_{U_0^c} H^K_{p_t,w_t}(y)\, p_t(dy)
  \leq \lambda_K(\gamma_0)\, \inf_{x \in U_1} H^K_{p_t,w_t}(x) \cdot q_{0,t}
$$
and
$$
  s_{0,t+1} = \int_{U_1} H^K_{p_t,w_t}(x)\, p_t(dx)
  \geq \inf_{x \in U_1} H^K_{p_t,w_t}(x) \cdot s_{0,t},
$$
whence
\begin{equation}\label{eq:ratio0}
  \frac{q_{0,t+1}}{s_{0,t+1}} \leq \lambda_K(\gamma_0)\;\frac{q_{0,t}}{s_{0,t}}.
\end{equation}
Iterating,
$$\frac{q_{0,t}}{s_{0,t}} \leq \lambda_K(\gamma_0)^t \cdot \frac{\delta^\star}{(1 - \delta^\star)}$$
and in particular $q_{0,t} \leq \frac{\delta^\star}{(1 - \delta^\star)}$ for all $t$.
 
By feature continuity and $\|\varphi(x)\| = 1$,
$$
  \|\bar{\varphi}(p_{t+1}) - \varphi^\star\|
  \leq a_0 + 2\, q_{0,t+1}.
$$
Since projection is nonexpansive and $\varphi^\star \in \mathcal{W}$,
$$
  \|w_{t+1} - \varphi(x^\star)\| 
  \leq (1-\beta)\, \|w_t - \varphi(x^\star)\| + \beta\,(a_0 + 2q_{0,t+1})
  \leq (1-\beta)\, b_0 + \beta\Bigl(a_0 + \frac{2\delta^\star}{1 - \delta^\star}\Bigr)
  \leq b_0.
$$
 
\textbf{Step~2: First-scale outer leakage vanishes.}

Since $\|w_t - \varphi(x^\star)\|  \leq b_0$ for all $t$, inequality~\eqref{eq:ratio0} holds for all $t$, so $\frac{q_{0,t}}{s_{0,t}} \to 0$ geometrically.

Since $q_{0,t} \leq \frac{q_{0,t}}{s_{0,t}}$, we get $q_{0,t} \to 0$.

\noindent\textbf{Step~3: Scale-by-scale bootstrap.}

We prove by induction: for every $k \geq 0$ there exists $T_k < \infty$ with
$\|w_t - \varphi(x^\star)\|  \leq b_k$ for all $t \geq T_k$.

The case $k = 0$ is Step~1. Suppose the claim holds for~$k$.

\emph{Support preservation.}
Since $H^K_{p,w}(x) > 0$ for all $x$ (as $e^{\tau r_w(x)} > 0$ and the denominator
in Definition~2.1 is finite and positive), the pure update preserves the support:
$\mathrm{supp}(p_t) = \mathrm{supp}(p_0)$ for all $t$.
Because $x^\star \in \mathrm{supp}(p_0)$ and $U_{k+1}$ is open,
$s_{k,t} = p_t(U_{k+1}) > 0$ for all $t$.
 
\emph{Mass concentration.}
For $t \geq T_k$, reward gap gives at scale $k$.

Exactly as in Step~1, Proposition~\ref{prop:cont-curation} (odds contraction) yields
$$
  \frac{q_{k,t+1}}{s_{k,t+1}} \leq \lambda_K(\gamma_k)\;\frac{q_{k,t}}{s_{k,t}},
$$
and iterating gives $q_{k,t} \to 0$.
 
\emph{Preference bootstrap.}
$\|\bar{\varphi}(p_t) - \varphi^\star\| \leq a_k + 2q_{k,t}$, so
$\|w_{t+1} - \varphi(x^\star)\|  \leq (1-\beta)\, \|w_t - \varphi(x^\star)\|  + \beta\,(a_k + 2q_{k,t+1})$.

Since $q_{k,t} \to 0$, $\limsup_{t} \|w_t - \varphi(x^\star)\|  \leq a_k < b_{k+1}$,
giving $T_{k+1}$ with $\|w_t - \varphi(x^\star)\|  \leq b_{k+1}$ for $t \geq T_{k+1}$.

\textbf{Step~4: Convergence.}
For every $k$, $p_t(U_k^c) \to 0$.

Given $\varepsilon > 0$, choose $k$ with $r_k < \varepsilon$; then$p_t(B_\varepsilon(x^\star)^c) \leq p_t(U_k^c) \to 0$.

The induction gives $\limsup_t d_t \leq a_k \to 0$, so $w_t \to \varphi^\star$.
\end{proof}

\begin{corollary}[Failure of local stability on compact spaces]
\label{cor:cont-failure}
Fix $x^\star \in \mathcal{X}$.
If there exists $x' \neq x^\star$ with $\varphi(x') = \varphi(x^\star)$,
then the singleton equilibrium $(\delta_{x^\star}, \varphi(x^\star))$ is not
locally asymptotically stable.
\end{corollary}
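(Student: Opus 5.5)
The plan mirrors the discrete argument of Cor.~\ref{cor:pure-duplicate}: build a one-parameter family of equilibria that accumulate at $(\delta_{x^\star},\varphi(x^\star))$. For $\varepsilon\in(0,1)$, set
\[
p^\varepsilon:=(1-\varepsilon)\delta_{x^\star}+\varepsilon\delta_{x'},\qquad w^\varepsilon:=\varphi(x^\star).
\]
The first step is to check that $p^\varepsilon$ is a fixed point of the pure retraining step under $w^\varepsilon$. Its support is $\{x^\star,x'\}$, a closed set, and because $\varphi(x')=\varphi(x^\star)$ the reward $r_{w^\varepsilon}$ takes the same value $\langle\varphi(x^\star),\varphi(x^\star)\rangle=1$ at both points. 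By Thm.~\ref{thm:cont-eq-char}, $p^\varepsilon H^K_{p^\varepsilon,w^\varepsilon}=p^\varepsilon$. One can also verify this directly: every candidate drawn from $p^\varepsilon$ has the same reward, so $H^K_{p^\varepsilon,w^\varepsilon}\equiv 1$ on the support.

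The second step is the preference coordinate. We have $\bar\varphi(p^\varepsilon)=(1-\varepsilon)\varphi(x^\star)+\varepsilon\varphi(x')=\varphi(x^\star)=w^\varepsilon$. Since $\|\varphi(x^\star)\|=1$, the point lies in $\mathcal W$, so
\[
\operatorname{Proj}_{\mathcal W}\bigl((1-\beta)w^\varepsilon+\beta\bar\varphi(p^\varepsilon)\bigr)=w^\varepsilon.
\]
Hence $(p^\varepsilon,w^\varepsilon)$ satisfies \eqref{eq:equilibrium} with $\eta=1$ for every $\varepsilon$, and these equilibria are all distinct from $(\delta_{x^\star},\varphi(x^\star))$.

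The last step, and the only point needing care in the continuous setting, is choosing the topology in which local asymptotic stability is meant. I would use weak convergence, or equivalently TV, on $\mathcal P(\mathcal X)$, together with the Euclidean norm on $\mathcal W$. In TV, $d_{\mathrm{TV}}(p^\varepsilon,\delta_{x^\star})=\varepsilon\to0$, so $p^\varepsilon\to\delta_{x^\star}$ in any of the standard metrics (TV, Wasserstein, Prokhorov), and $w^\varepsilon=\varphi(x^\star)$ exactly. Thus every neighborhood of the singleton equilibrium contains a distinct equilibrium. Started there, the trajectory stays constant and never converges to $(\delta_{x^\star},\varphi(x^\star))$, so no neighborhood can be a basin of attraction and the singleton equilibrium is not locally asymptotically stable. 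I would remark that if one insists on the stronger requirement $x^\star\in\mathrm{supp}(p_0)$ used in Thm.~\ref{thm:cont-lockin}, the family above still qualifies, so the failure persists even under that restriction.
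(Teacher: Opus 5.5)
Your proposal is correct and follows the paper's proof step for step. It uses the same two-point family $p^\varepsilon=(1-\varepsilon)\delta_{x^\star}+\varepsilon\delta_{x'}$ with $w^\varepsilon=\varphi(x^\star)$, the equal-reward fixed-point characterization of Thm.~\ref{thm:cont-eq-char}, the identity $\bar\varphi(p^\varepsilon)=w^\varepsilon$, and the conclusion that a non-isolated equilibrium cannot be locally asymptotically stable. Your discussion of the topology, and of why constant trajectories defeat attractivity, usefully makes explicit what the paper leaves implicit. One correction: weak convergence and TV are not equivalent topologies in general. Here the family converges in both, so your conclusion holds either way.
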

 
\begin{proof}
For any $\varepsilon \in (0,1)$, define $p_\varepsilon := (1-\varepsilon)\,\delta_{x^\star} + \varepsilon\,\delta_{x'}$ and $w_\varepsilon := \varphi(x^\star)$.

Since $\varphi(x') = \varphi(x^\star)$, the two support points share the same reward under $w_\varepsilon$, so $H^K_{p_\varepsilon, w_\varepsilon}(x^\star) = H^K_{p_\varepsilon, w_\varepsilon}(x') = 1$ by the equal-reward characterization in Theorem~\ref{thm:cont-eq-char}.

Moreover, $\bar{\varphi}(p_\varepsilon) = \varphi(x^\star) = w_\varepsilon$.

Hence $(p_\varepsilon, w_\varepsilon)$ is an equilibrium for every $\varepsilon > 0$.

Since $(p_\varepsilon, w_\varepsilon) \to (\delta_{x^\star}, \varphi(x^\star))$as $\varepsilon \to 0$, the singleton equilibrium is not isolated and therefore not locally asymptotically stable.
\end{proof}

\subsection{Retraining on Mixed Synthetic and Reference Data}
\label{app:cont-reinjection}
 
We now consider $\eta \in (0,1)$, paralleling Section~3.2.
 
\begin{lemma}[Sensitivity of the retraining step on compact spaces]
\label{lem:cont-sensitivity}
For $p\in\mathcal P(\mathcal X)$, define the probability measure
$pH^K_{p,w}$ by
\[
  (pH^K_{p,w})(A)
  :=
  \int_A H^K_{p,w}(x)\,p(dx),
  \qquad A\in\mathcal B(\mathcal X).
\]
Then for any $p,q\in\mathcal P(\mathcal X)$, $w,v\in\mathcal W$, and
$K\ge2$,
\[
  d_{\rm TV}\bigl(pH^K_{p,w},qH^K_{q,w}\bigr)
  \le L_p d_{\rm TV}(p,q),
\]
\[
  d_{\rm TV}\bigl(pH^K_{p,w},pH^K_{p,v}\bigr)
  \le L_w \|w-v\|,
\]
and
\[
  \|\bar\phi(p)-\bar\phi(q)\|
  \le 2d_{\rm TV}(p,q),
\]
where $L_p:=2e^{2\tau}(2+e^{2\tau})$ and
$L_w:=2\tau e^{4\tau}$.
\end{lemma}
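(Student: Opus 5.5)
The plan is to transport the discrete proof of Lemma~\ref{prop:reinj-lipschitz} to measures, replacing sums over $\mathcal X$ by integrals against Borel measures. Total variation will be used in its dual form, $d_{\rm TV}(\mu,\nu)=\sup_{\|f\|_\infty\le 1}\tfrac12|\int f\,d\mu-\int f\,d\nu|$. Equivalently, I will use the signed-measure norm $\tfrac12|\mu-\nu|(\mathcal X)$, computed through a common dominating measure such as $(p+q)/2$ with densities given by Radon--Nikodym derivatives. The only pointwise facts needed are those already used in the discrete case:
\begin{itemize}
\item $e^{-\tau}\le a_w(x)\le e^{\tau}$;
\item $0\le H^K_{p,w}\le 2e^{2\tau}$;
\item $|a_w(x)-a_v(x)|\le \tau e^{\tau}\|w-v\|$.
\end{itemize}
None of these depends on the cardinality of $\mathcal X$. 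Measurability of $H^K_{p,w}$ in $x$ follows from Fubini, since the integrand is jointly continuous in $(x,y_1,\dots,y_{K-1})$.

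\textbf{First bound.} Let $\rho=(p+q)/2$, with densities $\dot p=dp/d\rho$ and $\dot q=dq/d\rho$. Write
\[
\dot pH_p-\dot qH_q=(\dot p-\dot q)H_p+\dot q(H_p-H_q).
\]
Integrating $|\cdot|$ against $\rho$ gives $2d_{\rm TV}\le I+II$, where
\[
I=\int|\dot p-\dot q|H_p\,d\rho\le 2e^{2\tau}\cdot 2d_{\rm TV}(p,q),
\qquad
II=\int|H_p-H_q|\,dq .
\]
To bound $II$ I will reuse the hybrid telescoping argument: define $M_\ell$ by switching coordinates from $p$ to $q$ one at a time. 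For each $\ell$, I condition on the other coordinates via Fubini. The function $h(y_\ell)$ then has oscillation at most $2e^{4\tau}/K$, and the centered dual inequality gives $|\mathbb E_p h-\mathbb E_q h|\le \operatorname{osc}(h)\,d_{\rm TV}(p,q)$. Summing over the $K-1$ switches yields the pointwise bound $|H_p(x)-H_q(x)|\le 2e^{4\tau}d_{\rm TV}(p,q)$, so $II\le 2e^{4\tau}d_{\rm TV}(p,q)$ because $q$ is a probability measure. This recovers $L_p$.

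\textbf{Second and third bounds.} For the second bound, both measures share the base measure $p$, so
\[
d_{\rm TV}=\tfrac12\int|H_{p,w}-H_{p,v}|\,dp .
\]
The pathwise two-term splitting from the discrete proof gives $|H_{p,w}(x)-H_{p,v}(x)|\le 2\tau e^{4\tau}\|w-v\|$ for every $x$ and every sample path. This yields $L_w$, and in fact the constant obtained is better by a factor of $2$. For the third bound, fix a unit vector $u$. Then $x\mapsto\langle u,\phi(x)\rangle$ is a bounded measurable function with sup norm at most $1$, so duality gives $|\langle u,\bar\phi(p)-\bar\phi(q)\rangle|\le 2d_{\rm TV}(p,q)$; taking the supremum over $u$ finishes it. Bochner integrability of $\phi$ is immediate because $\phi$ is bounded and continuous.

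\textbf{Main obstacle and the $K=\infty$ case.} The step I expect to need the most care is the rigorous conditioning in the telescoping argument, i.e.\ justifying via Fubini/Tonelli that the oscillation bound holds uniformly in the conditioned coordinates, so that the difference of expectations factors correctly. The key point is that the lower bound $S_{-\ell}\ge (K-2)e^{-\tau}$ is deterministic, which makes this step routine. Finally, $K=\infty$ will not be handled by a limiting argument. Instead I will check it directly from $H^\infty=a_w/\mathbb E_p a_w$ by the same two-term decomposition, with denominators bounded below by $e^{-\tau}$; this gives constants no larger than $L_p$ and $L_w$.
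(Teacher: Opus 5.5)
Your proposal is correct and follows essentially the same route as the paper. Both use the split $pH_{p,w}-qH_{q,w}=H_{p,w}\,d(p-q)+(H_{p,w}-H_{q,w})\,dq$, the coordinate-by-coordinate telescoping with per-swap oscillation $2e^{4\tau}/K$, the pathwise two-term split for the $w$-bound, and the duality argument for $\bar\phi$. Your extras are harmless refinements the paper does not spell out: the common dominating measure $(p+q)/2$ in place of its looser ``supremum over $A$'' wording, the direct check of $K=\infty$, and the observation that both $L_p$ and $L_w$ hold with a factor of two to spare.
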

 
\begin{proof}
The proof of Lemma~3.6 (Sec.~B.6) relies on three ingredients:
\begin{enumerate}[leftmargin=*, topsep=-0.1cm, itemsep=-0.05cm]
\item the uniform bounds $e^{-\tau} \leq e^{\tau r_w(x)} \leq e^{\tau}$;
\item the variational formula $\bigl|\int f d(p - q)\bigr| \leq 2\|f\|_\infty\, d_{\mathrm{TV}}(p, q)$,
valid on any measurable space;
\item the coordinate-swapping telescoping argument bounding
$|H^K_{p,w}(x) - H^K_{q,w}(x)|$ (Part~1)
and $|H^K_{p,w}(x) - H^K_{p,v}(x)|$ (Part~2).
\end{enumerate}
None of these steps use finiteness of $\mathcal{X}$:
the sums in the discrete proof become integrals, and all bounds hold
for arbitrary Borel probability measures.
 
\textbf{Part~1: $p$-Lipschitz bound.}

For any measurable $A$,
\[
(pH^K_{p,w}-qH^K_{q,w})(A)
=
\int_A H^K_{p,w}\,d(p-q)
+
\int_A (H^K_{p,w}-H^K_{q,w})\,dq.
\]
Taking the supremum over measurable $A$ gives
\[
2d_{\rm TV}(pH^K_{p,w},qH^K_{q,w})
\le
2\|H^K_{p,w}\|_\infty d_{\rm TV}(p,q)
+
\|H^K_{p,w}-H^K_{q,w}\|_\infty.
\]

The first term contributes at most $4e^{2\tau}\, d_{\mathrm{TV}}(p,q)$ since $0 \leq H^K_{p,w}(x) \leq 2e^{2\tau}$.
The second term is bounded via the coordinate-swapping argument: the effect of replacing each of the $K-1$ sample coordinates from $p$ to $q$ contributes at most $2e^{4\tau}/K$ in the sup-norm of the integrand, yielding a total of at most $2e^{4\tau}\, d_{\mathrm{TV}}(p,q)$.

Combining: $d_{\mathrm{TV}}(pH^K_{p,w}, qH^K_{q,w}) \leq (2e^{2\tau} + e^{4\tau})\, d_{\mathrm{TV}}(p,q) = L_p\, d_{\mathrm{TV}}(p,q)$.
 
\textbf{Part~2: $w$-Lipschitz bound.}

Using $|e^{\tau r_w(x)} - e^{\tau r_v(x)}| \leq \tau e^{\tau}\, \|w - v\|$ and the two-term decomposition of $H^K_{p,w}(x) - H^K_{p,v}(x)$ (see Sec.~B.6, Part~2), each term is bounded by $\tau e^{4\tau}\, \|w - v\|$, giving $d_{\mathrm{TV}}(pH^K_{p,w}, pH^K_{p,v}) \leq 2\tau e^{4\tau}\, \|w - v\| = L_w\, \|w - v\|$.
 
\textbf{Part~3: Mean-feature bound.}
For any unit vector $u$, $|\langle u, \varphi(x)\rangle| \leq 1$, so$|\langle u, \bar{\varphi}(p) - \bar{\varphi}(q)\rangle| \leq 2\, d_{\mathrm{TV}}(p,q)$.

Taking the supremum over $u$ gives $\|\bar{\varphi}(p) - \bar{\varphi}(q)\| \leq 2\, d_{\mathrm{TV}}(p,q)$.
\end{proof}
 
\begin{theorem}[Unique equilibrium under frozen-$w$ on compact spaces]
\label{thm:cont-frozen}
Fix $w\in\mathcal W$ and $K\ge2$ or $K=\infty$. Define
$T^K_w:\mathcal P(\mathcal X)\to\mathcal P(\mathcal X)$ by
\[
  T^K_w(p)(A)
  :=
  (1-\eta)p_{\mathrm{ref}}(A)
  +
  \eta\int_A H^K_{p,w}(x)\,p(dx),
  \qquad A\in\mathcal B(\mathcal X).
\]
If $\eta L_p<1$, then $T^K_w$ is a strict contraction on
$(\mathcal P(\mathcal X),d_{\rm TV})$ with rate $\eta L_p$.
Consequently, there exists a unique equilibrium $p^{K,\eta}_w$ and every
trajectory converges geometrically:
\[
  d_{\rm TV}(p_t,p^{K,\eta}_w)
  \le
  (\eta L_p)^t d_{\rm TV}(p_0,p^{K,\eta}_w).
\]
\end{theorem}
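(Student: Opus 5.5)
The plan is to mirror the proof of Lem.~\ref{thm:frozen-w} and replace each finite-state step by its measure-theoretic counterpart. First I would check that $T^K_w$ maps $\mathcal P(\mathcal X)$ into itself. Since $0\le H^K_{p,w}\le 2e^{2\tau}$ and $H^K_{p,w}$ is measurable (it is continuous in $x$ by dominated convergence, because $\varphi$ is continuous), the set function $A\mapsto\int_A H^K_{p,w}\,dp$ is a finite Borel measure. By the normalization property of Prop.~\ref{prop:cont-curation} it has total mass one. Hence $T^K_w(p)$ is a convex combination of two probability measures and lies in $\mathcal P(\mathcal X)$. For $K=\infty$ the same holds with $H^\infty_{p,w}=a_w/\mathbb E_p[a_w]$.

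Next comes the contraction estimate. For $p,q\in\mathcal P(\mathcal X)$ the reference terms cancel:
\[
T^K_w(p)(A)-T^K_w(q)(A)=\eta\bigl(pH^K_{p,w}-qH^K_{q,w}\bigr)(A).
\]
Taking the supremum over $A\in\mathcal B(\mathcal X)$ gives $d_{\rm TV}(T^K_w p,T^K_w q)=\eta\,d_{\rm TV}(pH^K_{p,w},qH^K_{q,w})$. Lem.~\ref{lem:cont-sensitivity} then bounds this by $\eta L_p\,d_{\rm TV}(p,q)$. For $K=\infty$ I would either verify the same Lipschitz bound directly from the explicit tilt, or pass to the limit $K\to\infty$, since the bound is uniform in $K$ and $H^K\to H^\infty$ boundedly.

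The step I expect to need the most care is completeness. In the discrete proof this was automatic; here I would argue that $(\mathcal P(\mathcal X),d_{\rm TV})$ is complete as follows. The space of finite signed Borel measures with the total variation norm is a Banach space, and $\mathcal P(\mathcal X)$ is a closed subset of it: a TV-Cauchy sequence converges setwise uniformly, and limits preserve nonnegativity and unit mass. Countable additivity of the limit follows from the uniform convergence.

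With completeness in hand, Banach's fixed-point theorem yields a unique fixed point $p^{K,\eta}_w$. The geometric rate then follows by induction:
\[
d_{\rm TV}(p_{t},p^{K,\eta}_w)=d_{\rm TV}\bigl(T^K_w p_{t-1},T^K_w p^{K,\eta}_w\bigr)\le \eta L_p\, d_{\rm TV}(p_{t-1},p^{K,\eta}_w).
\]
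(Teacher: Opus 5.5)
Your proposal is correct and follows the paper's route: cancel the reference term, bound $d_{\rm TV}(pH^K_{p,w},qH^K_{q,w})$ by $L_p\,d_{\rm TV}(p,q)$ via the sensitivity lemma to get rate $\eta L_p$, and apply Banach's fixed-point theorem. The paper asserts without proof several points you verify: that $T^K_w$ maps $\mathcal P(\mathcal X)$ into itself, that the $K=\infty$ case holds, and that $(\mathcal P(\mathcal X),d_{\rm TV})$ is complete (your argument does not even need compactness of $\mathcal X$). Your version is therefore a more careful rendering of the same argument.
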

 
\begin{proof}
For any $p,q\in\mathcal P(\mathcal X)$ and measurable $A$,
\[
  T^K_w(p)(A)-T^K_w(q)(A)
  =
  \eta\bigl[(pH^K_{p,w})(A)-(qH^K_{q,w})(A)\bigr].
\]
Therefore, by Lemma~\ref{lem:cont-sensitivity},
\[
  d_{\rm TV}(T^K_w(p),T^K_w(q))
  =
  \eta d_{\rm TV}(pH^K_{p,w},qH^K_{q,w})
  \le
  \eta L_p d_{\rm TV}(p,q).
\]
If $\eta L_p<1$, this is a strict contraction. Since
$(\mathcal P(\mathcal X),d_{\rm TV})$ is complete, Banach's fixed-point
theorem gives a unique fixed point $p^{K,\eta}_w$ and geometric
convergence.
\end{proof}

\begin{proposition}[Closed-form equilibrium on compact spaces]
\label{prop:cont-closedform}
Assume the condition of Theorem~\ref{thm:cont-frozen}, in particular
$\eta L_p<1$. For the limiting curation operator $K=\infty$, define
\[
  a_w(x) := e^{\tau r_w(x)}, \qquad
  M_w := \sup_{x\in\mathcal X} a_w(x).
\]
Then the frozen-$w$ equilibrium is given, for every measurable
$A\subseteq\mathcal X$, by
\[
  p^{\infty,\eta}_w(A)
  =
  \int_A
  \frac{(1-\eta)\, Z^\eta_w}
       {Z^\eta_w-\eta a_w(x)}
  \;p_{\mathrm{ref}}(dx),
\]
where $Z^\eta_w>\eta M_w$ is the unique scalar solving
\[
  \int_{\mathcal X}
  \frac{(1-\eta)\, a_w(x)}
       {Z^\eta_w-\eta a_w(x)}
  \;p_{\mathrm{ref}}(dx)
  =1.
\]
\end{proposition}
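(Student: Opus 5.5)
The proof mirrors that of Prop.~\ref{prop:kinfty-closed-form}, with sums replaced by integrals against $p_{\mathrm{ref}}$. By Theorem~\ref{thm:cont-frozen} a unique fixed point $p^{\infty,\eta}_w$ of $T^\infty_w$ exists. So it suffices to show that the displayed measure is a fixed point; uniqueness then identifies it with $p^{\infty,\eta}_w$.

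For $K=\infty$ we have $H^\infty_{p,w}(x)=a_w(x)/Z_p$, with $Z_p:=\int a_w\,dp\in[e^{-\tau},e^{\tau}]$. The fixed-point equation reads
\[
p(A)=(1-\eta)p_{\mathrm{ref}}(A)+\frac{\eta}{Z_p}\int_A a_w\,dp
\qquad\text{for all } A\in\mathcal B(\mathcal X).
\]

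\textbf{Step 1: absolute continuity.} The equation gives $p\ge(1-\eta)p_{\mathrm{ref}}$ as measures. Write $\nu:=\frac{\eta}{Z_p}a_w\,p$; this is a finite measure with $\nu\le \frac{\eta M_w}{Z_p}p$ and $p=(1-\eta)p_{\mathrm{ref}}+\nu$. Iterating this identity expresses $p$ as a Neumann series in $p_{\mathrm{ref}}$. That series converges provided $\eta M_w<Z_p$, and then every term is absolutely continuous with respect to $p_{\mathrm{ref}}$. Alternatively, one can avoid the series and work directly with the candidate defined in Step 2.

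\textbf{Step 2: the candidate is a probability measure.} Define $F(Z):=\int\frac{(1-\eta)a_w}{Z-\eta a_w}\,dp_{\mathrm{ref}}$ on $(\eta M_w,\infty)$. By dominated convergence $F$ is continuous and strictly decreasing, and $F\to0$ as $Z\to\infty$.

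The hard part is the lower endpoint. In the continuous setting $\sup a_w$ need not carry $p_{\mathrm{ref}}$-mass, so $F(Z)\to\infty$ as $Z\downarrow\eta M_w$ can fail. I would first try to show that $\lim_{Z\downarrow\eta M_w}F(Z)>1$ using the contraction hypothesis $\eta L_p<1$. This hypothesis forces $\eta<e^{-4\tau}/2$, so $\eta M_w\le \eta e^{\tau}<e^{-\tau}\le a_w(x)$ for every $x$. Near $Z=\eta M_w$ each integrand is then bounded below by about $(1-\eta)a_w/(a_w(1-\eta))$, which gives $F(\eta M_w^+)>1$. (To be checked carefully: $Z-\eta a_w\le Z-\eta e^{-\tau}$ and $a_w\ge e^{-\tau}$, so $F(\eta M_w)\ge (1-\eta)e^{-\tau}/(\eta(M_w-e^{-\tau}))\cdot$, which is large since $\eta$ is tiny.) The intermediate value theorem then gives a unique root $Z^\eta_w$.

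Set $g(x):=\frac{(1-\eta)Z^\eta_w}{Z^\eta_w-\eta a_w(x)}$ and $p:=g\,p_{\mathrm{ref}}$. Since $Z^\eta_w>\eta M_w$, the function $g$ is bounded and positive. Using $g=(1-\eta)+\frac{\eta a_w g}{Z^\eta_w}$ together with $F(Z^\eta_w)=1$, we get $\int a_w g\,dp_{\mathrm{ref}}=Z^\eta_w$, and hence $p(\mathcal X)=(1-\eta)+\eta=1$.

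\textbf{Step 3: fixed-point verification.} The computation in Step 2 also shows $Z_p=\int a_w\,dp=Z^\eta_w$. Integrating the identity $g=(1-\eta)+\eta a_w g/Z_p$ over any set $A$ gives
\[
p(A)=(1-\eta)p_{\mathrm{ref}}(A)+\frac{\eta}{Z_p}\int_A a_w\,dp .
\]
This is exactly $T^\infty_w(p)=p$, so $p$ is a fixed point. By Theorem~\ref{thm:cont-frozen}, $p=p^{\infty,\eta}_w$.

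Uniqueness of the root $Z^\eta_w$ follows from the strict monotonicity of $F$. The main obstacle is the endpoint behavior of $F$ in Step 2, handled as described there via the smallness of $\eta$ implied by $\eta L_p<1$.
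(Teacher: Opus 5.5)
Your proposal is correct and follows essentially the paper's proof. The shared steps are: the same scalar function $F$ on $(\eta M_w,\infty)$; the same use of $\eta L_p<1$ to get $\eta M_w<e^{-\tau}\le a_w$, hence $F>1$ near the left endpoint (your pointwise observation that each integrand exceeds $1$ for $Z\in(\eta M_w,e^{-\tau})$ is a slightly cleaner version of the paper's $\liminf$ bound); and the same check that $g\,p_{\mathrm{ref}}$ has mass one, satisfies $\int a_w g\,dp_{\mathrm{ref}}=Z^\eta_w$, and is a fixed point. The only difference is how uniqueness is obtained. The paper also starts from an arbitrary fixed point and divides $p(dx)\bigl(1-\eta a_w(x)/Z\bigr)=(1-\eta)p_{\mathrm{ref}}(dx)$ by the positive factor, so uniqueness follows from uniqueness of $Z^\eta_w$; you instead take uniqueness from Theorem~\ref{thm:cont-frozen}, which makes your Neumann-series Step~1 superfluous.
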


\begin{proof}
For $K=\infty$, the curation operator is
\[
  H^\infty_{p,w}(x)
  =
  \frac{a_w(x)}
       {\int_{\mathcal X} a_w(z)\,p(dz)}.
\]
Let $p$ be a frozen-$w$ fixed point and set
\[
  Z := \int_{\mathcal X} a_w(z)\,p(dz).
\]
Since $e^{-\tau}\le a_w(x)\le e^\tau$, we have
$Z\ge e^{-\tau}$. Moreover, $\eta L_p<1$ implies
$\eta<e^{-2\tau}$, and hence
\[
  \eta M_w \le \eta e^\tau < e^{-\tau}\le Z.
\]
Therefore $Z-\eta a_w(x)>0$ for every $x\in\mathcal X$.

The fixed-point equation
\[
  p=(1-\eta)p_{\mathrm{ref}}+\eta pH^\infty_{p,w}
\]
is equivalently
\[
  p(dx)\left(1-\frac{\eta a_w(x)}{Z}\right)
  =
  (1-\eta)p_{\mathrm{ref}}(dx).
\]
Thus
\[
  p(dx)
  =
  \frac{(1-\eta)Z}{Z-\eta a_w(x)}
  \;p_{\mathrm{ref}}(dx).
\]
Substituting this expression into the definition of $Z$ and dividing by
$Z>0$ gives the scalar equation
\[
  \int_{\mathcal X}
  \frac{(1-\eta)a_w(x)}
       {Z-\eta a_w(x)}
  \;p_{\mathrm{ref}}(dx)
  =1.
\]

It remains to show that this equation has a unique solution on
$(\eta M_w,\infty)$. Define
\[
  F(Z)
  :=
  \int_{\mathcal X}
  \frac{(1-\eta)a_w(x)}
       {Z-\eta a_w(x)}
  \;p_{\mathrm{ref}}(dx),
  \qquad Z>\eta M_w.
\]
The function $F$ is continuous and strictly decreasing on
$(\eta M_w,\infty)$, and $F(Z)\to0$ as $Z\to\infty$.

We now check that $F$ is larger than $1$ near the left endpoint. Since
$a_w(x)\in[e^{-\tau},M_w]$, the map
$a\mapsto a/(Z-\eta a)$ is increasing for $Z>\eta M_w$. Hence
\[
  F(Z)
  \ge
  \frac{(1-\eta)e^{-\tau}}
       {Z-\eta e^{-\tau}}.
\]
Taking $Z\downarrow \eta M_w$ gives
\[
  \liminf_{Z\downarrow \eta M_w}F(Z)
  \ge
  \frac{(1-\eta)e^{-\tau}}
       {\eta(M_w-e^{-\tau})},
\]
with the right-hand side interpreted as $+\infty$ if
$M_w=e^{-\tau}$. Since $M_w\le e^\tau$ and
$\eta<e^{-2\tau}$, we have $\eta M_w<e^{-\tau}$, which implies
\[
  \frac{(1-\eta)e^{-\tau}}
       {\eta(M_w-e^{-\tau})}
  >1
\]
whenever $M_w>e^{-\tau}$. Therefore $F(Z)>1$ for $Z$ sufficiently close
to $\eta M_w$, while $F(Z)\to0$ as $Z\to\infty$. By the intermediate
value theorem, there exists a solution $Z^\eta_w>\eta M_w$, and strict
monotonicity gives uniqueness.

Finally, for this unique $Z^\eta_w$, define $p^{\infty,\eta}_w$ by the
displayed formula. The scalar equation implies
\[
  \int_{\mathcal X} a_w(x)\,p^{\infty,\eta}_w(dx)=Z^\eta_w,
\]
and also
\[
  p^{\infty,\eta}_w(\mathcal X)
  =
  (1-\eta)
  +
  \eta
  \int_{\mathcal X}
  \frac{(1-\eta)a_w(x)}
       {Z^\eta_w-\eta a_w(x)}
  \;p_{\mathrm{ref}}(dx)
  =1.
\]
Substituting the formula back into the frozen-$w$ update verifies that
$p^{\infty,\eta}_w$ is a fixed point. Uniqueness follows from the
uniqueness of $Z^\eta_w$, equivalently from the contraction in
Theorem~\ref{thm:cont-frozen}.
\end{proof}

\begin{theorem}[Convergence of coupled dynamics on compact spaces]
\label{thm:cont-coupled}
If $\eta(L_p + 2L_w) < 1$, then there exists
$\xi \in \bigl(\frac{\eta L_w}{\beta(1-2\eta L_w)},\;
               \frac{1-\eta L_p}{2\beta\eta L_p}\bigr)$
such that the coupled update is a strict contraction on $(\mathcal{P}(\mathcal{X}) \times \mathcal{W},\; d_\xi)$ with $d_\xi\!\bigl((p,w),(q,v)\bigr) := d_{\mathrm{TV}}(p,q) + \xi\,\|w - v\|$.
Consequently, there exists a unique equilibrium $(p^\star_K, w^\star_K)$
and every trajectory converges geometrically.
\end{theorem}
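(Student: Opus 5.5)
The plan is to mirror the discrete argument for Thm.~\ref{thm:global-reinj} verbatim, now using Lemma~\ref{lem:cont-sensitivity} in place of Lemma~\ref{prop:reinj-lipschitz}. Let $F(p,w):=(p',w')$ denote the coupled update, where
\[
p'=(1-\eta)p_{\mathrm{ref}}+\eta\,pH^K_{p,w}, \qquad w'=\operatorname{Proj}_{\mathcal W}\bigl((1-\beta)w+\beta\bar\varphi(p')\bigr).
\]
First I will check that $F$ maps $\mathcal P(\mathcal X)\times\mathcal W$ into itself. Normalization in Prop.~\ref{prop:cont-curation} gives that $pH^K_{p,w}$ is a probability measure, so the convex mixture $p'$ is one as well. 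The projection keeps $w'$ in $\mathcal W$.

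\textbf{Data coordinate.} For two states $(p,w),(q,v)$, the $p_{\mathrm{ref}}$ terms cancel, so $d_{\mathrm{TV}}(p',q')=\eta\,d_{\mathrm{TV}}(pH^K_{p,w},qH^K_{q,v})$. I insert the intermediate measure $qH^K_{q,w}$ and apply the triangle inequality together with the first two bounds of Lemma~\ref{lem:cont-sensitivity}:
\[
d_{\mathrm{TV}}(p',q')\le \eta L_p\,d_{\mathrm{TV}}(p,q)+\eta L_w\|w-v\|.
\]

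\textbf{Preference coordinate.} Projection onto the closed convex ball is nonexpansive. Combined with the third bound of Lemma~\ref{lem:cont-sensitivity}, this gives
\[
\|w'-v'\|\le(1-\beta)\|w-v\|+2\beta\,d_{\mathrm{TV}}(p',q').
\]
Substituting the data bound yields
\[
\|w'-v'\|\le 2\beta\eta L_p\,d_{\mathrm{TV}}(p,q)+(1-\beta+2\beta\eta L_w)\|w-v\|.
\]

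\textbf{Combining.} Adding $\xi$ times the preference bound to the data bound gives
\[
d_\xi\bigl(F(p,w),F(q,v)\bigr)\le \eta L_p(1+2\beta\xi)\,d_{\mathrm{TV}}(p,q)+\bigl(\eta L_w+\xi(1-\beta+2\beta\eta L_w)\bigr)\|w-v\|.
\]
The map is therefore a contraction with rate
\[
\rho:=\max\Bigl\{\eta L_p(1+2\beta\xi),\ \tfrac{\eta L_w}{\xi}+1-\beta+2\beta\eta L_w\Bigr\},
\]
provided $\rho<1$. The two conditions $\rho<1$ are exactly the window stated in the theorem. The algebra in the discrete proof shows the window is nonempty if and only if $\eta(L_p+2L_w)<1$; that algebra is purely scalar and transfers unchanged. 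One point to verify is that $1-2\eta L_w>0$, so the lower endpoint is positive and finite. This follows from $2\eta L_w<\eta(L_p+2L_w)<1$.

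\textbf{Completeness (main obstacle).} This is the only genuinely new ingredient relative to the finite case. Banach's theorem requires $(\mathcal P(\mathcal X)\times\mathcal W,d_\xi)$ to be complete, and in the continuous setting completeness of $(\mathcal P(\mathcal X),d_{\mathrm{TV}})$ needs an argument.
\begin{itemize}
\item The space of finite signed Borel measures with the total variation norm is a Banach space.
\item $\mathcal P(\mathcal X)$ is closed in it: a TV-Cauchy sequence of probability measures converges setwise, and uniformly over sets, to a limit that is nonnegative and has total mass $1$.
\item $\mathcal W$ is closed in $\mathbb R^d$.
\item A product of complete spaces under a weighted sum metric is complete.
\end{itemize}
Banach's fixed-point theorem then yields the unique fixed point $(p_K^\star,w_K^\star)$ and the geometric rate $d_\xi(F^t(z_0),z^\star)\le\rho^t d_\xi(z_0,z^\star)$.

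\textbf{Case $K=\infty$.} I will remark that the same argument applies, since Lemma~\ref{lem:cont-sensitivity} is stated uniformly in $K$. The $K=\infty$ operator is also the uniform limit of the finite-$K$ operators, so the Lipschitz bounds pass to the limit.
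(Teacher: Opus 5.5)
Your proposal is correct and follows the paper's proof essentially step for step. It uses the same triangle inequality through $qH^K_{q,w}$ for the data coordinate, nonexpansive projection plus the mean-feature bound for the preference coordinate, the same two coefficient inequalities defining the window for $\xi$, and Banach's theorem. Your extra points are valid refinements of steps the paper treats briefly: the self-map check, $1-2\eta L_w>0$, TV-completeness of $\mathcal{P}(\mathcal{X})$ with compactness actually unnecessary, and the uniform-limit justification for $K=\infty$.
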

 
\begin{proof}
Let $(p^+, w^+) := F(p, w)$ and $(q^+, v^+) := F(q, v)$,
where $F$ denotes the joint update.
 
\textbf{Data coordinate.}
By Lemma~\ref{lem:cont-sensitivity},
$$
  d_{\mathrm{TV}}(p^+, q^+)
  \leq \eta L_p\, d_{\mathrm{TV}}(p, q) + \eta L_w\, \|w - v\|.
$$
 
\textbf{Preference coordinate.}
Using nonexpansiveness of projection and
$\|\bar{\varphi}(p^+) - \bar{\varphi}(q^+)\| \leq 2\, d_{\mathrm{TV}}(p^+, q^+)$,
\begin{align*}
  \|w^+ - v^+\|
  &\leq (1-\beta)\, \|w - v\| + 2\beta\, d_{\mathrm{TV}}(p^+, q^+) \\
  &\leq 2\beta\eta L_p\, d_{\mathrm{TV}}(p, q)
       + (1 - \beta + 2\beta\eta L_w)\, \|w - v\|.
\end{align*}

\textbf{Combined metric.}
Multiplying the preference bound by $\xi$ and adding,
$$
  d_\xi(F(p,w), F(q,v))
  \leq A(\xi)\, d_{\mathrm{TV}}(p,q) + \xi\, B(\xi)\, \|w - v\|,
$$
where
$A(\xi) := \eta L_p(1 + 2\xi\beta)$
and
$B(\xi) := 1 - \beta + 2\beta\eta L_w + \eta L_w/\xi$.
The map is a contraction when $A(\xi) < 1$ and $B(\xi) < 1$:
\begin{align*}
  A(\xi) < 1 &\;\Longleftrightarrow\;
    \xi < \frac{1 - \eta L_p}{2\beta\eta L_p}, \\
  B(\xi) < 1 &\;\Longleftrightarrow\;
    \xi > \frac{\eta L_w}{\beta(1 - 2\eta L_w)}.
\end{align*}
Both requirements are implied by $\eta(L_p + 2L_w) < 1$, and under this condition
the window is nonempty:
$$
  \frac{\eta L_w}{\beta(1 - 2\eta L_w)} < \frac{1 - \eta L_p}{2\beta\eta L_p}
  \;\Longleftrightarrow\;
  \eta(L_p + 2L_w) < 1.
$$
 
\textbf{Completeness.}
$\mathcal{P}(\mathcal{X})$ is complete under $d_{\mathrm{TV}}$ when $\mathcal{X}$ is compact,
and $\mathcal{W}$ is closed in $\mathbb{R}^d$, so
$(\mathcal{P}(\mathcal{X}) \times \mathcal{W},\, d_\xi)$ is complete.
The Banach fixed-point theorem gives the unique equilibrium and geometric convergence.
\end{proof}

\section{Additional Experimental Results}
\label{sec:addexperiments}
We conduct all the experiments on a server which has two Intel Xeon 6326 CPU and 6 Nvidia A6000 GPU. 
\subsection{Additional CFM Experiments on CIFAR-10}
\label{subsec:cfm}

\subsubsection{Effect of Key Parameters}
\label{subsubsec:cfm_para}
We present the effect of key parameters for OT-CFM on CIFAR-10. The default configuration is $\eta = 1.0$, $\tau = 5.0$, $K = \infty$, $\beta = 0.01$, $w_0 = [1,0,0,0,\dots]$. We vary one parameter at a time.

\textbf{Mixing Ratio $\eta$}. Fig.~\ref{SubFig:cfm_eta} shows the effect of $\eta$. At $\eta = 1.0$ (pure synthetic), airplane reaches $>$95\% by round 20. As $\eta$ decreases, reference data increasingly stabilizes the distribution: at $\eta = 0.8$, the concentration is slightly mitigated; at $\eta = 0.5$, airplane stabilizes around 50\%; at $\eta = 0.2$, it stays below 50\% with other classes remaining above $\theta = 0.08$.The preference vector $w_t$ evolves similarly across all settings because $\beta = 0.01$ is small, confirming that the mode shift is driven by $\eta$
rather than by changes in $w_t$.

\begin{figure}[t]
\centering

\begin{minipage}{\textwidth}
    \centering
    \includegraphics[width=\textwidth]{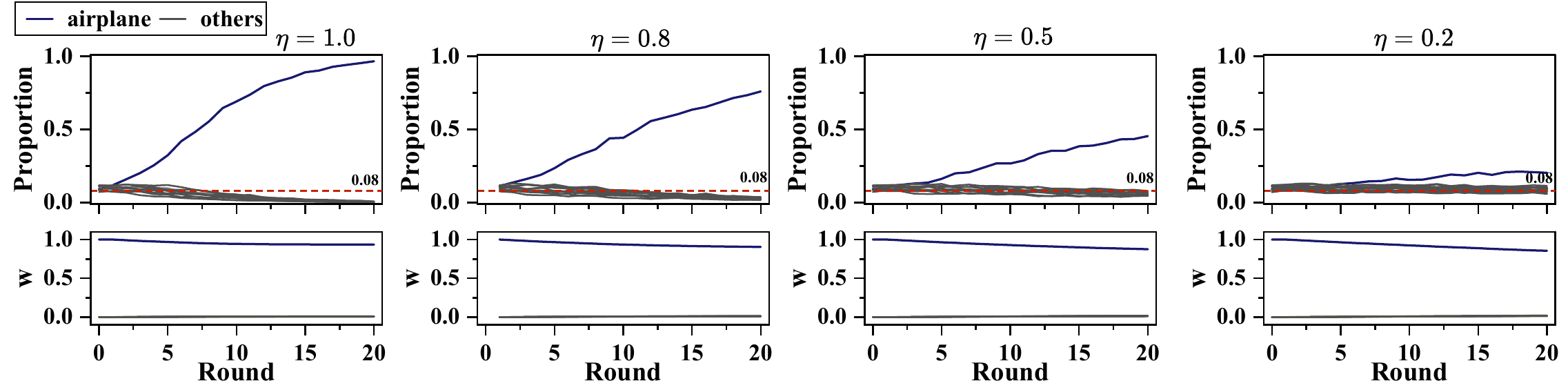}
    \caption{Effect of mixing ratio $\eta$ ($\tau=5.0$, $K=\infty$, $\beta=0.01$).\textit{Top}: per-class proportions. \textit{Bottom}: preference vector $w_t$. Dashed line marks $\theta = 0.08$.}
\label{SubFig:cfm_eta}
\end{minipage}
\vspace{4pt}

\begin{minipage}{\textwidth}
    \centering
    \includegraphics[width=\textwidth]{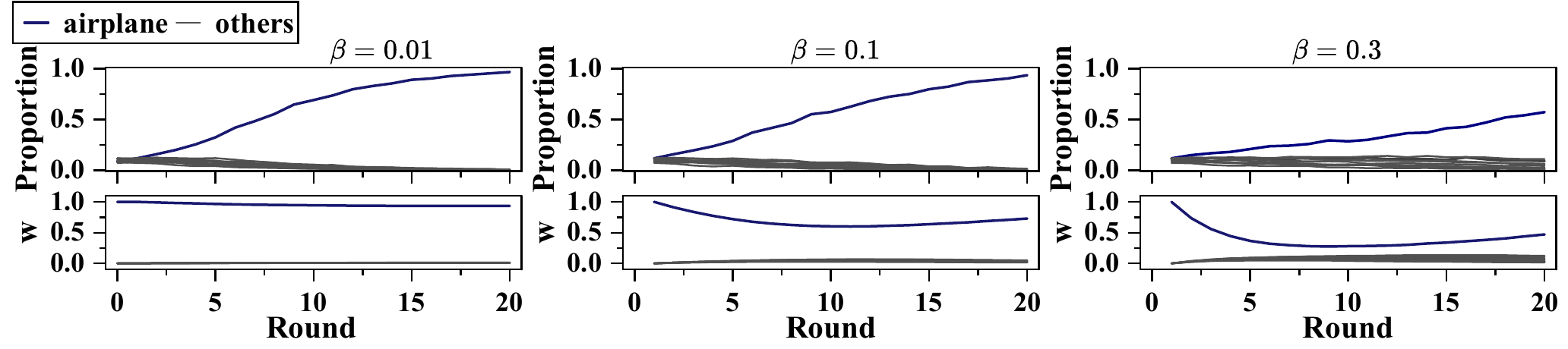}
    \caption{Effect of preference change rate $\beta$ ($\eta=1.0$, $\tau=5.0$, $K=\infty$).}
    \label{SubFig:cfm_beta}
\end{minipage}
\vspace{4pt}

\begin{minipage}{\textwidth}
    \centering
    \includegraphics[width=\textwidth]{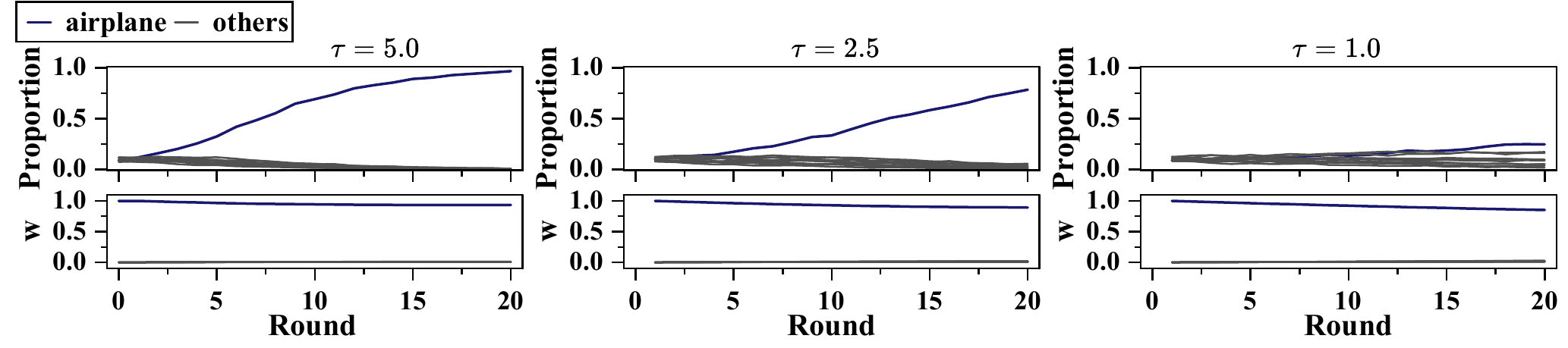}
    \caption{Effect of curation temperature $\tau$ ($\eta=1.0$, $K=\infty$, $\beta=0.01$).}
    \label{SubFig:cfm_tau}
\end{minipage}
\vspace{4pt}

\begin{minipage}{\textwidth}
    \centering
    \includegraphics[width=\textwidth]{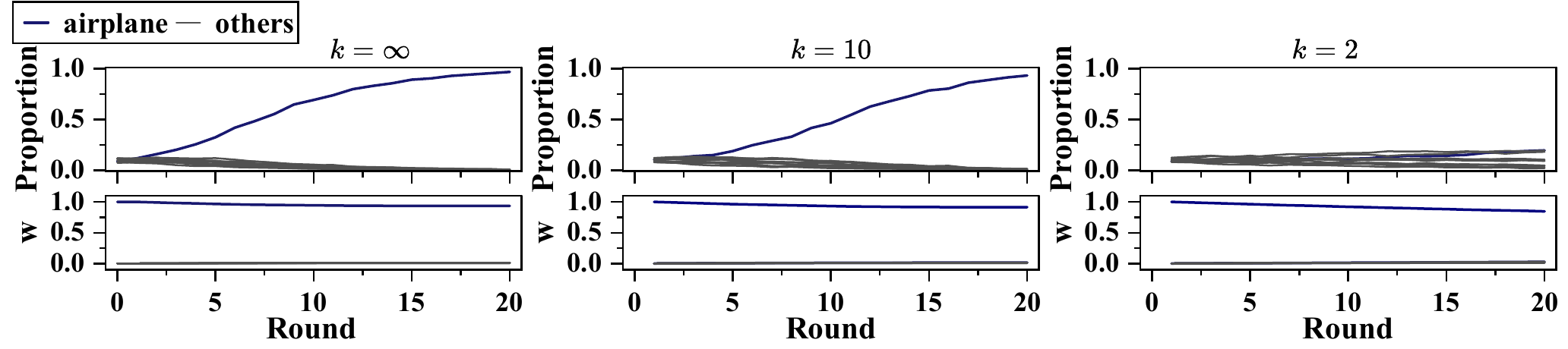}
    \caption{ Effect of selection group size $K$ ($\eta=1.0$, $\tau=5.0$, $\beta=0.01$).}
    \label{SubFig:cfm_k}
\end{minipage}
\end{figure}

\textbf{Preference change rate $\beta$.} Fig.~\ref{SubFig:cfm_beta} shows the effect of $\beta$. At $\beta = 0.01$, $w_t$ changes slowly and the curation direction remains coherent, producing the strongest concentration ($>$95\% airplane). At $\beta = 0.3$, $w_t$ tracks the current distribution too closely, diluting the preference toward a uniform direction and weakening the curation signal. However, as mode concentration progresses, $\bar{\varphi}(p_t)$ itself becomes dominated by the airplane component, which in turn pulls $w_t$ back toward $[1,0,\dots]$. This restores the curation signal and drives further concentration, although it takes effect more slowly compared to small $\beta$.

\textbf{Curation Temperature $\tau$.} Fig.~\ref{SubFig:cfm_tau} shows the effect of $\tau$. At $\tau = 5.0$, curation concentrates sharply: even small reward differences are amplified exponentially, driving airplane above 95\% by round 20. At $\tau = 2.5$, the concentration is slower. At $\tau = 1.0$, the softmax weights are nearly uniform and the class distribution remains close to balanced. The speed of mode concentration increases monotonically with $\tau$, consistent with the Lipschitz constants $L_p$ growing rapidly in $\tau$.

\textbf{Selection Group Size $K$.} Fig.~\ref{SubFig:cfm_k} shows the effect of $K$. At $K = \infty$ and $K = 10$, the concentration trajectories are nearly identical, with airplane exceeding 90\% by round 20. At $K = 2$, each Luce group contains only two candidates, limiting the selector's ability to distinguish high-reward samples.
This is consistent with the discussion of the role of $K$.

\subsubsection{Reference data design}
\label{subsubsec:cfm_pref}
\textbf{Setting A: Uniform thresholds.} As described in Section~\ref{sec:experiments}, all four protected classes share the same threshold $\theta = 0.08$ with  $c = (1, 1, 11, 12, 13, 16, 30, 28, 27, 25)$. The feasibility boundary is at $\eta^* = 0.67$. Since the full table is too long, we present Table~\ref{subtab:alg1-full-A} here to show the partial  LP search results.

\begin{table}[th]
\centering
\caption{Partial results of searching over 50 evenly spaced points in $\eta \in [0.2, 0.9]$. (Setting A: protect frog/horse/ship/truck, $\theta_\ell = 0.08$, $c = (1, 1, 11, 12, 13, 16, 30, 28, 27, 25)$)}
\label{subtab:alg1-full-A}
\small
\resizebox{\linewidth}{!}{
\begin{tabular}{c|cccccccccc|rr|c}
\toprule
$\eta$ & air & auto & bird & cat & deer & dog & frog & horse & ship & truck
  & $C^*$ & $J$ & status \\
\midrule
0.20 & .000 & .600 & .000 & .000 & .000 & .000 & .100 & .100 & .100 & .100 & 11.59 & 46,378 & \cmark \\
0.30 & .000 & .543 & .000 & .000 & .000 & .000 & .114 & .114 & .114 & .114 & 13.11 & 45,883 & \cmark \\
0.40 & .000 & .467 & .000 & .000 & .000 & .000 & .133 & .133 & .133 & .133 & 15.13 & 45,388 & \cmark \\
0.50 & .000 & .360 & .000 & .000 & .000 & .000 & .160 & .160 & .160 & .160 & 17.96 & 44,892 & \cmark \\
0.60 & .000 & .200 & .000 & .000 & .000 & .000 & .200 & .200 & .200 & .200 & 22.20 & 44,397 & \cmark \\
\textbf{0.67} & \textbf{.000} & \textbf{.026} & \textbf{.000} & \textbf{.000} & \textbf{.000} & \textbf{.000} & \textbf{.243} & \textbf{.243} & \textbf{.243} & \textbf{.243} & \textbf{26.81} & \textbf{44,043} & ${\eta^*}$ \\
\midrule
0.69 & \multicolumn{10}{c|}{infeasible} & -- & -- & \xmark \\
0.90 & \multicolumn{10}{c|}{infeasible} & -- & -- & \xmark \\
\bottomrule
\end{tabular}
}
\end{table}

\textbf{Setting B: Heterogeneous thresholds.} The four protected classes have different thresholds reflecting varying preservation priorities: $\theta_{\mathrm{frog}} = 0.05$, $\theta_{\mathrm{horse}} = 0.08$, $\theta_{\mathrm{ship}} = 0.10$, $\theta_{\mathrm{truck}} = 0.15$. And the  $c = (1, 1, 11, 12, 13, 16, 30, 28, 27, 25)$ keep same.

\begin{table}[th]
\centering
\caption{Partial results of searching over 50 evenly spaced points in $\eta \in [0.2, 0.9]$ (Setting B: protect frog/horse/ship/truck,$\theta = (0.05, 0.08, 0.10, 0.15)$,
$c = (1, 1, 11, 12, 13, 16, 30, 28, 27, 25)$).}
\label{tab:alg1-full-B}
\small
\resizebox{\linewidth}{!}{
\begin{tabular}{c|cccccccccc|rr|c}
\toprule
$\eta$ & air & auto & bird & cat & deer & dog & frog & horse & ship & truck
  & $C^*$ & $J$ & status \\
\midrule
0.20 & .000 & .525 & .000 & .000 & .000 & .000 & .062 & .100 & .125 & .187 & 13.26 & 53,030 & \cmark \\
0.30 & .000 & .457 & .000 & .000 & .000 & .000 & .071 & .114 & .143 & .214 & 15.01 & 52,534 & \cmark \\
0.40 & .000 & .367 & .000 & .000 & .000 & .000 & .083 & .133 & .167 & .250 & 17.35 & 52,039 & \cmark \\
0.50 & .000 & .240 & .000 & .000 & .000 & .000 & .100 & .160 & .200 & .300 & 20.62 & 51,544 & \cmark \\
0.60 & .000 & .050 & .000 & .000 & .000 & .000 & .125 & .200 & .250 & .375 & 25.52 & 51,049 & \cmark \\
\textbf{0.61} & \textbf{.000} & \textbf{.015} & \textbf{.000} & \textbf{.000} & \textbf{.000} & \textbf{.000} & \textbf{.130} & \textbf{.207} & \textbf{.259} & \textbf{.389} & \textbf{26.43} & \textbf{50,978} & $\eta^*$ \\
\midrule
0.63 & \multicolumn{10}{c|}{infeasible} & -- & -- & \xmark \\
0.90 & \multicolumn{10}{c|}{infeasible} & -- & -- & \xmark \\
\bottomrule
\end{tabular}
}
\end{table}

Table~\ref{tab:alg1-full-B} shows the partial  LP search results.  The feasibility boundary is at $\eta^* = 0.61$. The heterogeneous thresholds shift the feasibility boundary and change the optimal allocation: the LP assigns more reference mass to classes with higher thresholds (ship, truck) and less to those with lower thresholds (frog, horse). Table~\ref{subtab:alg2-compare} compares our designed reference with uniform reference  and the actual performance of the reference data is shown in the Fig.~\ref{SubFig:cfm_pre2}.

\begin{figure}[th]
\centering
\begin{minipage}[c]{0.67\textwidth}
\centering
  \captionof{table}{
    Comparison of reference data strategies.
    $C^*$: per-sample cost; $J$: total cost $(1-\eta) \cdot N \cdot C^*$.
    Last four columns: proportion of each protected class at round 20. 
  }
  \label{subtab:alg2-compare}
  \small
  \begin{tabular}{llcccccc}
    \toprule
    $p_{\mathrm{ref}}$ & $\eta$ & $C^*$ & $J$
      & frog & horse & ship & truck \\
    \midrule
    Uniform & 0.50 & 16.4 & 41,000
      & .067 & .077 & .059 & .067 \\
    Uniform & 0.20 & 16.4 & 65,600
      & .112 & .113 & .097 & .105\\
    \textbf{Ours} & \textbf{0.61} & \textbf{26.43} & \textbf{50,978}
      & \textbf{.059} & \textbf{.101} & \textbf{.116} & \textbf{.168} \\
    \bottomrule
  \end{tabular}
\end{minipage}
\hfill
\begin{minipage}[c]{0.28\textwidth}
    \centering
    \includegraphics[width=\linewidth]{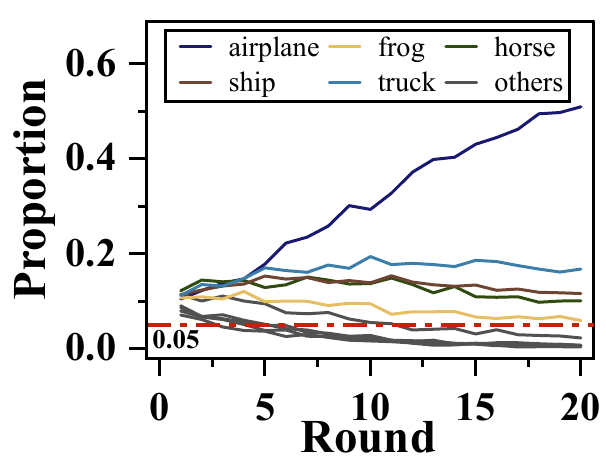}
    \caption{Class proportions under designed reference.}
    \label{SubFig:cfm_pre2}
\end{minipage}
\end{figure}

\subsubsection{Generated Samples}
To complement the quantitative results, we provide generated samples at selected rounds in Fig.~\ref{Appfig:cfm_airplane_samples} and Fig.~\ref{Appfig:cfm_cat_samples}, offering a visual illustration of how the model's output evolves under different reference data strategies.

\begin{figure}[htbp]
    \centering
     \begin{subfigure}[c]{0.1\linewidth}
        \centering
    \end{subfigure}
    \hfill
    \begin{subfigure}[c]{0.21\linewidth}
        \centering \textbf{Round 1}
    \end{subfigure}
    \begin{subfigure}[c]{0.21\linewidth}
        \centering \textbf{Round 7}
    \end{subfigure}
    \begin{subfigure}[c]{0.21\linewidth}
        \centering \textbf{Round 14}
    \end{subfigure}
    \begin{subfigure}[c]{0.21\linewidth}
        \centering \textbf{Round 20}
    \end{subfigure}

    \begin{subfigure}[c]{0.1\linewidth}
        \centering \textbf{Pure $\eta=1.0$}
    \end{subfigure}
    \hfill
    \begin{subfigure}[c]{0.21\linewidth}
        \centering
        \includegraphics[width=\linewidth]{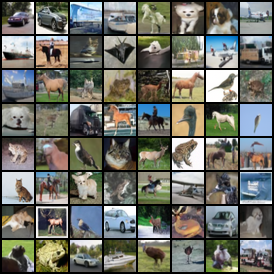}
    \end{subfigure}
    \begin{subfigure}[c]{0.21\linewidth}
        \centering
        \includegraphics[width=\linewidth]{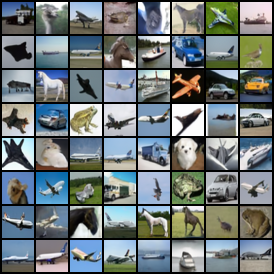}
    \end{subfigure}
    \begin{subfigure}[c]{0.21\linewidth}
        \centering
        \includegraphics[width=\linewidth]{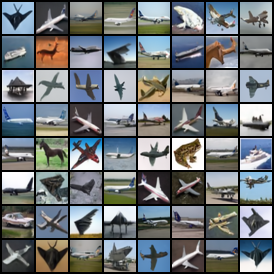}
    \end{subfigure}
    \begin{subfigure}[c]{0.21\linewidth}
        \centering
        \includegraphics[width=\linewidth]{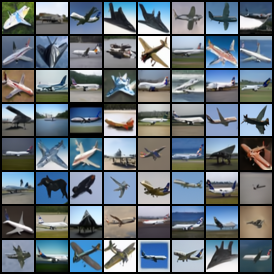}
    \end{subfigure}
    
    \begin{subfigure}[c]{0.1\linewidth}
        \centering \textbf{Uniform reference $\eta=0.5$}
    \end{subfigure}  
    \hfill
    \begin{subfigure}[c]{0.21\linewidth}
        \centering
        \includegraphics[width=\linewidth]{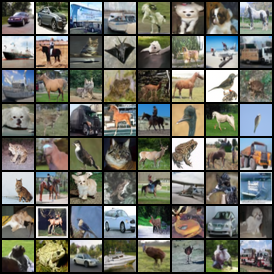}
    \end{subfigure}
    \begin{subfigure}[c]{0.21\linewidth}
        \centering
        \includegraphics[width=\linewidth]{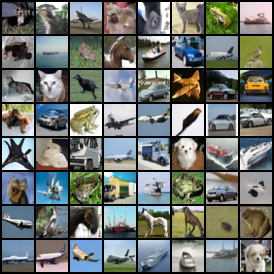}
    \end{subfigure}
    \begin{subfigure}[c]{0.21\linewidth}
        \centering
        \includegraphics[width=\linewidth]{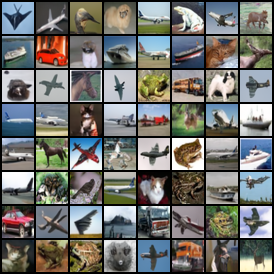}
    \end{subfigure}
    \begin{subfigure}[c]{0.21\linewidth}
        \centering
        \includegraphics[width=\linewidth]{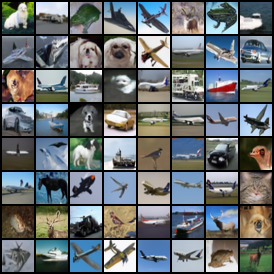}
    \end{subfigure}

    \begin{subfigure}[c]{0.1\linewidth}
        \centering  \textbf{Uniform reference $\eta=0.1$}
    \end{subfigure}
    \hfill
        \begin{subfigure}[c]{0.21\linewidth}
        \centering
        \includegraphics[width=\linewidth]{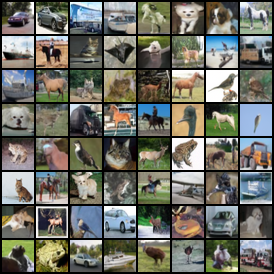}
    \end{subfigure}
    \begin{subfigure}[c]{0.21\linewidth}
        \centering
        \includegraphics[width=\linewidth]{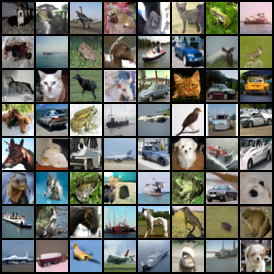}
    \end{subfigure}
    \begin{subfigure}[c]{0.21\linewidth}
        \centering
        \includegraphics[width=\linewidth]{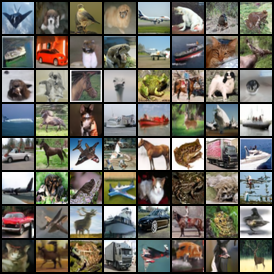}
    \end{subfigure}
    \begin{subfigure}[c]{0.21\linewidth}
        \centering
        \includegraphics[width=\linewidth]{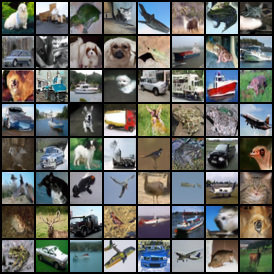}
    \end{subfigure}

    \begin{subfigure}[c]{0.1\linewidth}
        \centering \textbf{Designed reference $\eta=0.67$}
    \end{subfigure}
    \hfill
        \begin{subfigure}[c]{0.21\linewidth}
        \centering
        \includegraphics[width=\linewidth]{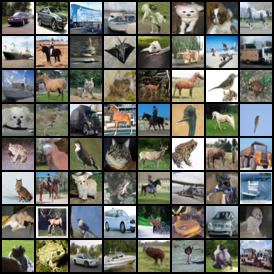}
    \end{subfigure}
    \begin{subfigure}[c]{0.21\linewidth}
        \centering
        \includegraphics[width=\linewidth]{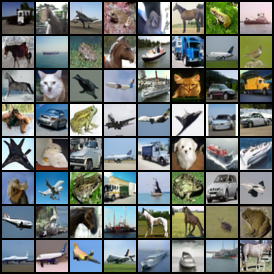}
    \end{subfigure}
    \begin{subfigure}[c]{0.21\linewidth}
        \centering
        \includegraphics[width=\linewidth]{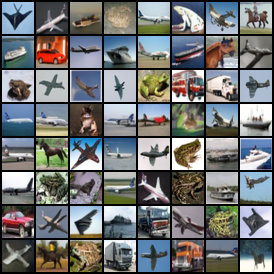}
    \end{subfigure}
    \begin{subfigure}[c]{0.21\linewidth}
        \centering
        \includegraphics[width=\linewidth]{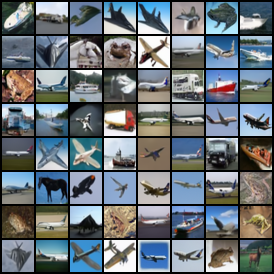}
    \end{subfigure}

    \caption{Generated samples from OT-CFM under $w_0 = [1,0,0,0,\dots]$ (airplane preference), $\tau=5.0$, $\beta=0.01$. Each row corresponds to a different reference data strategy; each column shows samples at the indicated round. Under pure synthetic training ($\eta=1.0$), the model collapses to airplane images. Mixing with uniform reference data increasingly preserves diversity. The LP-designed reference with $\theta_{\ell}=0.08$ for last four classes (bottom row) maintains diversity at $\eta=0.67$. }
    \label{Appfig:cfm_airplane_samples}
\end{figure}

\begin{figure}[htbp]
    \centering
     \begin{subfigure}[c]{0.1\linewidth}
        \centering
    \end{subfigure}
    \hfill
    \begin{subfigure}[c]{0.21\linewidth}
        \centering \textbf{Round 1}
    \end{subfigure}
    \begin{subfigure}[c]{0.21\linewidth}
        \centering \textbf{Round 7}
    \end{subfigure}
    \begin{subfigure}[c]{0.21\linewidth}
        \centering \textbf{Round 14}
    \end{subfigure}
    \begin{subfigure}[c]{0.21\linewidth}
        \centering \textbf{Round 20}
    \end{subfigure}

    \begin{subfigure}[c]{0.1\linewidth}
        \centering \textbf{Pure $\eta=1.0$}
    \end{subfigure}
    \hfill
    \begin{subfigure}[c]{0.21\linewidth}
        \centering
        \includegraphics[width=\linewidth]{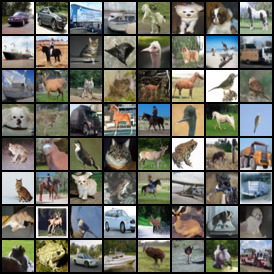}
    \end{subfigure}
    \begin{subfigure}[c]{0.21\linewidth}
        \centering
        \includegraphics[width=\linewidth]{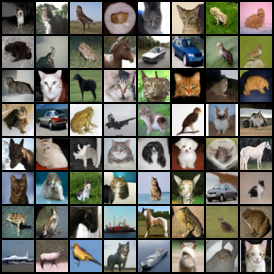}
    \end{subfigure}
    \begin{subfigure}[c]{0.21\linewidth}
        \centering
        \includegraphics[width=\linewidth]{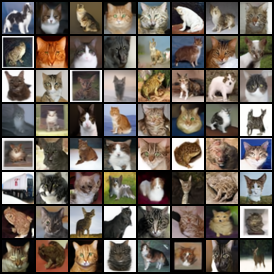}
    \end{subfigure}
    \begin{subfigure}[c]{0.21\linewidth}
        \centering
        \includegraphics[width=\linewidth]{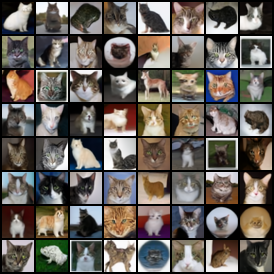}
    \end{subfigure}

    \begin{subfigure}[c]{0.1\linewidth}
        \centering  \textbf{Uniform reference $\eta=0.1$}
    \end{subfigure}
    \hfill
        \begin{subfigure}[c]{0.21\linewidth}
        \centering
        \includegraphics[width=\linewidth]{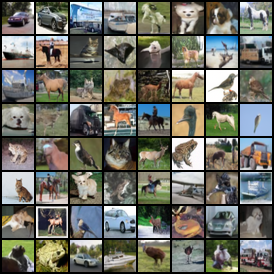}
    \end{subfigure}
    \begin{subfigure}[c]{0.21\linewidth}
        \centering
        \includegraphics[width=\linewidth]{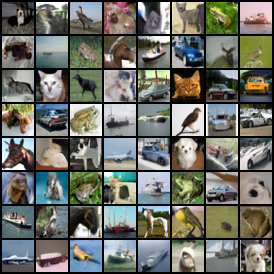}
    \end{subfigure}
    \begin{subfigure}[c]{0.21\linewidth}
        \centering
        \includegraphics[width=\linewidth]{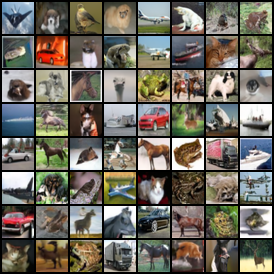}
    \end{subfigure}
    \begin{subfigure}[c]{0.21\linewidth}
        \centering
        \includegraphics[width=\linewidth]{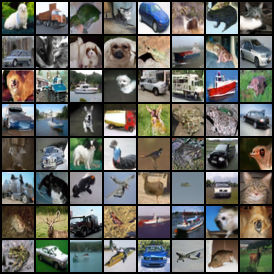}
    \end{subfigure}

    \caption{Generated samples from OT-CFM under $w_0 = [0,0,0,1,\dots]$ (cat preference), $\tau=5.0$, $\beta=0.01$. Each row corresponds to a different reference data strategy; each column shows samples at the indicated round. Under pure synthetic training ($\eta=1.0$), the model collapses to airplane images. Mixing with uniform reference data increasingly preserves diversity.}
    \label{Appfig:cfm_cat_samples}
\end{figure}

\subsection{DDPM Experiments on CIFAR-10}
\label{subsec:ddpm}
\textbf{Setup.} The model is DDPM~\citep{ddpm,ImprovedDDpm} with the VLB objective (50M parameter UNet), initialized from the released pretrained checkpoint on CIFAR-10. The feature map $\varphi(x) \in \mathbb{R}^{10}$ is the normalized softmax output of a pretrained ResNet-56 classifier (accuracy $\geq 93\%$), where $\varphi(x)_c$ represents the predicted probability of class $c$. Each round generates 20,000 candidate images, curates 10,000 via $K$-way Luce selection, mixes with real CIFAR-10 at ratio $\eta$ to form a training set of size 10,000, and fine-tunes for 1,000 steps. After training, 10,000 fresh samples are generated to compute per-class proportions and update $w_t$. We run each experiment for 10 rounds with $\tau = 5.0$, $K = \infty$, and $\beta = 0.01$.

\textbf{Co-evolution dynamics.} Fig.~\ref{SubFig:ddpm-main} shows results under two initializations
and two mixing ratios. Under $\eta = 1.0$, DDPM exhibits the same initialization-dependent lock-in as CFM: $w_0 = [1,0,0,0,\dots]$ drives concentration toward airplane and $w_0 = [0,0,0,1,0]$ toward cats. Under $\eta = 0.1$, both initializations converge to a similar near-uniform equilibrium, consistent with the uniqueness result. Compared to CFM, DDPM shows slower concentration under $\eta = 1.0$, reaching 60\% by round 10. This is expected: DDPM suffers from greater image quality degradation over successive retraining rounds, causing the classifier to assign less confident predictions. More broadly, the theoretical model assumes the retrained distribution exactly matches the training mixture, while in practice finite capacity and training error introduce deviations. Nevertheless, the qualitative predictions of the theory hold: lock-in occurs under pure synthetic training and stabilization occurs under mixture with reference data.

\begin{figure}[ht]
\centering
\includegraphics[width=\textwidth]{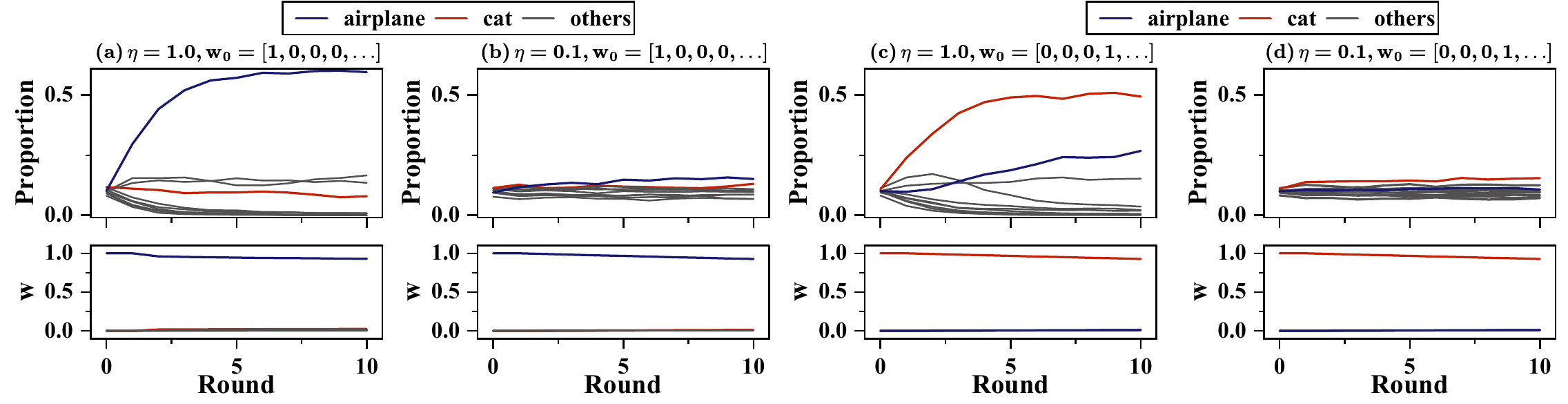}
\caption{Co-evolution dynamics under different initializations and mixing ratios for DDPM model. \textbf{Top}: per-class generation proportions. \textbf{Bottom}: preference vector $w_t$.
}
\label{SubFig:ddpm-main}
\end{figure}

\textbf{Reproducibility across seeds.} To verify that the observed dynamics are not artifacts of a particular random seed, we repeat the experiment ($\eta = 1.0$, $w_0 = [1,0,0,0,\dots]$) with three independent seeds. Fig.~\ref{fig:ddpm-seeds} shows that all three runs produce consistent trajectories: airplane proportion reaches 58.29\%-59.02\% by round 10 and the preference vector follows a similar path. The small variation across seeds confirms that the co-evolution dynamics are driven by the systematic interaction between curation and retraining, rather than by stochastic fluctuations.

\begin{figure}[ht]
\centering
\includegraphics[width=\textwidth]{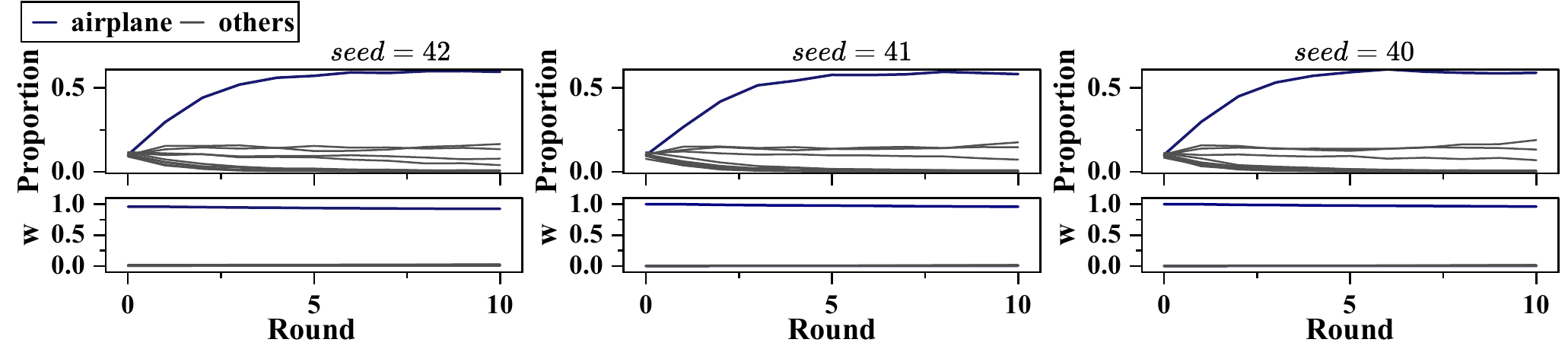}
\caption{
  Reproducibility of DDPM co-evolution across three independent seeds ($\eta = 1.0$, $w_0 = [1,0,0,0,\dots]$, $\tau = 5.0$, $K = \infty$, $\beta = 0.01$).
}
\label{fig:ddpm-seeds}
\end{figure}

\textbf{Generated Samples.} Fig.~\ref{Appfig:ddpm_airplane_samples} shows the corresponding generated samples for DDPM. Compared to OT-CFM, DDPM exhibits noticeable image quality degradation over successive rounds: textures become blurry and fine details are lost, reflecting the accumulation of training error across generations. This degradation is most pronounced under $\eta = 1.0$, where the model retrains entirely on its own imperfect outputs. Nevertheless, the same structural patterns persist: concentration toward a single class under pure synthetic training, and preserved diversity under reference data mixing.

\begin{figure}[htbp]
    \centering
     \begin{subfigure}[c]{0.1\linewidth}
        \centering
    \end{subfigure}
    \hfill
    \begin{subfigure}[c]{0.21\linewidth}
        \centering \textbf{Round 1}
    \end{subfigure}
    \begin{subfigure}[c]{0.21\linewidth}
        \centering \textbf{Round 7}
    \end{subfigure}
    \begin{subfigure}[c]{0.21\linewidth}
        \centering \textbf{Round 14}
    \end{subfigure}
    \begin{subfigure}[c]{0.21\linewidth}
        \centering \textbf{Round 20}
    \end{subfigure}

    \begin{subfigure}[c]{0.1\linewidth}
        \centering \textbf{Pure $\eta=1.0$}
    \end{subfigure}
    \hfill
    \begin{subfigure}[c]{0.21\linewidth}
        \centering
        \includegraphics[width=\linewidth]{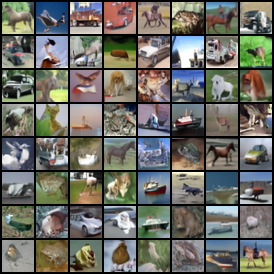}
    \end{subfigure}
    \begin{subfigure}[c]{0.21\linewidth}
        \centering
        \includegraphics[width=\linewidth]{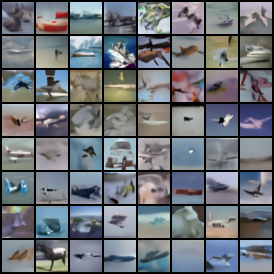}
    \end{subfigure}
    \begin{subfigure}[c]{0.21\linewidth}
        \centering
        \includegraphics[width=\linewidth]{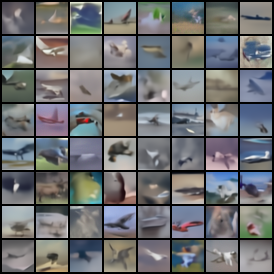}
    \end{subfigure}
    \begin{subfigure}[c]{0.21\linewidth}
        \centering
        \includegraphics[width=\linewidth]{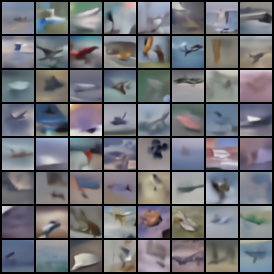}
    \end{subfigure}

    \begin{subfigure}[c]{0.1\linewidth}
        \centering  \textbf{Uniform reference $\eta=0.1$}
    \end{subfigure}
    \hfill
        \begin{subfigure}[c]{0.21\linewidth}
        \centering
        \includegraphics[width=\linewidth]{Figs/ddpm/airplane_samples/eta0.1_round0.png}
    \end{subfigure}
    \begin{subfigure}[c]{0.21\linewidth}
        \centering
        \includegraphics[width=\linewidth]{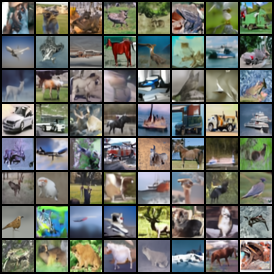}
    \end{subfigure}
    \begin{subfigure}[c]{0.21\linewidth}
        \centering
        \includegraphics[width=\linewidth]{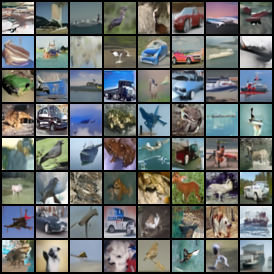}
    \end{subfigure}
    \begin{subfigure}[c]{0.21\linewidth}
        \centering
        \includegraphics[width=\linewidth]{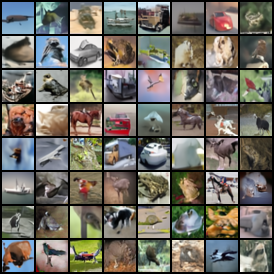}
    \end{subfigure}

    \caption{Generated samples from DDPM under $w_0 = [1,0,0,0,\dots]$ (airplane preference), $\tau=5.0$, $\beta=0.01$. Each row corresponds to a different reference data strategy; each column shows samples at the indicated round. Under pure synthetic training ($\eta=1.0$), the model collapses to airplane images. Mixing with uniform reference data increasingly preserves diversity.}
    \label{Appfig:ddpm_airplane_samples}
\end{figure}

\begin{figure}[thbp!]
\centering
\begin{minipage}{\textwidth}
    \centering
    \includegraphics[width=\textwidth]{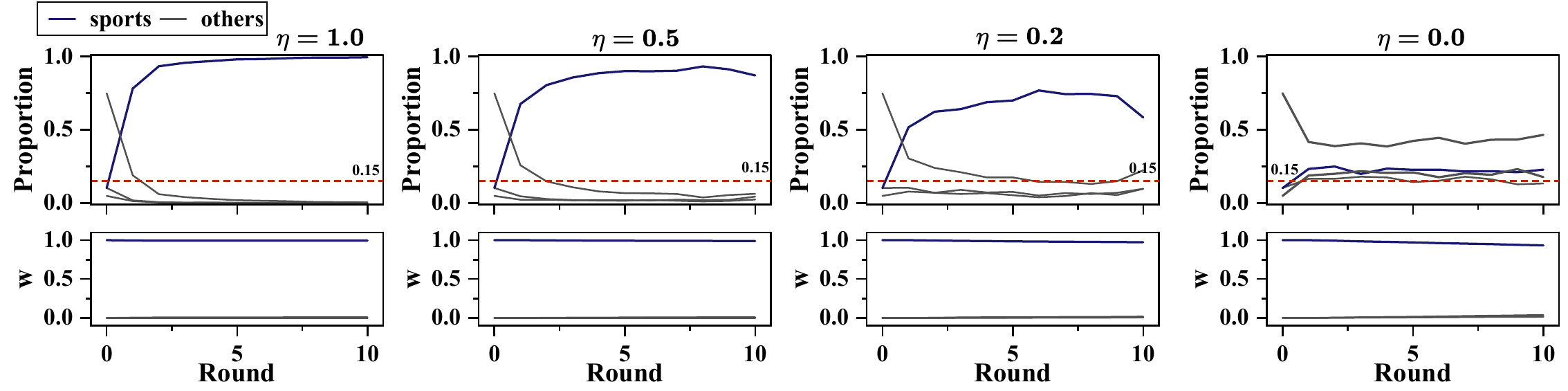}
    \caption{ GPT-2 on AG~News with $w_0 = [0,1,0,0]$ (Sports). \textit{Top}: per-class proportions. \textit{Bottom}: preference vector $w_t$.}
    \label{fig:text-sports}
\end{minipage}
\vspace{4pt}

\begin{minipage}{\textwidth}
    \centering
    \includegraphics[width=\textwidth]{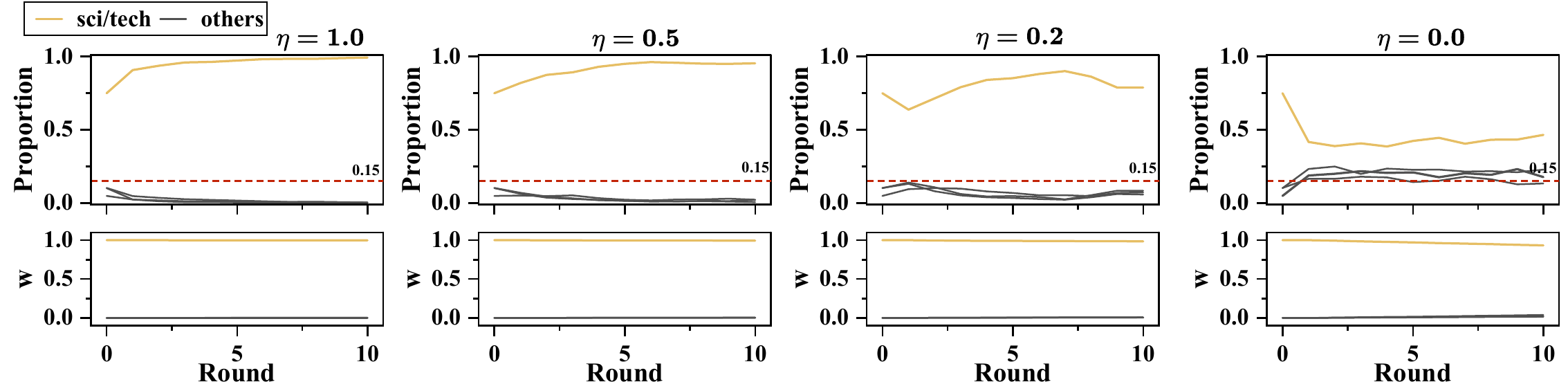}
    \caption{
  GPT-2 on AG~News with $w_0 = [0,0,0,1]$ (SciTech). \textit{Top}: per-class proportions. \textit{Bottom}: preference vector $w_t$.}
\label{fig:text-scitech}
\end{minipage}
\end{figure}

\subsection{GPT-2 Experiments on AG~News}
\label{subsec:cpt2}
We replicate the co-evolution experiments on text generation.

\textbf{Setup.} The model used is GPT-2-small (124M parameters)~\citep{gpt2}, initialized from the HuggingFace pretrained checkpoint. The feature map $\varphi(x) \in \mathbb{R}^4$ is the normalized softmax output of a pretrained BERT~\citep{bert} classifier (accuracy $\geq 94\%$), where $\varphi(x)_c$ represents the predicted probability of class $c \in \{\text{world}, \text{sports}, \text{business}, \text{sciTech}\}$.

\textbf{Iterative retraining.} Each round generates 5,000 candidate texts, curates 2,500 via $K$-way Luce selection, mixes with real AG~News at ratio $\eta$ to form a training set of size 2,500, and fine-tunes for 1,000 gradient steps. After training, 2,000 fresh samples are generated to compute per-class proportions and update $w_t$. We run each experiment for 10 rounds with $\tau = 5.0$, $K = \infty$, and $\beta = 0.01$.

\textbf{Co-evolution dynamics.} Fig.~\ref{fig:text-sports} shows results with $w_0 = [0,1,0,0]$ (Sports) under four mixing ratios. At $\eta = 1.0$, the Sports proportion exceeds 90\% by round 10, mirroring the lock-in observed in the image experiments. As $\eta$ decreases, reference data stabilizes the distribution. Fig.\ref{fig:text-scitech} shows the same sweep with $w_0 = [0,0,0,1]$ (SciTech). The dynamics are qualitatively identical:
the system locks into SciTech under $\eta = 1.0$ and stabilizes under small $\eta$. This confirms that the initialization-dependent lock-in and the stabilizing effect
of reference data hold across both modalities and class structures.

Unlike the image models, GPT-2 deviates more noticeably from the idealized assumptions of our theory. Even when trained entirely on uniformly distributed reference data ($\eta = 0.0$), the generated class proportions are not uniform, indicating that GPT-2 is far from a perfect learner: the retrained distribution does not exactly match the training mixture. In addition, the class proportions fluctuate across rounds rather than converging smoothly, likely due to the instability of fine-tuning a pretrained language model on small datasets. Despite these deviations, the qualitative theoretical predictions remain: lock-in under pure synthetic training and stabilization under reference data mixing.

\paragraph{Reference data design.}

We apply Alg.~\ref{alg:ref-design} to the AG~News setting.  We protect the class world with threshold $\theta = 0.15$. To account for the gap between the theoretical model and GPT-2's imperfect learning, we calibrate the threshold used in the LP: since GPT-2 achieves roughly $37.5\%$ of the theoretically predicted level under uniform reference data, we scale the LP input threshold to $\theta_{LP} \approx \frac{0.15}{0.375}=0.4$,  so that the actual achieved proportion approximates the desired $\theta = 0.15$. The costs are $c = (20, 5, 5, 5)$, reflecting that World news requires more effort to source than the other categories.

Table.~\ref{subtab:alg2-text_compare} compares our designed reference with uniform reference and the actual performance of the reference data is shown in the Fig.~\ref{SubFig:text_pref}.

\begin{figure}[thbp!]
\centering
\begin{minipage}[c]{0.67\textwidth}
\centering
  \captionof{table}{
    Comparison of reference data strategies.
    $C^*$: per-sample cost; $J$: total cost $(1-\eta) \cdot N \cdot C^*$.
    Last columns: proportion of the protected class at round 20. 
  }
  \label{subtab:alg2-text_compare}
  \small
  \begin{tabular}{llcccccc}
    \toprule
    $p_{\mathrm{ref}}$ & $\eta$ & $C^*$ & $J$
      & world \\
    \midrule
    Uniform & 0.50 & 8.75 & 10,937.50
      & .1211 \\
    Uniform & 0.10 & 8.75 & 19,687.50
      & 0.12 \\
    \textbf{Ours} & \textbf{0.10} & \textbf{11.67} & \textbf{26,250.00}
      & \textbf{.1404} \\
    \bottomrule
  \end{tabular}
\end{minipage}
\hfill
\begin{minipage}[c]{0.3\textwidth}
    \centering
    \includegraphics[width=\linewidth]{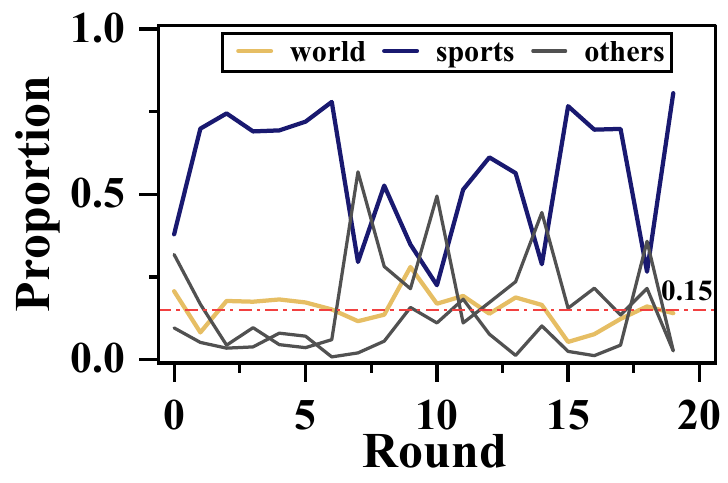}
    \caption{Class proportions under designed reference.}
    \label{SubFig:text_pref}
\end{minipage}
\end{figure}

\section{Discussion and Extensions}
\label{sec:discussion}
Our framework focuses on a basic co-evolutionary setting in which a single aggregate preference state interacts with a self-consuming generative model through preference-guided curation. Building on this
framework, several natural extensions arise by enriching the preference dynamics and population structure while retaining the same closed-loop interaction between user feedback and generative retraining. We discuss three such directions below.

\subsection{Nonlinear Preference Responses}
\label{app:nonlinear_preference_dynamics}
The preference update in Eq.~\ref{eq:pref-update} moves the current preference toward the mean feature vector of the updated model through linear averaging. This form is related to first-order models of preference and opinion formation. More generally, users may respond nonlinearly to the discrepancy between their current
preference and the content to which they are exposed. We consider such responses while retaining the reward
$r_w(x)=\langle w,\varphi(x)\rangle$, the curation operator, and the model retraining step in Eq.~\ref{eq:data-update}.

\textit{A gap-response formulation.}
Let $\Delta_t:=\bar\varphi(p_{t+1})-w_t$ denote the mean feature vector after retraining and its discrepancy from the current preference. For a response function $F:\mathbb{R}^d\to\mathbb{R}^d$, define
\begin{equation}
    w_{t+1}
    =G_F(w_t,\bar\varphi(p_{t+1}))
    :=\operatorname{Proj}_W\!
      \left[w_t+\beta F(\bar\varphi(p_{t+1})-w_t)\right].
    \label{eq:app_gap_response}
\end{equation}
The linear update is recovered by $F(\Delta)=\Delta$. We additionally consider a saturating response and a state-dependent scalar gate. 

\textit{Saturating response.} For $F(\Delta)=\tanh(g\Delta)$ with $g>0$ and $\tanh$ applied componentwise,
the update is
\begin{equation}
    w_{t+1}
    =\operatorname{Proj}_W\!\left[
        w_t+\beta\tanh\!\left(
            g\bigl(\bar\varphi(p_{t+1})-w_t\bigr)
        \right)
    \right]
    \label{eq:app_tanh_response}
\end{equation}
This response is smooth and saturates as the feature--preference gap grows, limiting the magnitude of the response in each coordinate.

\textit{Scalar-gated response.}
Let $\gamma_t:=\gamma\bigl(w_t,\bar\varphi(p_{t+1})\bigr)\in(0,1]$ be a state-dependent response weight. Using
$F_t(\Delta_t)=\gamma_t\Delta_t$ in the same update template gives
\begin{equation}
    w_{t+1}
    =\operatorname{Proj}_W\!\left[
        (1-\beta\gamma_t)w_t
        +\beta\gamma_t\bar\varphi(p_{t+1})
    \right].
    \label{eq:app_gated_response}
\end{equation}
The gate acts as a similarity or confidence weight. It changes the adaptation rate according to the current interaction between preferences and model outputs. In the discussion below, $G_F$ denotes the resulting preference-update map, including this state-dependent gated form.

\textbf{Components of the analysis that remain unchanged.} Changing the preference response leaves the model-update branch unchanged. Consequently, the curation properties in Prop.~\ref{prop:pure-basic}, the retraining sensitivity constants $L_p$ and $L_w$ in Lem~\ref{lem:cont-sensitivity}, and the equal-reward characterization of fixed points of the retraining step at a given preference in Thm.~\ref{thm:pure-structure} continue to apply. These components depend on the reward and feature representation, rather than
on the rule used to update preferences.

The singleton equilibria also remain fixed whenever the response vanishes at a zero gap. Indeed, at $p^\star=\delta_{x_i}$, $w^\star=\varphi(x_i)$, we have $\bar\varphi(p^\star)-w^\star=0$. Thus $F(0)=0$ implies that the preference update leaves $w^\star$ unchanged, while the model update is already fixed in the purely synthetic regime. All three response families above vanish at a zero gap.

\textbf{Response-dependent sensitivity and coupled dynamics.}
The response function enters the joint analysis through the preference coordinate. Suppose that the resulting map satisfies
$$
    \bigl\|G_F(w,m)-G_F(v,\widetilde m)\bigr\|
    \leq a_F\|w-v\|+b_F\|m-\widetilde m\|
$$
for admissible preference vectors and mean feature vectors, with $a_F,b_F\geq0$. For two trajectories $(p_t,w_t)$ and $(q_t,v_t)$, the preference-coordinate estimate becomes
\begin{equation*}
\begin{aligned}
    \|w_{t+1}-v_{t+1}\|
    &\leq a_F\|w_t-v_t\|
       +b_F\bigl\|\bar\varphi(p_{t+1})
                    -\bar\varphi(q_{t+1})\bigr\|
\end{aligned}
\end{equation*}
For the linear response, nonexpansiveness of projection gives $(a_F,b_F)=(1-\beta,\beta)$, recovering the estimate used in the proof of Thm.~3.9. For another response, the same proof strategy replaces $(1-\beta,\beta)$ with $(a_F,b_F)$ and combines the corresponding preference-coordinate estimate with the unchanged retraining bounds. The admissible weight in the product metric, the contraction condition, and the convergence rate consequently depend on the response. Quantitative lock-in neighborhoods likewise require response-specific control of the preference update. This formulation identifies the part of the analysis that must be adapted, while preserving the curation and retraining components. Deriving explicit bounds for particular nonlinear responses is a direction for further analysis.

\subsection{Stochastic Preference Adaptation}
Another natural extension is to model preference changes at the individual level as stochastic rather than deterministic. For example, suppose a user updates toward the feature vector of a randomly observed
model output, $X_{t+1}\sim p_{t+1}$, $w_{t+1}= \operatorname{Proj}_W\left[(1-\beta)w_t+\beta\varphi(X_{t+1})\right]$. Under the assumptions $\|w_t\|\leq 1$, $\|\varphi(x)\|=1$, and
$\beta\in[0,1]$, the point before projection already lies in $W$.
Therefore,
\begin{equation*}
    \mathbb{E}
    \left[
        w_{t+1}
        \mid
        p_{t+1},w_t
    \right]
    =
    (1-\beta)w_t
    +
    \beta\bar{\varphi}(p_{t+1}),
\end{equation*}
which recovers Eq.~\ref{eq:pref-update} as the conditional-mean dynamics. This provides a mean-field interpretation of the deterministic preference update studied in the paper.

The stochastic formulation raises additional questions that do not arise at the mean level. In particular, one may ask whether stochastic trajectories concentrate around the deterministic dynamics, how random
exposure affects the probability of entering different lock-in basins under pure self-consumption, and whether reference mixing induces stochastic stability or a concentrated stationary distribution around
the deterministic equilibrium. More general noise models, including exogenous perturbations to the preference state, would further require quantifying how random fluctuations propagate through the coupled model--preference feedback loop.

\subsection{Heterogeneous User Populations} The current framework represents the population by a single aggregate
preference vector. A natural extension is to consider $G$ user groups with group-specific preferences
$\{w_{g,t}\}_{g=1}^G$ and population weights $\{\alpha_g\}_{g=1}^G$, where $\alpha_g\geq 0$ and $\sum_{g=1}^G\alpha_g=1$. One possible model replaces the single curated distribution by a mixture
of group-specific curation responses,
\begin{equation*}
    p_{t+1}(x)
    =
    (1-\eta)p_{\mathrm{ref}}(x)
    +
    \eta p_t(x)
    \sum_{g=1}^G
    \alpha_g
    H^K_{p_t,w_{g,t}}(x),
\end{equation*}
while each group updates according to
\begin{equation*}
    w_{g,t+1}
    =
    \operatorname{Proj}_W
    \left[
        (1-\beta_g)w_{g,t}
        +
        \beta_g\bar{\varphi}(p_{t+1})
    \right].
    \label{eq:discussion_heterogeneous_pref}
\end{equation*}

This extension introduces qualitatively new interactions because all groups affect, and are simultaneously affected by, the same evolving generative distribution. It therefore raises questions about whether group preferences converge or remain separated, how the feedback of one group indirectly changes the preferences of others through the shared model, and how reference-data design should be adapted when different groups impose distinct preservation objectives. Characterizing the equilibria and stability of such multi-group dynamics, as well as designing reference distributions that balance group-specific objectives, is a promising direction for future work.

\section{Broader impacts}
This work studies feedback loops in self-consuming generative systems, where model outputs are curated by users and reused for future training, while user preferences also evolve over time. This work can help diagnose and mitigate long-term instability, preference drift, and diversity collapse by our theoretical and empirical tools. However, the same steering mechanism is value-neutral with respect to the choice of preserved attributes, and could in principle be used to filter out specific content. Thus, transparent selection of preserved attributes is important in any deployment.


\end{document}